%% file: main.tex
\documentclass[10pt]{article} 

\usepackage{etoolbox}
\newcommand{\arxiv}[1]{\iftoggle{iclr}{}{#1}}
\newcommand{\iclr}[1]{\iftoggle{iclr}{#1}{}}
\newtoggle{iclr}
\global\toggletrue{iclr}
\global\togglefalse{iclr}

\iclr{
\PassOptionsToPackage{dvipsnames}{xcolor}
\usepackage{icml2026}
}

\newcommand{\loose}{\looseness=-1}

\usepackage[utf8]{inputenc} 
\usepackage[T1]{fontenc}    
\usepackage{url}            
\usepackage{booktabs}       
\usepackage{amsfonts}       
\usepackage{nicefrac}       
\usepackage{microtype}      
 
\usepackage{tocloft}            

\usepackage{enumitem}
\iclr{\setlist[itemize]{noitemsep, topsep=0pt}}
\iclr{\setlist[enumerate]{noitemsep, topsep=0pt}}
\usepackage{breakcites}

\newtoggle{draft}
\togglefalse{draft}

\usepackage{wrapfig}

\usepackage{mathrsfs}

\usepackage{algorithm}
\usepackage{verbatim}
\usepackage[noend]{algpseudocode}

\usepackage{multicol}
\usepackage{siunitx}
\usepackage{colortbl}
\usepackage{xcolor}
\definecolor{hl}{RGB}{245,245,245}
\newcolumntype{H}{>{\columncolor{hl}}} 

\definecolor{sd}{RGB}{232, 233, 251}
\definecolor{fm}{RGB}{252, 232, 216}

\usepackage{setspace}

\usepackage{transparent}

\usepackage{inconsolata}
\usepackage[scaled=.90]{helvet}
\usepackage{xspace}
\usepackage{tcolorbox}
\tcbuselibrary{breakable}

\usepackage{pifont}
\input{arxiv_style}
\input{macros}

\iclr{%
\hypersetup{%
  colorlinks=true,%
  citecolor=[RGB]{50,100,170},%
  linkcolor=[RGB]{50,100,170},%
  urlcolor=[RGB]{255,102,178}%
}%
}

\iclr{
\usepackage{hyperref}
\newcommand{\alghyperref}[1]{\hyperref[#1]{Alg.~\ref*{#1}}}
}

\usepackage{color-edits}
\addauthor{ys}{MidnightBlue}

\arxiv{
\usepackage[final]{showlabels}

}

\makeatletter
\let\OldStatex\Statex
\renewcommand{\Statex}[1][3]{%
  \setlength\@tempdima{\algorithmicindent}%
  \OldStatex\hskip\dimexpr#1\@tempdima\relax}
\makeatother

\usepackage{accents}
\usepackage{wrapfig}
\usepackage{tikz}
\usetikzlibrary{decorations.pathreplacing}
\usepackage{varwidth}

 \addtocontents{toc}{\protect\setcounter{tocdepth}{0}}

\let\oldparagraph\paragraph
\arxiv{\renewcommand{\paragraph}[1]{\oldparagraph{#1.}}}
\iclr{\renewcommand{\paragraph}[1]{\textbf{#1.}}}

\algrenewcommand\algorithmicrequire{\textbf{require}}

\usepackage{yfonts}

\iclr{
\icmltitlerunning{Beyond Scalar Rewards: Learning from Text Feedback in LLM Post-Training}
}

\arxiv{
    \title{Expanding the Capabilities of Reinforcement Learning via Text Feedback}
    \author{
      \textbf{Yuda Song}$^{*,1}$ \quad
      \textbf{Lili Chen}$^{*,1}$ \quad
      \textbf{Fahim Tajwar}$^1$ \quad
      \textbf{Rémi Munos}$^2$ \\
      \textbf{Deepak Pathak}$^1$ \quad
      \textbf{J. Andrew Bagnell}$^{1,3}$ \quad
      \textbf{Aarti Singh}$^1$ \quad
      \textbf{Andrea Zanette}$^1$ \\
      \vspace{1mm}
      $^1$Carnegie Mellon University \quad $^2$Inria \quad $^3$Aurora Innovation\\
      \vspace{1mm}
      \texttt{\{yudas,lilic\}@andrew.cmu.edu} \\
    }
}

\begin{document}

\arxiv{
\maketitle
\renewcommand{\thefootnote}{}
\footnotetext{$^*$Equal contribution.}
\renewcommand{\thefootnote}{\arabic{footnote}}
\thispagestyle{fancy}
\fancyhead{}
\lhead{\raisebox{-0.7cm}{\includegraphics[height=0.52cm]{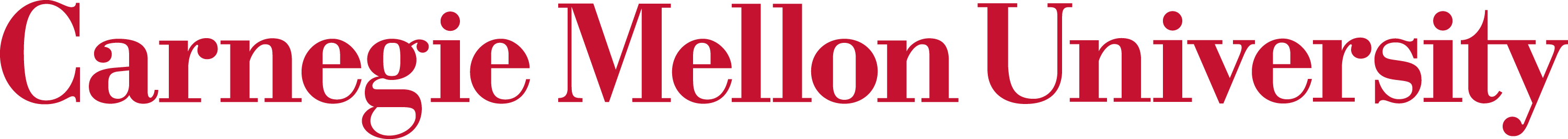}}}
\renewcommand{\headrulewidth}{0pt}
\setlength{\headheight}{18pt}
\setlength{\headsep}{3mm}
}

\iclr{
\twocolumn[
  \icmltitle{Beyond Scalar Rewards: Learning from Text Feedback in LLM Post-Training}

  \icmlsetsymbol{equal}{*}

  \begin{icmlauthorlist}
    \icmlauthor{Firstname1 Lastname1}{equal,yyy}
    \icmlauthor{Firstname2 Lastname2}{equal,yyy,comp}
    \icmlauthor{Firstname3 Lastname3}{comp}
    \icmlauthor{Firstname4 Lastname4}{sch}
    \icmlauthor{Firstname5 Lastname5}{yyy}
    \icmlauthor{Firstname6 Lastname6}{sch,yyy,comp}
    \icmlauthor{Firstname7 Lastname7}{comp}
    \icmlauthor{Firstname8 Lastname8}{sch}
    \icmlauthor{Firstname8 Lastname8}{yyy,comp}
  \end{icmlauthorlist}

  \icmlaffiliation{yyy}{Department of XXX, University of YYY, Location, Country}
  \icmlaffiliation{comp}{Company Name, Location, Country}
  \icmlaffiliation{sch}{School of ZZZ, Institute of WWW, Location, Country}

  \icmlcorrespondingauthor{Firstname1 Lastname1}{first1.last1@xxx.edu}
  \icmlcorrespondingauthor{Firstname2 Lastname2}{first2.last2@www.uk}

  \icmlkeywords{Machine Learning, ICML}
  
  \vskip 0.3in
]



\printAffiliationsAndNotice{}  

}


\begin{abstract}
The success of RL for LLM post-training stems from an unreasonably uninformative source: a single bit of information per rollout as binary reward or preference label. At the other extreme, distillation offers dense supervision but requires demonstrations, which are costly and difficult to scale. We study text feedback as an intermediate signal: richer than scalar rewards, yet cheaper than complete demonstrations. Textual feedback is a natural mode of human interaction and is already abundant in many real-world settings, where users, annotators, and automated judges routinely critique LLM outputs. Towards leveraging text feedback at scale, we formalize a multi-turn RL setup, RL from Text Feedback (\rltf), where text feedback is \emph{available during training but not at inference}. Therefore, models must learn to internalize the feedback in order to improve their test-time single-turn performance. To do this, we propose two methods: \textbf{Self Distillation} (\rltfsd), which trains the single-turn policy to match its own feedback-conditioned second-turn generations; and \textbf{Feedback Modeling} (\rltffm), which predicts the feedback as an auxiliary objective.  We provide theoretical analysis on both methods, and empirically evaluate on reasoning puzzles, competition math, and creative writing tasks. Our results show that both methods consistently outperform strong baselines across benchmarks, highlighting the potential of RL with an additional source of rich supervision at scale. Our website is available at: \texttt{\url{https://rl-textfeedback.github.io}}.
\end{abstract}


\arxiv{\begin{figure}[h]
    \centering
    \includegraphics[width=\linewidth]{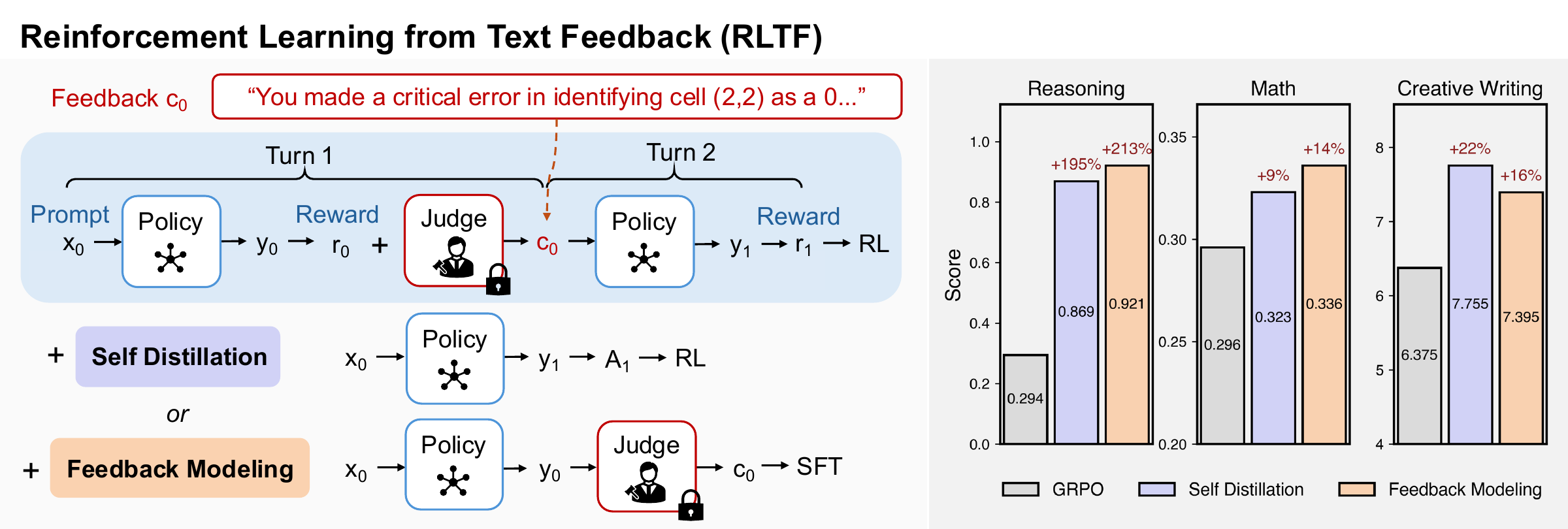}
    \caption{Left: overview of reinforcement learning from text feedback, which uses a feedback provider (judge) to generate critiques $c_0$. \rltfsd~trains the policy to match the feedback-conditioned second-turn generations $y_1$, and \rltffm~predicts the critiques $c_0$ as an auxiliary objective. Right: performance of our two methods, \selfdistillname~(\rltfsd) and \feedbackmodelingname~(\rltffm), on \textbf{reasoning puzzles}, \textbf{competition math}, and \textbf{creative writing} tasks. Both methods outperform standard single-turn GRPO. }
    \label{fig:teaser}
\end{figure}
}

\section{Introduction}
\label{sec:intro}
\input{sections/introduction}

\section{RL from Text Feedback}
\label{sec:setup}
\input{sections/setup}

\section{Self Distillation}
\label{sec:distillation}
\input{sections/distillation}

\section{Feedback Modeling}
\label{sec:feedback_modeling}
\input{sections/feedback_model}

\iclr{\input{sections/big_table}}

\arxiv{\input{sections/big_table}}
\section{Experiments}
\label{sec:experiments}
\input{sections/experiments}

\section{Related Work}
\label{sec:related_work}
\input{sections/related}

\input{sections/conclusion}

\section*{Acknowledgments}
The authors are grateful to Thinking Machines for their generous support of this research through the Tinker Research Grant. The authors are are grateful to Daman Arora, Clare Birch, Yoonho Lee, Bingbin Liu, Samuel Sokota, Wen Sun, Cyril Zhang and Yifei Zhou for their insightful discussion and support. AS and YS acknowledge
and thank the support of NSF AI Institute for Societal Decision Making AI-SDM grant IIS2229881 and Simons Foundation grant 888970. YS thanks the support of the Two Sigma Fellowship. LC is supported by the NDSEG Fellowship.

\newpage

\iclr{
\section*{Impact Statement}
Our work aims to make language model training more efficient by leveraging natural-language feedback. This could lower the cost of aligning models to user intent and reduce reliance on expensive expert demonstrations. However, the same techniques could in principle be used to train models toward undesirable behaviors if the feedback signal is adversarially constructed or systematically biased. We do not believe these risks are unique to our method—they apply broadly to any approach that learns from human or automated feedback—but they merit ongoing attention as text feedback becomes a more prominent training signal.
}

\iclr{\bibliographystyle{icml2026}}
\bibliography{refs}


\newpage
{\appendix
\onecolumn

\input{sections/appendix}

\newpage
\section{Theory Results from \pref{sec:feedback_modeling}}\label{app:fm_theory}
\input{sections/appendix_fm2.tex}
\newpage
\section{Experiment Details}
\label{app:exp}
\input{sections/appendix_exp}
}

\end{document}

%% file: arxiv_style.tex
\arxiv{
\usepackage[a4paper, margin=2.5cm]{geometry}
\PassOptionsToPackage{hypertexnames=false}{hyperref}  
\usepackage[dvipsnames]{xcolor}
\usepackage[colorlinks]{hyperref}
\hypersetup{
    citecolor=[RGB]{50,100,170},
    linkcolor=[RGB]{50,100,170},
    urlcolor=[RGB]{255,102,178}}
\usepackage{fancyhdr}
}

\arxiv{
\newcommand{\toptitlebar}{%
  {\color{black}\hrule height 1pt}%
  \vskip 0.25in%
}
}

\makeatletter
\arxiv{
\renewcommand{\maketitle}{%
  \begin{center}%
    \toptitlebar
    \vskip 0.1in%
    {\LARGE\bfseries \@title \par}%
    \vskip 0.3in%
    {\large \@author \par}%
  \end{center}%
  \par
  \vskip 0.3in%
}

\renewcommand\section{\@startsection {section}{1}{\z@}{-2.0ex plus
    -0.5ex minus -.2ex}{1.5ex plus 0.3ex minus .2ex}{\large\bfseries\raggedright}}
\renewcommand\subsection{\@startsection{subsection}{2}{\z@}{-1.8ex plus
    -0.5ex minus -.2ex}{0.8ex plus .2ex}{\normalsize\bfseries\raggedright}}
\renewcommand\subsubsection{\@startsection{subsubsection}{3}{\z@}{-1.5ex plus
   -0.5ex minus -.2ex}{0.5ex plus .2ex}{\normalsize\bfseries\raggedright}}

\renewenvironment{abstract}%
  {\centerline{\large\bfseries Abstract}%
   \begin{list}{}%
      {\setlength{\rightmargin}{0.6cm}%
       \setlength{\leftmargin}{0.6cm}}%
    \item[]\ignorespaces}%
  {\unskip\end{list}}

\setlength{\parindent}{1em}
}
\makeatother

\iclr{
\usepackage[colorlinks=true, linkcolor=blue!70!black, citecolor=blue!70!black,urlcolor=blue!70!black,breaklinks=true]{hyperref}
\PassOptionsToPackage{dvipsnames}{xcolor} 
}

\usepackage{microtype}
\usepackage{hhline}

\makeatletter
\newcommand{\neutralize}[1]{\expandafter\let\csname c@#1\endcsname\count@}
\makeatother

\usepackage{algorithm}

\arxiv{
\usepackage{natbib}
\bibliographystyle{plainnat}
\bibpunct{(}{)}{;}{a}{,}{,}
}

\usepackage{amsthm}
\usepackage{mathtools}
\usepackage{amsmath}
\usepackage{bbm}
\usepackage{amsfonts}
\usepackage{amssymb}

\usepackage{xpatch}

\usepackage{thmtools}
\usepackage{thm-restate}
\declaretheorem[name=Theorem,parent=section]{theorem}
\declaretheorem[name=Lemma,parent=section]{lemma}
\declaretheorem[name=Assumption, parent=section]{assumption}
\declaretheorem[name=Condition, parent=section]{condition}

\declaretheorem[name=Remark,style=definition, parent=section]{remark}
\declaretheorem[name=Proposition, parent=section]{proposition}

\makeatletter
  \renewenvironment{proof}[1][Proof]%
  {%
   \par\noindent{\bfseries\upshape {#1.}\ }%
  }%
  {\qed\newline}
  \makeatother

\theoremstyle{definition}  

\newtheorem{corollary}{Corollary}[section]

\theoremstyle{plain}
\newtheorem{definition}{Definition}[section]

\xpatchcmd{\proof}{\itshape}{\normalfont\proofnameformat}{}{}
\newcommand{\proofnameformat}{\bfseries}

\usepackage[nameinlink,capitalize]{cleveref}

\newcommand{\pref}[1]{\cref{#1}}

\renewcommand{\eqref}[1]{\texorpdfstring{\hyperref[#1]{(\ref*{#1})}}{(\ref*{#1})}}

\crefformat{equation}{#2Eq. (#1)#3}
\Crefformat{equation}{#2Eq. (#1)#3}

\Crefformat{figure}{#2Figure #1#3}
\Crefname{assumption}{Assumption}{Assumptions}
\Crefformat{assumption}{#2Assumption #1#3}
\Crefname{subsubsection}{Section}{Sections}
\crefformat{subsubsection}{#2Section #1#3}
\Crefformat{subsubsection}{#2Section #1#3}
\Crefname{alg}{Alg.}{Algs.}

\usepackage{crossreftools}
\usepackage{xparse}

\ExplSyntaxOn
\DeclareDocumentCommand{\XDeclarePairedDelimiter}{mm}
 {
  \__egreg_delimiter_clear_keys: 
  \keys_set:nn { egreg/delimiters } { #2 }
  \use:x 
   {
    \exp_not:n {\NewDocumentCommand{#1}{sO{}m} }
     {
      \exp_not:n { \IfBooleanTF{##1} }
       {
        \exp_not:N \egreg_paired_delimiter_expand:nnnn
         { \exp_not:V \l_egreg_delimiter_left_tl }
         { \exp_not:V \l_egreg_delimiter_right_tl }
         { \exp_not:n { ##3 } }
         { \exp_not:V \l_egreg_delimiter_subscript_tl }
       }
       {
        \exp_not:N \egreg_paired_delimiter_fixed:nnnnn 
         { \exp_not:n { ##2 } }
         { \exp_not:V \l_egreg_delimiter_left_tl }
         { \exp_not:V \l_egreg_delimiter_right_tl }
         { \exp_not:n { ##3 } }
         { \exp_not:V \l_egreg_delimiter_subscript_tl }
       }
     }
   }
 }

\keys_define:nn { egreg/delimiters }
 {
  left      .tl_set:N = \l_egreg_delimiter_left_tl,
  right     .tl_set:N = \l_egreg_delimiter_right_tl,
  subscript .tl_set:N = \l_egreg_delimiter_subscript_tl,
 }

\cs_new_protected:Npn \__egreg_delimiter_clear_keys:
 {
  \keys_set:nn { egreg/delimiters } { left=.,right=.,subscript={} }
 }

\cs_new_protected:Npn \egreg_paired_delimiter_expand:nnnn #1 #2 #3 #4
 {
  \mathopen{}
  \mathclose\c_group_begin_token
   \left#1
   #3
   \group_insert_after:N \c_group_end_token
   \right#2
   \tl_if_empty:nF {#4} { \c_math_subscript_token {#4} }
 }
\cs_new_protected:Npn \egreg_paired_delimiter_fixed:nnnnn #1 #2 #3 #4 #5
 {
  \mathopen{#1#2}#4\mathclose{#1#3}
  \tl_if_empty:nF {#5} { \c_math_subscript_token {#5} }
 }
\ExplSyntaxOff

\XDeclarePairedDelimiter{\supnorm}{
  left=\lVert,
  right=\rVert,
  subscript=\infty
  }

%% file: macros.tex
\renewcommand{\ast}{\star}

\newcommand{\feeder}{\cM}%
\newcommand{\selfloss}{\ell_{\mathsf{distill}}}%
\newcommand{\piref}{\pi_{\mathrm{ref}}}%
\newcommand{\stopgrad}{\mathrm{sg}}%
\newcommand{\clip}{\mathrm{clip}}%
\newcommand{\multiturn}{\mathsf{MultiTurn}}
\newcommand{\singleturn} {\mathsf{SingleTurn}}
\newcommand{\optimizer}{\mathrm{OPT}}

\newcommand{\feedbackmodeling}{\mathsf{FeeMol}}
\newcommand{\selfdistillname}{\textsf{Self Distillation}}
\newcommand{\feedbackmodelingname}{\textsf{Feedback Modeling}}
\newcommand{\grpo}{\textsf{GRPO}}
\newcommand{\feedbackdescent}{\textsf{Feedback Descent}}
\newcommand{\rejectionsampling}{\textsf{Rejection Sampling}}
\newcommand{\cispo}{\textsf{CISPO}} 
\newcommand{\ppo}{\textsf{PPO}}

\renewcommand{\epsilon}{\varepsilon}

\newcommand{\Var}{\mathrm{Var}}

\newcommand{\Cov}{\mathrm{Cov}}

\newcommand{\rltf}{\textsf{RLTF}}
\newcommand{\rltfsd}{\textsf{RLTF-SD}}
\newcommand{\rltffm}{\textsf{RLTF-FM}}
\newcommand{\awr}{\textsf{AWR}}

\DeclarePairedDelimiter{\brk}{[}{]}

\DeclarePairedDelimiter{\prn}{(}{)}

\let\Pr\undefined

\DeclareMathOperator{\En}{\mathbb{E}}

\DeclareMathOperator{\Pr}{Pr}

\def\ddefloop#1{\ifx\ddefloop#1\else\ddef{#1}\expandafter\ddefloop\fi}
\def\ddef#1{\expandafter\def\csname bb#1\endcsname{\ensuremath{\mathbb{#1}}}}
\ddefloop ABCDEFGHIJKLMNOPQRSTUVWXYZ\ddefloop
\def\ddefloop#1{\ifx\ddefloop#1\else\ddef{#1}\expandafter\ddefloop\fi}
\def\ddef#1{\expandafter\def\csname b#1\endcsname{\ensuremath{\mathbf{#1}}}}
\ddefloop ABCDEFGHIJKLMNOPQRSTUVWXYZ\ddefloop
\def\ddef#1{\expandafter\def\csname sf#1\endcsname{\ensuremath{\mathsf{#1}}}}
\ddefloop ABCDEFGHIJKLMNOPQRSTUVWXYZ\ddefloop
\def\ddef#1{\expandafter\def\csname c#1\endcsname{\ensuremath{\mathcal{#1}}}}
\ddefloop ABCDEFGHIJKLMNOPQRSTUVWXYZ\ddefloop
\def\ddef#1{\expandafter\def\csname h#1\endcsname{\ensuremath{\widehat{#1}}}}
\ddefloop ABCDEFGHIJKLMNOPQRSTUVWXYZ\ddefloop
\def\ddef#1{\expandafter\def\csname hc#1\endcsname{\ensuremath{\widehat{\mathcal{#1}}}}}
\ddefloop ABCDEFGHIJKLMNOPQRSTUVWXYZ\ddefloop
\def\ddef#1{\expandafter\def\csname t#1\endcsname{\ensuremath{\widetilde{#1}}}}
\ddefloop ABCDEFGHIJKLMNOPQRSTUVWXYZ\ddefloop
\def\ddef#1{\expandafter\def\csname tc#1\endcsname{\ensuremath{\widetilde{\mathcal{#1}}}}}
\ddefloop ABCDEFGHIJKLMNOPQRSTUVWXYZ\ddefloop
\def\ddefloop#1{\ifx\ddefloop#1\else\ddef{#1}\expandafter\ddefloop\fi}
\def\ddef#1{\expandafter\def\csname scr#1\endcsname{\ensuremath{\mathscr{#1}}}}
\ddefloop ABCDEFGHIJKLMNOPQRSTUVWXYZ\ddefloop

%% file: sections/introduction.tex
\iclr{\begin{figure*}[t]
    \centering
    \includegraphics[width=\linewidth]{figures/teaser.pdf}
    \caption{Left: performance of our two methods, \selfdistillname~and \feedbackmodelingname, on \textbf{reasoning puzzles}, \textbf{competition math}, and \textbf{creative writing} tasks. Both methods outperform standard single-turn GRPO on all three domains. Right: overview of reinforcement learning from text feedback, which uses a feedback provider (judge) to generate critiques $c_0$. \selfdistillname~trains the policy to match the feedback-conditioned second-turn generations $y_1$, and \feedbackmodelingname~predicts the critiques $c_0$ as an auxiliary objective.}
    \label{fig:teaser}
\end{figure*}
}
Reinforcement learning (RL) has become the foundational technique in modern LLM post-training, often delivering large
gains in instruction-following, helpfulness, and reasoning quality \citep{ouyang2022training,guo2025deepseek}. Yet the standard RL signal in these systems
is typically a sparse scalar reward (or one-bit preference label) per rollout. This creates a
fundamental tension: RL can be remarkably effective at scale, but the outcome of each individual trajectory contains very
little information about what went wrong and how to fix it, making learning extremely inefficient when the base model is unable to solve the task. \loose

At the other extreme, distillation \citep{buciluǎ2006model,ba2014deep,hinton2015distilling} and imitation learning provide information-dense supervision: a single
demonstration can convey a full solution, or token-level correction for on-policy imitation learning \citep{ross2011reduction,agarwal2024policy, lu2025onpolicydistillation}. However, distillation is not applicable to training frontier models, and collecting demonstrations from humans is not scalable. 

Natural-language text feedback is both a natural mode of human interaction and already abundant in practice. Users routinely critique chatbot outputs; tool-mediated workflows (e.g., code execution, unit tests, compiler errors, symbolic checkers) produce structured traces that describe failures; and more broadly, natural-language feedback is the primary medium through which humans teach and correct one another. Beyond its abundance, text feedback also occupies a favorable middle ground in information density, offering the best of both worlds: it is richer than a scalar reward, yet cheaper than a complete demonstration—feedback can localize an error, name a violated constraint, or suggest a fix.

A natural framework for incorporating feedback is multi-turn interaction: the model generates an attempt, feedback is appended to form an extended prompt, and the model revises. One might apply standard multi-turn RL to this setting, treating the conversation as a sequential decision-making problem and optimizing cumulative reward across turns \citep{zhou2024archer,shani2024multi}. However, this creates a fundamental asymmetry: during training, feedback can guide revision, but at test time, feedback is often unavailable—users want good outputs on the first try, not a back-and-forth dialogue. Without feedback, the second turn is not even well-defined.\footnote{This highlights a significant distinction between different modes of text feedback: while some text feedback such as code execution is available during test time (e.g., through tool use), most text feedback such as human feedback is unavailable during test time. In this work we focus on the latter and more challenging setting, and study how to generalize when the feedback is unavailable during test time.} With naive multi-turn RL, the policy learns to respond well---leveraging it for \emph{test-time refinement with feedback}---but this does not translate into better first-turn performance when feedback is unavailable. Empirically, we find that naive multi-turn RL improves second-turn performance but yields little gain on the first turn (cf.\ \pref{sec:experiments}). To \emph{internalize} feedback rather than merely condition on it, we need learning objectives that explicitly transform feedback into first-turn supervision. By internalizing text feedback during training, RL can succeed in settings where sparse scalar rewards alone provide insufficient learning signal—effectively expanding the frontier of what reinforcement learning can accomplish. \loose

To address this challenge, we study the setting of RL from Text Feedback (\rltf) and propose two methods to improve first-turn performance from training-time feedback: \selfdistillname~(\rltfsd), which treats feedback-conditioned second attempts as implicit demonstrations for the single-turn policy, and \feedbackmodelingname~(\rltffm), which learns from feedback itself by predicting critiques as an auxiliary objective. Notably, \rltffm~can elicit test-time refinement \emph{without} feedback, by rolling out the model's self-critiques during inference. Concretely, our contributions are as follows:
\arxiv{\paragraph{Contributions}}
\begin{itemize}[leftmargin=*]
\item A formalization of reinforcement learning from text feedback for improving single-turn test-time performance by using feedback during training.
    \item Two methods to incorporate text feedback into model capabilities: \rltfsd~and \rltffm. 
    \item A theoretical justification of our design choices for \rltfsd, and an extensive theoretical analysis of \rltffm~via the lens of representation learning.
    \item Empirical investigation with extensive comparisons and ablations on a suite of diverse benchmarks: Reasoning Gym~\citep{stojanovski2025reasoning}, MATH500~\citep{hendrycks2021measuring}, AIME24, LitBench~\citep{fein2025litbench} and WritingBench~\citep{wu2025writingbench}. Our experiments demonstrate that both of our proposed methods significantly improve single-turn test-time performance over strong baselines that use  rewards and text feedback. 
\end{itemize}

%% file: sections/setup.tex
Let $\cX$ be the prompt space and let $\cX_0 \subset \cX$ be the set of initial prompts that defines the task. Let $\mu \in \Delta(\cX)$ be the distribution of the initial prompts. We use $\cY$ to denote the output space, and an (LLM) policy $\pi$ maps prompts to distributions over outputs, i.e., $\pi: \cX \to \Delta(\cY)$. Similarly, let $\cC$ be the text feedback space and $\feeder$ a text feedback provider (human, interpreter, etc.). $\feeder$ samples text feedback given a prompt and output, i.e., $\feeder: \cX \times \cY \to \Delta(\cC)$. Finally, let $R: \cX_0 \times \cY \to [0,1]$ be the reward function (that is always evaluated on the original prompt) and $H$ be the horizon of the interaction.

\paragraph{Interaction protocol}
At the first timestep $h=0$, a prompt $x_0 \sim \mu(\cX_0)$ is sampled, the policy samples an output $y_0 \sim \pi(\cdot \mid x_0)$, receives reward $r_0 = R(x_0,y_0)$, and the feedback provider supplies $c_0 \sim \feeder(\cdot \mid x_0,y_0)$. For any timestep $h>0$, the prompt is updated as a function of previous information, \iclr{$x_h \;=\; f(x_{h-1}, y_{h-1}, c_{h-1}),$}
\arxiv{\[
x_h \;=\; f(x_{h-1}, y_{h-1}, c_{h-1}),
\]}
where $f$ may be as simple as concatenation (e.g., appending $y_{h-1}$ and $c_{h-1}$ to the conversation). The rest follows the first turn: the policy samples $y_h \sim \pi(\cdot \mid x_h)$, obtains reward $r_h = R(x_0,y_h)$, and receives feedback $c_h \sim \feeder(\cdot \mid x_h,y_h)$. In realistic deployments, $\feeder$ (or the environment) may terminate the episode early when the output reaches a desired quality (e.g., $r_h=1$); we will make this explicit whenever early termination is used. We use $\bbP^\pi$ and $\En^\pi$ to denote the law and expectation over the above interaction process induced by $\pi$, $\feeder$, and $f$.

\paragraph{Learning objective}
One natural objective is to maximize the expected sum of rewards over the $H$-turn interaction: \loose
\iclr{\vspace{-2mm}}
\begin{align}
J_{\multiturn}(\pi)=\En^\pi \brk*{\sum_{h=0}^{H-1} r_h}.
\label{eq:mt_objective}
\end{align}
This objective can be optimized using standard multi-turn RL algorithms by treating the interaction as an episodic MDP over the augmented prompts $x_h$ \citep{zhou2024archer,shani2024multi}. However, \pref{eq:mt_objective} alone does not isolate the role of text feedback: the policy could maximize reward while treating $c_h$ merely as additional context. Indeed, the objective remains well-defined even if feedback were replaced by uninformative tokens; the policy might simply learn to ignore it. We verify this empirically in \pref{sec:experiments}, where naive RL improves multi-turn performance but yields little gain in single-turn competence.

\paragraph{RL from text feedback (\rltf)} Our goal is to leverage text feedback as a \emph{learning} signal that improves the model's single-turn competence, not merely its ability to improve \emph{test-time refinement with feedback}. To formalize this, we define the single-turn objective: 
\begin{align}
J_{\singleturn}(\pi)=\En_{x_0\sim \mu}
\brk*{
    \En_{y \sim \pi(\cdot \mid x_0)} \brk{R(x_0,y)}
},
\label{eq:single_turn_objective}
\end{align}
which evaluates the policy on initial prompts $x_0$ without additional feedback at test time. In \rltf, while optimizing \pref{eq:mt_objective} is straightforward with multi-turn RL algorithms, the central research question is then:

\iclr{\emph{Given access to feedback-augmented trajectories $\tau$ during training, how can we design learning objectives and algorithms that improve $J_{\singleturn}(\pi)$?}
}
\arxiv{
\begin{quote}
\emph{Given access to feedback-augmented trajectories $\tau$ during training, how can we design learning objectives and algorithms that improve $J_{\singleturn}(\pi)$?}
\end{quote}
}

We address this question with two complementary methods, described in \pref{sec:distillation,sec:feedback_modeling}.

%% file: sections/distillation.tex
Text feedback is particularly valuable because it often turns an incorrect first attempt into a correct second attempt:
after receiving a critique, the same policy can revise its answer and improve. Our goal is to convert this
\emph{feedback-conditioned} competence into improvement on the \emph{single-turn} metric (\pref{eq:single_turn_objective}),
so that the policy performs well even when feedback is unavailable at test time. We propose to do this via
\selfdistillname: we treat the policy acting under the post-feedback prompt as an implicit teacher,
and distill it into the original one-shot policy.
In this sense, distillation “compiles away” the need for feedback by turning test-time refinement into
a training signal. This gives us higher-quality trajectories than sampling directly from $\pi(\cdot\mid x_0)$, reducing the exploration burden and turning sparse reward learning into learning from corrected solutions.

Concretely, (focusing on the two-turn case,) for each initial prompt $x_0$ we sample a first-turn output $y_0 \sim \pi(\cdot \mid x_0)$, obtain
feedback $c_0$, and form the feedback-augmented prompt
$x_1 = f(x_0, y_0, c_0)$. We then sample a revised output $y_1 \sim \pi(\cdot \mid x_1)$ and use $y_1$ to
update $\pi(\cdot \mid x_0)$ (not $\pi(\cdot \mid x_1)$), thereby directly targeting single-turn performance.
This leads to the following RL-style distillation objective that learns from the $y_1$ distribution: 
\begin{align}
    \selfloss(\pi)
    =
    \En_{x_1 \sim \bbP^\pi, y_1 \sim \pi(\cdot \mid x_1)} \brk*{ \frac{\pi(y_1 \mid x_0)}{\piref(y_1 \mid x_1)} A(x_0, y_1) }.
    \label{eq:selfloss_general}
\end{align}
Here $\piref$ denotes a reference distribution used for correction, and $A(x_0,y_1)$ is an estimator of the reward $R(x_0,y_1)$\footnote{We adopt the notation $A(\cdot)$ because the most common unbiased estimator of the reward is an unbiased estimator of the advantage function.},  and $\stopgrad$ denotes the stop-gradient operator. In the following we will omit the dependency on $x_1 \sim \bbP^\pi$ when it is clear from context. We introduce  \pref{eq:selfloss_general} to unify several natural objectives with different instantiations of $\piref$ and $A(\cdot)$. 

When we set $\piref(\cdot \mid x_1)=\pi(\cdot \mid x_1)$
(i.e., the data-collection distribution for $y_1$), \pref{eq:selfloss_general} recovers an off-policy objective with importance-sampling correction. Moreover, taking $A(y_1)=R(x_0,y_1)$ recovers the original single-turn objective in expectation:
\arxiv{
\begin{align}
    \En_{y_1 \sim \pi(\cdot \mid x_1)} \brk*{\frac{\pi(y_1 \mid x_0)}{{\pi(y_1 \mid x_1)}} \, R(x_0, y_1)}
    =
    \En_{y \sim \pi(\cdot \mid x_0)} \brk*{R(x_0, y)} = J_{\singleturn}(\pi).
    \label{eq:selfloss_grad_general}
\end{align}
}
\iclr{
\begin{align}
    \En_{y_1 \sim \pi(\cdot \mid x_1)}& \brk*{\frac{ \pi(y_1 \mid x_0)}{\stopgrad {\pi(y_1 \mid x_1)}} \, R(x_0, y_1)} = 
    J_{\singleturn}(\pi).
    \label{eq:selfloss_grad_general}
\end{align}
}

Taking gradient with respect to $\pi$ on \pref{eq:selfloss_grad_general} gives an unbiased gradient estimator for $\nabla J_{\singleturn}(\pi)$ (under the standard support condition, and note that this in general does not hold for other choices of $\piref$). Thus, we can obtain an (unbiased) gradient for the \emph{single-turn} objective $J_{\singleturn}(\pi)$ using samples from the \emph{second-turn} policy, effectively leveraging feedback-conditioned rollouts to improve first-attempt performance. Next, we describe some natural choices of $A(\cdot)$ and $\piref$ and their pitfalls, and our design choices that lead to the \selfdistillname~algorithm. 
All derivations and proofs from this section can be found in \pref{app:baseline_bias}.

\iclr{\paragraph{Baselines}}
\arxiv{\subsection{Baselines}}
For stability and efficiency of policy gradient algorithms, baseline design (i.e., control variates) is crucial to reduce the variance of the policy gradient estimator \citep{williams1992simple,schulman2015high,guo2025deepseek,zeng2025shrinking}. Thus it is important to derive the most effective baselines for the distillation objective (\pref{eq:selfloss_general}) as it inherits the form of the policy gradient objective. 

\paragraph{Second-turn baseline and gradient-signal collapse} A natural choice is to use the \grpo-style \citep{guo2025deepseek} group-mean baseline computed from second-turn rewards, as this is the standard baseline in multi-turn LLM RL \citep{tongyidr, rllm2025}. Concretely, for each prompt $x_0$, given $\{(y_0^i,y_1^i)\}_{i=1}^N$ where $y_0^i \sim \pi(\cdot \mid x_0)$ and $y_1^i \sim \pi(\cdot \mid x_1^i)$ with $x_1^i=f(x_0,y_0^i,c_0^i)$, the advantage estimator is defined as
\begin{align}
    A_i^{(1)} \;:=\; R(x_0, y_1^i) \;-\; \frac{1}{N}\sum_{j=1}^N R(x_0, y_1^j).
    \label{eq:baseline_secondturn_mean}
\end{align}
In the setting of self-distillation (\pref{eq:selfloss_general}), with importance-sampling correction $\piref(\cdot)=\pi(\cdot \mid x_1)$, this yields an unbiased gradient up to a constant multiplicative factor:
\arxiv{
\begin{align}
    \En\brk*{
        \frac{\pi(y_1^i \mid x_0)}{\pi(y_1^i \mid x_1^i)}
        A_i^{(1)}
        \nabla \log \pi(y_1^i \mid x_0)
    }
    =
    \brk*{1-\frac{1}{N}}\,\nabla J(\pi),
    \label{eq:secondturn_mean_shrinkage}
\end{align}
}
\iclr{
\begin{align*}
    \En\brk*{
        \frac{\pi(y_1^i \mid x_0)}{\pi(y_1^i \mid x_1^i)}
        A_i^{(1)} 
        \nabla \log \pi(y_1^i \mid x_0)
    }
    = \frac{N-1}{N}\nabla J(\pi).
\end{align*}
}
Note that this bias can be removed by using a leave-one-out baseline
$\frac{1}{N-1}\sum_{j\neq i} R(x_0,y_1^j)$ instead of the in-sample mean, but generally this does not matter in practice as the optimizer is agnostic to constant scaling of the gradient.

However, this baseline has a more serious issue: 
{\bf gradient-signal collapse under second-turn mean baselines}. 
A second-turn group-mean baseline centers rewards using the same second-turn samples:
this can be unbiased in expectation, but it exhibits a \emph{point-wise} degeneracy:
whenever the group rewards are (nearly) constant, the centered reward-estimations vanish and the
update is exactly (or approximately) zero. This failure mode is not rare in the multi-turn setting. Let $R(x_0,y_1)\in\{0,1\}$ with second-turn success
probability $p_1$ for a fixed prompt $x_0$. Then a second-turn mean baseline yields an exactly
zero update whenever the group is constant, which occurs with probability
$p_1^N + (1-p_1)^N$. In particular, when feedback makes the second-turn policy highly reliable
($p_1\to 1$), the probability of a non-zero update scales as
$1-p_1^N \approx N(1-p_1)$, so there is no learning signal for the first turn even though the teacher is consistently correct (at the second turn).

\paragraph{First-turn baseline}
Baselines computed based on first-turn quantities do not suffer from the above in-sample coupling with $y_1^i$, or the gradient collapse issue. Let
\arxiv{
\begin{align}\label{eq:baseline_firstturn_mean}
    b^{(0)} \;:=\; \frac{1}{N}\sum_{j=1}^N R(x_0, y_0^j),\qquad
    A_i^{(0)} \;:=\; R(x_0, y_1^i) - b^{(0)},
\end{align}
}
\iclr{
\begin{align}\label{eq:baseline_firstturn_mean}
    b^{(0)}:=\frac{1}{N}\sum_{j=1}^N R(x_0, y_0^j),\;\;
    A_i^{(0)}:=R(x_0, y_1^i) - b^{(0)},
\end{align}
}
we have (with $\piref(\cdot)=\pi(\cdot \mid x_1)$) that the baseline term $b^{(0)}$ is 0 in expectation,
and therefore
\begin{align*}
    \En\brk*{
        \frac{\pi(y_1^i \mid x_0)}{\pi(y_1^i \mid x_1^i)}
        \,A_i^{(0)}\,
        \nabla \log \pi(y_1^i \mid x_0)
    }
    =
    \nabla J_{\singleturn}(\pi).
\end{align*}

In addition, the first-turn baseline $b^{(0)}$ does not normalize by
post-feedback rewards. When $p_1$ is high but the first-turn policy is still imperfect
($b^{(0)}<1$), we have $A_i^{(0)} = R(x_0,y_1^i)-b^{(0)}\neq 0$,
so the update remains non-trivial and only vanishes when the student itself is already correct. Note that another natural variant that avoids this issue is taking $A_i = R(x_0,y_1^i) - R(x_0,y_0^i)$, which can be interpreted as the improvement from feedback for each specific trajectory, but potentially result in higher variance than \pref{eq:baseline_firstturn_mean}. We defer detailed discussions to \pref{app:baseline_unbiased_difference,app:alternative_baselines}.

\iclr{\paragraph{Bias-variance tradeoff in importance weighting}}
\arxiv{\subsection{Bias-variance tradeoff in importance weighting}}
Recall that an unbiased estimator of the single-turn policy gradient can be obtained by importance weighting
with $\piref = \pi(\cdot \mid x_1))$, but its variance is controlled by the second moment, an expectation over the directly policy ratios between $\pi(\cdot \mid x_1)$ and $\pi(\cdot \mid x_0)$, instead of the logarithmic of the ratio. 
Therefore, even moderate distribution shift between first- and second-turn policies can induce heavy-tailed weights.
For LLM outputs $y$ (long token sequences), this shift compounds across tokens, making the whole gradient estimation
high-variance, which hurts the stability and performance of the training. We provide a rigorous statement of the above intuition and empirical validation in \pref{app:variance_iw}.

This motivates alternatives to full importance sampling. A standard way is to clip the importance ratio, yielding a \cispo-style objective \citep{chen2025minimax}: \iclr{$$\En_{y_1\sim \pi(\cdot\mid x_1)}
    \brk*{
        \clip\!\brk*{\frac{\pi(y_1\mid x_0)}{\piref(y_1)},\,1-\epsilon,\,1+\epsilon} A(x_0,y_1)
    }.$$
}
\arxiv{
\begin{equation}
    \selfloss^{\mathsf{clip}}(\pi,\epsilon)
    \;:=\;
    \En_{y_1\sim \pi(\cdot\mid x_1)}
    \brk*{
        \clip\!\brk*{\frac{\pi(y_1\mid x_0)}{\piref(y_1 \mid x_1)},\,1-\epsilon,\,1+\epsilon}\; A(y_1)
    }.
    \label{eq:cispo_obj}
\end{equation}
}
Clipping controls variance by truncating rare but high-magnitude ratios, at the cost of a controlled bias that is governed by $\epsilon$, an additional hyperparameter to tune. 

The other extreme is to discard importance weighting entirely, which introduces higher bias but gives the low-variance objective (here we directly provide the gradient):
\begin{align*}
    \nabla \selfloss^{\mathsf{awr}}(\pi)=
    \En_{y_1 \sim \pi(\cdot \mid x_1)}
    \brk*{
        A(y_1)\, \nabla \log \pi(y_1 \mid x_0)
    },
\end{align*}
which resembles advantage-weighted regression (\awr) \citep{peng2019advantage,nair2020awac} applied to distillation from feedback-conditioned rollouts. 

In the experiment we find that variance dominates bias: setting $\piref(\cdot \mid x_1)=\pi(\cdot\mid x_0)$ (note that this is a special case because $x_0$ is part of $x_1$), which removes the importance weighting, consistently improves stability and final performance compared to using $\piref(\cdot \mid x_1)=\pi(\cdot\mid x_1)$ with full importance correction or the clipped objective. We provide ablation over all variants in \pref{sec:ablation_self_distill}. 
We therefore view mild bias as benign relative to the variance induced by importance sampling in distillation.

\begin{remark}
For clarity, the analysis uses the sequence-level importance weight
\(
W(y_1):=\frac{\pi(y_1\mid x_0)}{\pi(y_1\mid x_1)}.
\)
For an autoregressive policy this factorizes exactly as
\[
W(y_1)=\prod_{t=1}^{T} r_t,
\qquad
r_t:=\frac{\pi(y_{1,t}\mid x_0,y_{1,<t})}{\pi(y_{1,t}\mid x_1,y_{1,<t})}.
\]
Thus, token-level IS simply computes \(W\) via per-token ratios and is not an approximation.
In contrast, \cispo-style or \ppo-style token-level objectives can be viewed as a first-order approximation in the per-token log-ratios
\(
\Delta_t:=\log r_t
\)
via
\(
W=\exp(\sum_t \Delta_t)\approx 1+\sum_t \Delta_t
\)
when \(\Delta_t\) are small. All experiments in this paper use the token-level surrogate following the standard practice \citep{sheng2024hybridflow}.
\end{remark}

\paragraph{Rejection sampling} 
\iclr{Our framework also recovers the commonly used \rejectionsampling~(or Supervised Finetuning (SFT)) \citep{scheurer2023training} for distillation, which we prove in \pref{app:rejection_sampling}. While simple, the negative samples (i.e., those with $R(x_0,y_1)=0$) do not contribute to the learning due to the lack of a baseline. In practice, we indeed observe that \rejectionsampling~underperforms methods with baselines, as we will show in \pref{sec:ablation_self_distill}.}
\arxiv{Our framework also recovers the commonly used \rejectionsampling~(or Supervised Finetuning (SFT)) \citep{scheurer2023training} for distillation: for each $\{x_0, y_0, x_1, y_1\}$, perform a likelihood maximization of $\log(\pi(y_1 \mid x_0))$ if $y_1$ is a better response than $y_0$. This procedure can be recovered by \pref{eq:selfloss_general} by taking a binary advantage $A(x_0,y_1) = R(x_0,y_1)\in\{0,1\},$ and choosing the reference as the same single-turn policy with stop-gradient, i.e., 
$\piref(\cdot\mid x_1) = \stopgrad\!\brk{\pi(\cdot\mid x_0)}.$
With this choice, we similarly remove the importance weighting. In particular, the induced update direction becomes
\begin{align*}
    \nabla \selfloss^{\mathsf{SFT}}(\pi)
    =
    \En_{y_1 \sim \pi(\cdot \mid x_1)}
    \brk*{
        R(x_0,y_1)\, \nabla \log \pi(y_1\mid x_0)
    }.
    \label{eq:rs_sft_special_case}
\end{align*}
Therefore, in our setting \rejectionsampling~is precisely the special case that distills only the correct second-round generations, without any importance-weight variance from off-policy correction.

While simple, the negative samples (i.e., those with $R(x_0,y_1)=0$) do not contribute to the learning due to the lack of a baseline. In practice, we indeed observe that \rejectionsampling~underperforms methods with baselines, as we will show in \pref{sec:ablation_self_distill}.}

\iclr{\paragraph{Final algorithm}}
\arxiv{\subsection{Final algorithm}}
In conclusion, we adopt \textbf{(1)} $\piref(\cdot \mid x_1)=\pi(\cdot\mid x_0)$ for \awr-style RL distillation and \textbf{(2)} using first-turn mean reward as baseline (\pref{eq:baseline_firstturn_mean}). We summarize the full \rltfsd~algorithm in \pref{alg:distill_nois_firstturn_baseline}.

%% file: sections/feedback_model.tex
Beyond using feedback-conditioned rollouts to improve the policy, we can also treat the critique itself as a
supervision signal and explicitly model the feedback provider.  This is appealing because feedback $c_h$ is observed at every turn and is far richer than a scalar reward: it pinpoints the mistakes, providing dense token-level gradients on failure rollouts. To leverage the dense feedback signal, we propose \feedbackmodelingname: \emph{training the policy to predict the feedback itself.} 

\paragraph{Feedback-prediction loss}
Recall that at each timestep \(h\) the feedback provider samples \(c_h \sim \cM(\cdot \mid x_h, y_h)\).
We define a feedback-prediction distribution:
\[
p_{\pi}(c \mid x, y) := \pi \prn*{c \mid f_{\feedbackmodeling}(x, y)},
\]
where $f_{\feedbackmodeling}$ is a prompt template that elicits critique-style feedback given $(x,y)$; see examples in \pref{app:exp_prompts}. Using tuples $(x_h, y_h, c_h)$ collected from interaction trajectories, we optimize the cross-entropy objective
\begin{equation}
\ell_{\feedbackmodeling}(\pi):=
\En_{\pi} \brk*{
\sum_{h=0}^{H-1} - \log p_{\pi} \prn*{c_h \mid x_h,y_h}}.
\label{eq:fm_loss}
\end{equation}
Note that we treat \(y_h\) as constants (i.e., no gradient) so that
\(\ell_{\feedbackmodeling}\) is pure supervised learning on the feedback tokens, rather than introducing additional
credit assignment through the sampling process. 

\paragraph{Joint objective with RL}
Similar to the self distillation loss, feedback modeling is used as an auxiliary loss in addition to the regular RL objective:
\begin{equation}
\max_{\pi}\;
J_{\multiturn}(\pi)-\lambda_{\feedbackmodeling}\, \ell_{\feedbackmodeling}(\pi),
\label{eq:joint_rl_fm}
\end{equation}
where \(J_{\multiturn}(\pi)\) is the multi-turn RL objective (\pref{eq:mt_objective}) and \(\lambda_{\feedbackmodeling}\ge 0\) controls the strength of the auxiliary
feedback loss.

\iclr{\paragraph{Theoretical analysis}}
\arxiv{\subsection{Theoretical analysis}}
\rltffm~trains the model to \emph{predict feedback}, not to explicitly output a corrected answer, so its benefit is not obvious a priori.
\pref{app:fm_theory} provides an early-stage analysis through the lens of \emph{representation learning} in a frozen-rollout regime
(a batch RL setting, where data are effectively drawn from a fixed distribution $d_0$) with log-linear (i.e., softmax) policy with learned representation.
The central question is: \emph{what representation directions are statistically identifiable from the available training signal under base rollouts?} With the batch setting and log-linear policy, our setting is rather idealized, but the analysis yields useful insights into the benefit of \rltffm, which we summarize below.

\paragraph{Reward-only signal can be both \emph{rare} and \emph{geometrically concentrated}}
With sparse rewards, especially early in training when the base policy performs poorly, only a small fraction of rollouts succeed.
Let $\varepsilon_0$ denote this base pass rate. Then the per-sample policy-gradient estimator has low signal-to-noise ratio, and reliably
estimating even a single gradient component can require on the order of $1/\varepsilon_0$ rollouts.
Beyond this finite-sample bottleneck, we identify a population-level geometric limitation:
even conditioning on success, the reward-only learning signal can concentrate on a small set of representation directions.
Equivalently, there can exist a large subspace of directions that are \emph{weakly identified} by reward-only updates under base-policy sampling.
In the frozen-rollout regime, we formalize this by defining a \emph{low-signal subspace} $S_{\mathrm{low}}$ 
from success-conditioned score statistics at initialization,
and we show that progress in $S_{\mathrm{low}}$ along the optimization trajectory is controlled by the cumulative magnitude of the success-conditioned score in that subspace.
Intuitively, reward-only RL can therefore behave like an effectively low-rank update under base rollouts, making some task-relevant representation directions difficult to learn
without auxiliary supervision.

\paragraph{Feedback modeling supplies a better-conditioned representation signal}
In contrast, natural-language feedback is dense and structured.
We show that under the same batch regime, \rltffm~induces nontrivial movement of the shared representation under a \emph{coverage} assumtion, which is analogous to standard coverage in linear/low-rank MDPs \citep{jin2021pessimism,uehara2021pessimistic} or LLM preference learining \citep{chang2024dataset,song2024importance}.
This matches the high-level intuition that \emph{reward-only RL may provide a narrow (often nearly rank-1) representation signal under base rollouts,
whereas feedback modeling yields a better-conditioned information source that can ``fill in'' missing representation directions.} \loose

We summarize the main results informally below; formal statements are in \pref{app:fm_theory}.

\begin{proposition}[Reward-only bottlenecks under base rollouts (informal)]\label{prop:informal_reward_only}
Under the batch regime,
and rewards are sparse with base success rate $\varepsilon_0$.
Then reward-only learning faces two bottlenecks:

\emph{(i) Rare-event estimation.}
Because reward is supported on a low-probability success event, the directional policy-gradient estimator has low signal-to-noise ratio:
for any direction, $\mathrm{SNR}$ scales at most as $\sqrt{\varepsilon_0}$.
Consequently, reliably estimating even a single gradient component requires on the order of $1/\varepsilon_0$ rollouts.

\emph{(ii) Weak identifiability of representation directions.}
Even conditioning on success, the reward-weighted gradient signal concentrates
on a small set of directions in representation space.
Equivalently, there can exist a nontrivial \emph{low-signal subspace} of directions that are weakly identified by reward-only updates under base rollouts.
In the frozen-rollout regime, 
reward-only updates can have negligible projection onto these directions over many steps.
\end{proposition}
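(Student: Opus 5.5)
I will formalize the setting as in \pref{app:fm_theory}: a log-linear policy $\pi_{\theta,\Phi}(y\mid x)\propto \exp(\theta^\top \Phi(x,y))$ with a learned representation $\Phi$. Rollouts are frozen, drawn from the base distribution $d_0$, and the reward is $R\in\{0,1\}$ with $\Pr_{d_0}(R=1)=\varepsilon_0$. The per-sample reward-only policy-gradient estimator is $g = R\, s$, where $s=\nabla\log\pi(y\mid x)$ is the score (taken either with respect to $\theta$ or to the representation parameters). I will prove the two parts separately, since (i) is a finite-sample statement about $g$ and (ii) is a population statement about the success-conditioned score.

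\emph{Part (i).} Fix a unit direction $u$ and write $Z=u^\top g = R\,(u^\top s)$. Because $R\in\{0,1\}$, we have $\En[Z]=\varepsilon_0\,\En[u^\top s\mid R=1]$ and $\En[Z^2]=\varepsilon_0\,\En[(u^\top s)^2\mid R=1]$. By Cauchy--Schwarz, $\En[u^\top s\mid R=1]^2\le \En[(u^\top s)^2\mid R=1]$. The directional signal-to-noise ratio therefore satisfies
\[
\mathrm{SNR}=\frac{|\En Z|}{\sqrt{\Var Z}}\le \frac{|\En Z|}{\sqrt{\En Z^2-(\En Z)^2}}\le \sqrt{\frac{\varepsilon_0}{1-\varepsilon_0}},
\]
where the last step uses $(\En Z)^2\le \varepsilon_0 \En Z^2$. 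This is $O(\sqrt{\varepsilon_0})$. The sample-complexity claim then follows from a standard lower bound. One route is Chebyshev plus a matching two-point argument: with $n$ i.i.d.\ rollouts, the empirical mean has relative error of order $1/(\sqrt{n}\,\mathrm{SNR})$, so constant relative accuracy needs $n\gtrsim 1/\varepsilon_0$. A simpler route is that with $n\ll 1/\varepsilon_0$, with constant probability no success is observed at all, so the estimate is identically zero. I will present the second argument, since it is distribution-free.

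\emph{Part (ii).} I will define the success-conditioned second-moment matrix $\Sigma_+ := \En_{d_0}[s s^\top\mid R=1]$ at initialization, and set $S_{\mathrm{low}}$ to be the span of its eigenvectors with eigenvalue at most $\delta$. The key structural observation is that the population reward gradient equals $\varepsilon_0\,\En[s\mid R=1]$, which is a single vector. The population update is therefore rank one per step, and its projection onto $S_{\mathrm{low}}$ is bounded, via Cauchy--Schwarz, by $\varepsilon_0\sqrt{\delta}$. For the trajectory statement, in the frozen-rollout regime each iterate's update is $\eta\,\varepsilon_0\,\En_{d_0}[w_t\, s_t\mid R=1]$, where $w_t$ is the importance weight or the identity, depending on the estimator. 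Summing over steps gives $\|P_{S_{\mathrm{low}}}(\Phi_T-\Phi_0)\|\le \eta\sum_t \|P_{S_{\mathrm{low}}}\En[s_t\mid R=1]\|$, which is the cumulative-magnitude bound stated in the appendix.

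To finish, I will exhibit an explicit instance in which $\dim S_{\mathrm{low}}$ is large. The construction uses features in which success events differ from failures along only one coordinate, while the task-relevant structure lives in the remaining coordinates. This shows that the bound has content, justifying the ``can exist'' phrasing.

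The main obstacle is the trajectory-level control in (ii). Once the representation moves, the score $s_t$ changes, and $S_{\mathrm{low}}$ is defined only at initialization. I would first try a smoothness or Lipschitz assumption on the score in the parameters, giving $\|P_{S_{\mathrm{low}}} s_t - P_{S_{\mathrm{low}}} s_0\|\le L\|\Phi_t-\Phi_0\|$. I would then close the resulting bound with a Grönwall-type argument, or simply state the bound in terms of the cumulative projected score, as the informal statement does.
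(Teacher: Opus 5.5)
Your proof is correct. For the SNR bound in (i) and all of (ii) it is essentially the paper's argument. In (i), condition on success so that $\mu_u=\varepsilon_0\En[Z_u\mid S=1]$ and $M_{2,u}=\varepsilon_0\En[Z_u^2\mid S=1]$, then apply Cauchy--Schwarz (\pref{lem:rare_event_snr}). In (ii), bound $\|\Pi g^w_{\mathrm{RL}}(w)\|_2$ by Cauchy--Schwarz under success conditioning (\pref{lem:proj_grad_bound_fixed_d0}), then telescope with the triangle inequality (\pref{thm:traj_bound_fixed_d0}).

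The differences there are cosmetic:
\begin{itemize}
\item The paper normalizes SNR by the second moment $M_{2,u}$ rather than the variance. That gives exactly $\sqrt{\varepsilon_0}$ instead of your $\sqrt{\varepsilon_0/(1-\varepsilon_0)}$.
\item Your per-step quantity $\|\Pi\,\En[s_t\mid R=1]\|$ is exact. The paper's $\sqrt{\|\Pi\Sigma^w_{\text{succ}}(w_t)\Pi\|_{\mathrm{op}}}$ follows from it by one more Cauchy--Schwarz, which is what ties it to the eigenvalue threshold defining $S_{\mathrm{low}}$.
\end{itemize}

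The genuinely different step is the sample-complexity claim. The paper combines Bernstein's inequality (\pref{lem:snr_concentration}) with $\mathrm{SNR}(u)\le\sqrt{\varepsilon_0}$. Bernstein bounds deviations from above, so strictly it shows that order $\log(1/\delta)/\mathrm{SNR}(u)^2$ rollouts \emph{suffice}, not that order $1/\varepsilon_0$ are \emph{required}.

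Your zero-success argument supplies the necessity direction. If no success is observed, the estimate is identically $0$. So recovering $\mathrm{sign}(\mu_u)$ with probability $1-\delta$ forces $(1-\varepsilon_0)^n\le\delta$, i.e.\ $n\gtrsim\log(1/\delta)/\varepsilon_0$, log factor included. It needs no assumption on the scores beyond $\mu_u\neq 0$.

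What your route gives up is the link to $\mathrm{SNR}(u)$, which the paper uses as its unifying quantity. In directions where the success-conditioned ratio in \pref{lem:rare_event_snr} is of order one, the two arguments are complementary. Yours gives necessity and the paper's gives sufficiency, so together they pin the sample size at order $\log(1/\delta)/\varepsilon_0$.

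In (ii), neither the Lipschitz/Gr\"onwall step nor the explicit instance is needed for what the paper proves. \pref{thm:traj_bound_fixed_d0} is stated exactly in your fallback form: $\Pi$ fixed at initialization and score statistics evaluated along the trajectory. The ``can exist'' clause is left as a hedge. The paper builds no instance and instead uses \pref{cor:population_plateau_implication} to turn the bound into a plateau statement under $r_\Delta>0$. If you do build an instance, the thing to engineer is that the \emph{centered} scores of successful samples concentrate near a low-dimensional subspace. The centering is over all responses, including failures, so ``successes differ from failures in one coordinate'' does not by itself ensure this.

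Two slips to fix:
\begin{itemize}
\item Your summed bound has lost the factor $\varepsilon_0$. It should read $\eta\,\varepsilon_0\sum_t\|P_{S_{\mathrm{low}}}\En[s_t\mid R=1]\|$, and that factor is the point of (ii).
\item The importance-weight option is inconsistent with frozen rollouts. With nontrivial weights, the Cauchy--Schwarz step would have to carry their conditional second moment, so take them identically $1$, as in \pref{assump:frozen_d0}.
\end{itemize}
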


\begin{proposition}[Feedback modeling yields a well-conditioned representation signal (informal)]\label{prop:informal_fm_benefit}
In the same early-stage frozen-rollout regime, feedback modeling (\rltffm) provides an additional supervised learning signal on the shared representation.
Under mild conditions on the feedback coverage (\pref{assump:fm_coverage_clean}),
\rltffm~is informative in representation directions that are weakly identified by sparse reward under base rollouts.
As a result, \rltffm~can learn representation degrees of freedom that reward-only RL fails to identify early on.
\end{proposition}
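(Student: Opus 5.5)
The plan is to make the informal statement precise in the idealized model already used for Proposition~\ref{prop:informal_reward_only}, and then run a local, first- and second-order analysis of $\ell_{\feedbackmodeling}$ in the shared representation. I will use a log-linear policy $\pi_{\theta}(y\mid x)\propto \exp(\langle w, U\phi(x,y)\rangle)$ with a shared representation $U$. The feedback predictor uses the same $U$: $p_{\pi}(c\mid x,y)\propto\exp(\langle v, U\psi(x,y,c)\rangle)$.

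Rollouts $(x,y)\sim d_0$ are frozen at the base policy, and feedback is drawn as $c\sim\feeder(\cdot\mid x,y)$. The low-signal subspace $S_{\mathrm{low}}$ is defined, as in the reward-only analysis, by the success-conditioned score statistics at initialization. The coverage condition (\pref{assump:fm_coverage_clean}) will be used in the following form. Let $F_{\feedbackmodeling}$ be the Fisher information of the feedback log-likelihood with respect to $U$, computed under $d_0\otimes\feeder$. I assume that $P_{S_{\mathrm{low}}}F_{\feedbackmodeling}P_{S_{\mathrm{low}}}\succeq \kappa P_{S_{\mathrm{low}}}$ for some $\kappa>0$ that does not depend on $\varepsilon_0$.

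\textbf{Step 1 (gradient and curvature).} I compute $\nabla_U \ell_{\feedbackmodeling}$. It is a covariance-type expression: the expectation over $d_0$ of $v\,(\psi(x,y,c)-\En_{p_\pi}[\psi])^\top$. The key point is that it is averaged over \emph{all} rollouts, not only the $\varepsilon_0$-fraction of successes. Its Hessian in $U$ at initialization equals the Fisher $F_{\feedbackmodeling}$ plus a residual term that vanishes when the model is realizable. So I will establish realizability: there is some $U^\star$ with $p_{U^\star,v}=\feeder$ on the support of $d_0$. Then, locally around $U^\star$, the loss is $\kappa$-strongly convex along $S_{\mathrm{low}}$.

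\textbf{Step 2 (movement in $S_{\mathrm{low}}$).} From strong convexity I derive a standard descent bound. After $T$ steps of (stochastic) gradient descent, $\|P_{S_{\mathrm{low}}}(U_T-U^\star)\|$ contracts at rate $(1-\eta\kappa)^T$. The per-sample gradient noise has variance bounded by a constant rather than $1/\varepsilon_0$, because every rollout carries a feedback label. In particular, if the initialization is misspecified along $S_{\mathrm{low}}$, the cumulative projected movement $\sum_t\|P_{S_{\mathrm{low}}}\nabla_U\ell_{\feedbackmodeling}(U_t)\|$ is bounded below by order $\kappa\,\|P_{S_{\mathrm{low}}}(U_0-U^\star)\|$. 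Setting this beside Proposition~\ref{prop:informal_reward_only}(ii) finishes the contrast. That result bounds reward-only movement in $S_{\mathrm{low}}$ by the cumulative success-conditioned score, which is small by the definition of $S_{\mathrm{low}}$. Hence \rltffm~identifies directions that reward-only updates leave essentially untouched. For the joint objective \pref{eq:joint_rl_fm}, I add the two gradients and use the fact that the RL gradient's projection onto $S_{\mathrm{low}}$ is small, so it acts as a perturbation controlled by $\lambda_{\feedbackmodeling}$.

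\textbf{Main obstacle.} The hard step is Step~1's passage from coverage to genuine curvature. The Hessian in $U$ is not simply the Fisher: the head $v$ is also learned, and the bilinear parametrization $(v,U)$ has symmetries and flat directions. I would first try freezing $v$ at a nondegenerate value, or treating $v$ in a two-timescale manner, so that the loss is convex in $U$ along $S_{\mathrm{low}}$. The remaining misspecification term would then be absorbed through a neighborhood assumption, $\|U_0-U^\star\|$ small. If that fails, I would weaken the claim to a first-order statement only. That weaker claim says the initial gradient has projection onto $S_{\mathrm{low}}$ of size at least $\kappa\|P_{S_{\mathrm{low}}}(U_0-U^\star)\|-O(\|U_0-U^\star\|^2)$, which already suffices for the informal ``nontrivial movement'' conclusion.
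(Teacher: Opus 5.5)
Your plan breaks at Step~2, the passage from coverage to movement. Write $P:=P_{S_{\mathrm{low}}}$. The hypothesis $PF_{\feedbackmodeling}P\succeq\kappa P$ constrains only the compression of the curvature to $S_{\mathrm{low}}$, not the cross block $PF_{\feedbackmodeling}(I-P)$. It therefore gives neither a $(1-\eta\kappa)^T$ contraction of $P(U_t-U^\star)$ nor a lower bound on $\|P\nabla\ell_{\feedbackmodeling}(U_0)\|$ in terms of $\|P(U_0-U^\star)\|$.

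For a counterexample, take $F_{\feedbackmodeling}=\begin{pmatrix}1&1\\1&2\end{pmatrix}$, which is positive definite with a unique optimum, and $S_{\mathrm{low}}=\mathrm{span}(e_1)$. Then $PF_{\feedbackmodeling}P=P$, so $\kappa=1$. For $U_0-U^\star=\delta(e_1-e_2)$, the local-quadratic gradient is $F_{\feedbackmodeling}(U_0-U^\star)=-\delta e_2$. Its projection onto $S_{\mathrm{low}}$ is exactly zero, although $\|P(U_0-U^\star)\|=\delta$. The first gradient step leaves $P(U_t-U^\star)$ unchanged. So both your contraction claim and your fallback bound $\kappa\delta-O(\delta^2)$ fail.

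What you actually need is a lower bound on the projected mean FM score at initialization. That is a separate hypothesis (the paper's drift condition, \pref{assump:fm_drift_clean}), not a consequence of coverage.

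Two further issues compound this.
\begin{itemize}
\item Realizability of $\feeder$ by your model cannot be ``established''; it is an extra, strong assumption.
\item Your bilinear parametrization makes the coverage condition unsatisfiable in general. Say $U\in\mathbb{R}^{d\times k}$. For any fixed head $v$, the $U$-gradient of $\ell_{\feedbackmodeling}$ lies in the $k$-dimensional subspace $\{va^\top: a\in\mathbb{R}^k\}$. Hence all FM curvature and covariance vanish on $\{va^\top\}^\perp$, and your coverage condition forces $\dim S_{\mathrm{low}}\le k$. But $S_{\mathrm{low}}$ contains the null space of $\Sigma^w_{\text{succ}}(w_0)$. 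In your model this null space includes the $(d-1)k$-dimensional subspace $\{wa^\top\}^\perp$, because every policy score has the form $w\,(\phi-\En_{\pi}\phi)^\top$. For $d\ge 3$ the condition is therefore unsatisfiable, so the flat directions you flag cannot be absorbed by a neighborhood assumption.
\end{itemize}

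The idea you are missing is that, in the frozen-rollout regime you already adopt, no optimum, curvature, or convergence argument is needed. In the paper's notation ($w$ now denotes the representation parameters), the FM score is evaluated at $w_0$. The FM-only iterates then satisfy $w_T-w_0=\rho\lambda\sum_{t=1}^T s_t$, with $s_t$ i.i.d.\ of mean $m_{\mathrm{FM}}$ and centered covariance $C_{\mathrm{FM}}$.

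Expand $\En\|\Pi\sum_t s_t\|_2^2$. By independence, each of the $T(T-1)$ off-diagonal pairs contributes $\|\Pi m_{\mathrm{FM}}\|_2^2$. Each diagonal term contributes $\mathrm{tr}(\Pi C_{\mathrm{FM}}\Pi)+\|\Pi m_{\mathrm{FM}}\|_2^2$. This gives the exact identity of \pref{thm:fm_moves_trunk}, $\En\|\Pi(w_T-w_0)\|_2^2=\rho^2\lambda^2(T\,\mathrm{tr}(\Pi C_{\mathrm{FM}}\Pi)+T^2\|\Pi m_{\mathrm{FM}}\|_2^2)$. Coverage, $\Pi C_{\mathrm{FM}}\Pi\succeq\gamma_{\mathrm{FM}}\Pi$, bounds the first term below by $T\gamma_{\mathrm{FM}}\dim S_{\mathrm{low}}$.

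So coverage alone already forces nonzero expected squared movement in $S_{\mathrm{low}}$ through the variance term. In the example above, where the projected mean score vanishes, this term remains positive; the drift hypothesis adds the $T^2$ term. The paper's formal result is signal availability at initialization (cf.\ \pref{rem:fm_frozen_vs_traj}), not identification of a population optimum. Aiming for the latter is what forced you into realizability, local strong convexity, and the head symmetries.
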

The analysis explains why \rltffm~helps without explicitly teaching revision:
predicting critiques supplies an additional supervised signal that is informative in representation directions that are
weakly identified by sparse reward under base rollouts.
In the batch regime, this feedback signal can induce substantial representation learning in a low-signal subspace.
From this perspective, \rltffm~acts like a \emph{representation preconditioner}: it improves the identifiability and conditioning of the representation degrees of freedom
that reward-only RL struggles to learn early on.
Beyond offering insights into \rltffm, our theory provides a framework and techniques with standalone merits and broader applicability.

\subsection{Test-time scaling via self-feedback}
Because \(p_\theta(c\mid x,y)\) is produced by the same policy, the model can be run in a ``feedback mode'' at inference
time to generate critiques and perform iterative refinement: sample \(y_0\sim \pi_\theta(\cdot\mid x_0)\), generate
\(\tilde c_0\sim p_\theta(\cdot\mid x_0,y_0)\), update \(x_1=f(x_0,y_0,\tilde c_0)\), and resample \(y_1\sim\pi_\theta(\cdot\mid x_1)\).
This enables test-time scaling without requiring a separate learned judge model; the auxiliary training simply
makes the policy's self-critique distribution more faithful to the external feedback channel. We further explore this direction in \pref{sec:test_time_scaling_feedback_modeling}. The complete training and inference pseudocode is in \pref{alg:feedback_modeling_test}.

%% file: sections/big_table.tex
\begin{table*}[t]
\centering
\caption{Comparison of baselines across \textbf{reasoning puzzles}, \textbf{competition math}, and \textbf{creative writing} tasks. We report single-turn accuracy after 2-turn training (i.e., $J_{\singleturn}(\pi)$) of the last checkpoint. For the reasoning tasks and \texttt{LitBench}, we report the mean@1 accuracy, judged by either verifiable reward or LLM-as-a-judge. For the math tasks, we report the mean@32 accuracy from the last checkpoint from the training. The parentheses denote the training dataset. For \texttt{WritingBench}, we follow the official protocol with GPT-4.1-mini as the judge. The accuracy in reasoning and math is normalized between 0 and 1, and the score in creative writing is normalized between 1 and 10. Note that \rltfsd~and \rltffm~consistently outperform all baselines across tasks.} 
\arxiv{\vspace{3mm}}
\label{tab:results_main}

\setlength{\tabcolsep}{12pt}
\renewcommand{\arraystretch}{1.15}

\newlength{\scorecolw}
\setlength{\scorecolw}{1.7cm} 

\sisetup{
  table-format=1.3,
  table-column-width=\scorecolw,
  table-number-alignment=center
}

\newcommand{\colhead}[1]{%
  \multicolumn{1}{c}{\parbox[c]{\scorecolw}{\centering #1}}%
}
\newcommand{\hlcolhead}[1]{%
  \multicolumn{1}{>{\columncolor{hl}}c}{\parbox[c]{\scorecolw}{\centering #1}}%
}
\newcommand{\sdcolhead}[1]{%
  \multicolumn{1}{>{\columncolor{sd}}c}{\parbox[c]{\scorecolw}{\centering #1}}%
}
\newcommand{\fmcolhead}[1]{%
  \multicolumn{1}{>{\columncolor{fm}}c}{\parbox[c]{\scorecolw}{\centering #1}}%
}

\iclr{\scalebox{0.9}}\arxiv{\scalebox{0.82}}{
\begin{tabular}{l|
  S S S S
  >{\columncolor{sd}}S
  >{\columncolor{fm}}S
}
\toprule
& \colhead{Base~Model}
& \colhead{\grpo{} Single~turn}
& \colhead{\grpo{} Multi~turn}
& \colhead{\feedbackdescent}
& \sdcolhead{\rltfsd}
& \fmcolhead{\rltffm} \\
\midrule

\addlinespace[2pt]
\multicolumn{5}{l}{\textbf{Reasoning}} & & \\
\quad \texttt{Knights and Knaves} & 0.058 & 0.373 & 0.352 & 0.055 & 0.802 & 0.880 \\
\quad \texttt{Binary Matrix}      & 0.001 & 0.125    & 0.950 & 0.005 & 0.976    & 0.978 \\
\quad \texttt{Shortest Path}      & 0.034 & 0.385    & 0.384 & 0.035 & 0.830    & 0.905 \\
\addlinespace[2pt]
\midrule
\multicolumn{5}{l}{\textbf{Math}} & & \\
\quad \texttt{MATH500} (\texttt{DAPO})     & 0.376 & 0.526 & 0.523 & 0.415 & 0.548 & 0.567 \\
\quad \texttt{AIME24} (\texttt{DAPO})      & 0.025 & 0.058 & 0.025 & 0.045 & 0.088 & 0.083 \\
\quad \texttt{MATH500} (\texttt{DeepMath}) & 0.376 & 0.558 & 0.578 & 0.424 & 0.598 & 0.636 \\
\quad \texttt{AIME24} (\texttt{DeepMath})  & 0.025 & 0.042 & 0.050 & 0.054 & 0.058 & 0.058 \\
\addlinespace[2pt]
\midrule
\multicolumn{5}{l}{\textbf{Creative Writing}} & & \\
\quad \texttt{LitBench}           & 4.20 & 6.83 & 6.41 & 8.25 & 8.80 & 8.40 \\
\quad \texttt{WritingBench}       & 5.71 & 5.92 & 6.29 & 5.30 & 6.71 & 6.39 \\
\bottomrule
\end{tabular}
}
\end{table*}

%% file: sections/experiments.tex
\begin{figure*}[t]
    \centering
    \includegraphics[width=\linewidth]{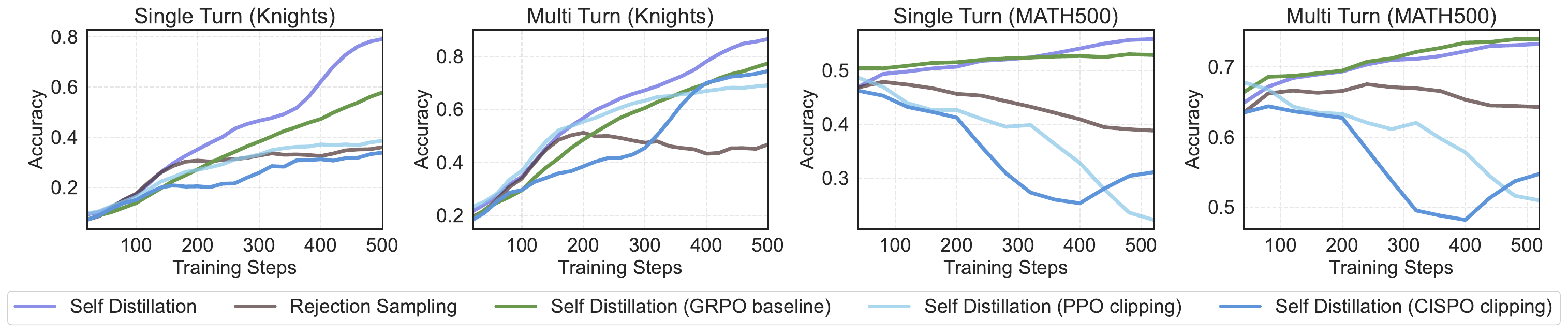}
    \caption{Evaluation curves on \texttt{Knights and Knaves} and \texttt{MATH500} (trained on \texttt{DAPO}) for ablations on self distillation design choices. For each environment: Left: single-turn accuracy; Right: multi-turn accuracy. \rltfsd~(\grpo~baseline) denotes using AWR objective with second turn mean baseline. \rltfsd~(\ppo~ clipping) denotes using \ppo~style clipping on importance weighting with first turn baseline. \rltfsd~(\cispo~ clipping) denotes using \cispo~style clipping on importance weighting with first turn baseline. Note that our proposed design choices consistently outperform the alternatives in both single-turn and multi-turn performance.}
    \label{fig:kights_distill_ablation}
\end{figure*}

Our experiments evaluate our two proposed methods for RL from text feedback: \rltfsd,
which uses feedback-conditioned rollouts to improve the single-turn policy via RL, and \rltffm, which
adds an auxiliary objective that predicts critiques. The goal of our experiments is twofold: (i) quantify
how much these components improve performance over standard RL baselines, and (ii) isolate the design choices
that make them effective in practice. Concretely, we seek to answer the following research questions:
\begin{enumerate}[leftmargin=*,label=\textbf{RQ\arabic*:}]
    \item How well do self distillation and feedback modeling work across a wide range of tasks?
    \item Which design choices matter for distillation?
    In particular, does the proposed design choices (use of baseline, advantage weighted regression) consistently outperform their alternatives? 
    \item How much of the gain remains if we remove rich critiques and provide only a correctness-style signal?
    \item How does feedback modeling enable effective \emph{test-time} scaling by
    generating multiple rounds of self-feedback at inference time?
\end{enumerate}
\paragraph{Experiment setup and baselines}
In our experiments, we use the \texttt{Qwen3-235B-A22B-Instruct-2507} \citep{yang2025qwen3} model to simulate the feedback provider ($\feeder$), and we use \texttt{
Llama-3.1-8B-Instruct} \citep{grattafiori2024llama} as the learner. We use early termination (c.f. \pref{sec:setup}) unless otherwise specified. We defer the details of each environment and benchmark, as well as prompts for feedback provider and learner to \pref{app:exp_benchmarks,app:exp_prompts}. 

We compare with a comprehensive set of baselines: for reward-only RL, we use \grpo~\citep{shao2024deepseekmath}, training with both single-turn ($J_{\singleturn}$) and multi-turn ($J_{\multiturn}$) objectives; for text feedback aware baselines, we compare with the latest method \feedbackdescent~\citep{lee2025feedback}, which performs optimization directly in text space without modifying model weights via pairwise comparisons. \arxiv{In our ablations, we also compare with common baselines including \rejectionsampling~(adopted in e.g., \citet{scheurer2023training}), as well as objective variants such as off-policy \cispo~\citep{chen2025minimax} and off-policy \ppo~\citep{schulman2017proximal}.}

\subsection{General Results}
\label{sec:general_results}
To investigate \textbf{RQ1}, we compare \rltfsd~and \rltffm~with the baselines across a wide range of tasks, including \textbf{reasoning puzzles} (\texttt{Knights and Knaves}, \texttt{Binary Matrix}, \texttt{Shortest Path}) \citep{stojanovski2025reasoning,tajwar2025training}, \textbf{competition math} (training on \texttt{DAPO} \citep{yu2025dapo} and \texttt{Deepmath} \citep{deepmath} and testing on \texttt{MATH500} \citep{hendrycks2021measuring} and \texttt{AIME24}), and \textbf{creative writing} (\texttt{LitBench} \citep{fein2025litbench} and \texttt{WritingBench} \citep{wu2025writingbench}). We defer the details of benchmarks, prompts, and hyperparameters to \pref{app:exp_benchmarks,app:exp_prompts,app:exp_hyperparameters} respectively.

We focus on the 2-turn setting and compare the final single-turn performance $J_{\singleturn}(\pi)$. We summarize the main results in \pref{tab:results_main} and we defer the multi-turn performance and evaluation curves to \pref{app:exp_omitted_results}. We observe that both \rltfsd~and \rltffm~consistently outperform all baselines across tasks, demonstrating the effectiveness of learning from text feedback. Notably, in terms of single-turn performance \grpo~with multi-turn training demonstrates similar performance to single-turn training, suggesting that \emph{naively incorporating feedback as additional context is insufficient to internalize its learning signal}. \feedbackdescent~also underperforms our methods, indicating the importance of parameter space optimization instead of text space optimization. Although both proposed methods outperform the other baselines across the board, the improvement is more significant in the reasoning tasks and \texttt{LitBench}, where the train-test distribution mismatch is small, and thus feedback can significantly accelerate learning. Still, incorporating feedback also helps in domains like math and \texttt{WritingBench}\footnote{For \texttt{WritingBench} evaluation, we use the same checkpoint from training on the \texttt{LitBench} training set, but \texttt{WritingBench} has tasks beyond story writing, which is the only task in \texttt{LitBench}.}, indicating the generalization of feedback incorporation. Finally, to compare \rltfsd~and \rltffm, we observe that \rltfsd~outperforms in creative writing tasks where the teacher-student distribution mismatch is small, and \rltffm~obtains better results under math and reasoning tasks where the feedback is more subjective and thus the auxiliary prediction loss is easier to optimize. 

\paragraph{Case studies}
To better understand how text feedback shapes model behavior during training, we qualitatively examine first- and second-turn generations, and show a few examples in \pref{app:example_generations}. These examples demonstrate that feedback can help the model escape local optima that RL can get stuck in (e.g., claiming that all problems are infeasible), correct flawed reasoning chains, and identify arithmetic errors. In this way, text feedback provides targeted, actionable information that scalar rewards cannot convey. \loose

\subsection{Ablation on Self Distillation}\label{sec:ablation_self_distill}

In this section, we investigate \textbf{RQ2} by performing ablations on the design choices and feedback signal of \rltfsd. We ablate two major design choices: \textbf{(i) the use of a baseline for advantage estimation/variance reduction}, where we compare the \grpo~style baseline in \pref{eq:baseline_secondturn_mean} and our first-turn baseline in \pref{eq:baseline_firstturn_mean}. \textbf{(ii) bias-variance tradeoff in importance weighting}, where we compare our AWR-style objective with importance weighting with two different clipping objectives: \cispo~style clipping \arxiv{(\pref{eq:cispo_obj})}\iclr{\citep{chen2025minimax}} and \ppo~style clipping \citep{schulman2017proximal}.
Finally we also compare with the \rejectionsampling~baseline \citep{scheurer2023training}, which adds an SFT auxiliary loss to imitate the correct second-turn responses. 

We perform ablation on the \texttt{Knights and Knaves} and math reasoning training with \texttt{DAPO} and summarize the results in \pref{fig:kights_distill_ablation}. We observe that introducing importance weighting introduces instability during training, even with different mechanisms of clipping on the importance weighting. Without importance weighting, our first-turn baseline provides significant performance improvement over the regular \grpo-style baseline, indicating the empirical benefit of our design beyond the didactic setting considered in \pref{sec:distillation}. Notably, \rejectionsampling~also underperforms \rltfsd~and \rltfsd~with \grpo~baseline, indicating the benefit of variance reduction via baselines.

\begin{figure*}[t]
    \centering
    \includegraphics[width=\linewidth]{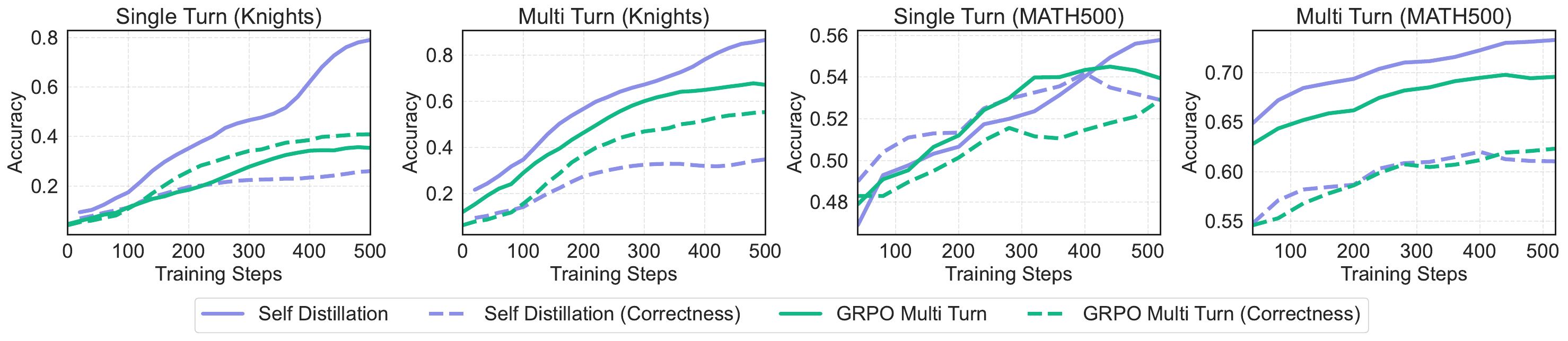}
    \caption{Evaluation curves on \texttt{Knights and Knaves} and \texttt{MATH500} (trained on \texttt{DAPO}) for text feedback vs. correctness-only feedback. We compare single- and multi-turn accuracy on two algorithms: multi-turn \grpo~and \rltfsd. Overall, using text feedback outperforms using correctness-only feedback for single-turn and multi-turn accuracy on both algorithms.}
    \label{fig:correctness_ablation}
\end{figure*}

\subsection{Ablation on Feedback}
\label{sec:ablation_lili_baseline}
To answer \textbf{RQ3}, we compare to a correctness-only version of \rltfsd~that does not use text feedback and only provides a correctness signal. Specifically, we replace the judge critique after the first turn with simply the sentence \texttt{"Your previous answer was \{correct/incorrect\}"}. \pref{fig:correctness_ablation} shows the performance of using correctness-only feedback on two algorithms: 1) multi-turn \grpo, and 2) \rltfsd. We find that the correctness-only baseline does not perform well compared to \rltfsd, indicating that semantically rich text feedback is critical. One notable exception is the single-turn \texttt{Knights and Knaves} accuracy using multi-turn \grpo. Without distillation, neither text feedback nor correctness-only feedback can significantly influence the model’s first-turn response, so there is little difference between the two in this setting. \loose

\subsection{Test-time Scaling of Feedback Modeling}\label{sec:test_time_scaling_feedback_modeling}

Finally, for \textbf{RQ4}, we investigate the test-time scaling ability of \rltffm~by generating multiple rounds of self-feedback at inference time. Specifically, we evaluate the model trained with \rltffm~on \texttt{Knights and Knaves} and \texttt{MATH500} (trained with \texttt{DAPO}) by allowing it to generate up to 5 rounds of generation with self-feedback at inference time. We introduce an additional baseline where we use RL to improve model's self-critique using second-turn reward (\pref{alg:feedback_modeling_self_critique}), and we disable early termination during the training under this setting.

We summarize the results in \pref{fig:test_time_scaling_fm}. We make the following observations: first, using RL for learning self-critique is not sufficient: in the math experiment, we observe that \grpo~with and without self-critique training achieve similar test-time improvement. Second, adding \rltffm~loss in addition to self-critique RL training brings significant test-time improvement. Third, the benefit of \rltffm~is mainly in terms of the magnitude of improvement, not in terms of the number of rounds of improvement. The test-time improvement saturates after a handful of rounds, but this is expected and corroborates with the self-improvement literature \citep{huang2023large, song2024mind}.

\begin{figure}[t]
    \centering
    \iclr{\includegraphics[width=\linewidth]{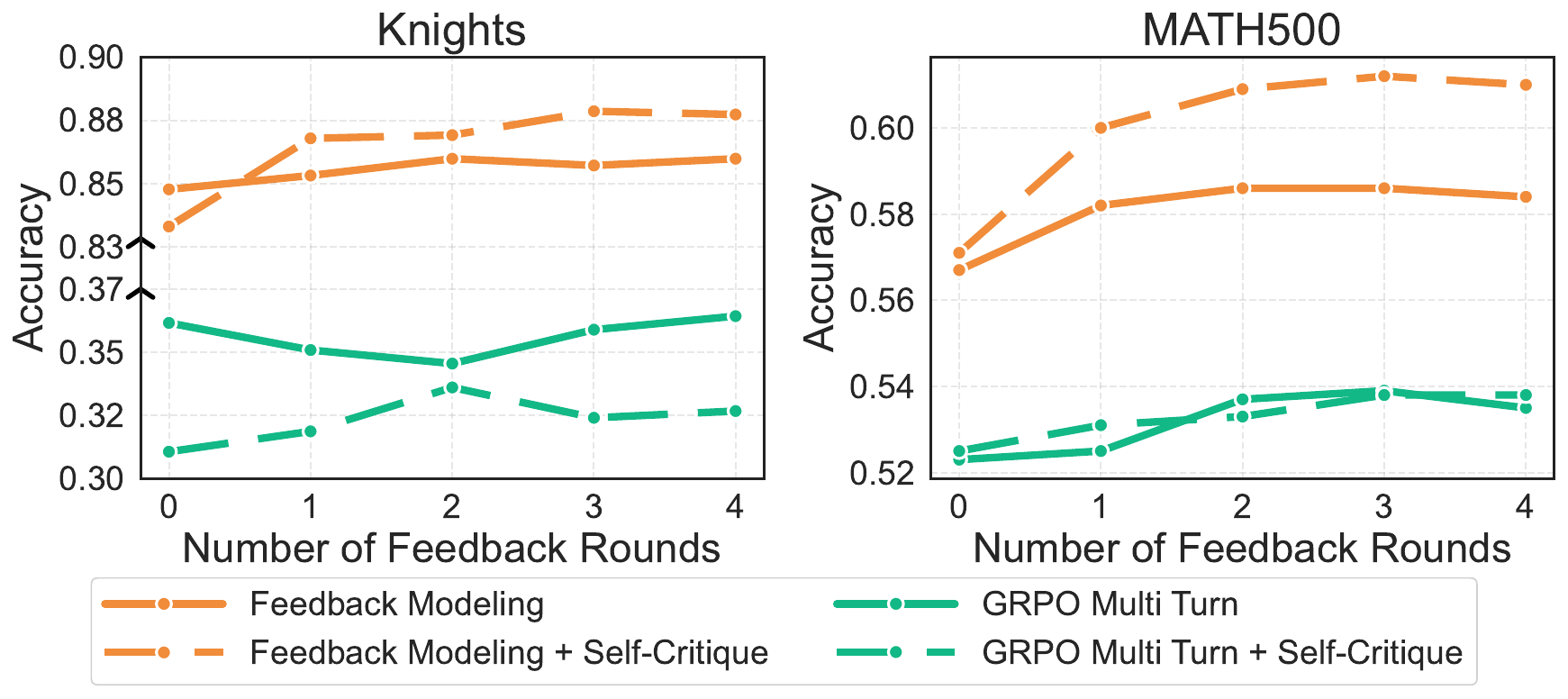}}
    \arxiv{\includegraphics[width=0.6\linewidth]{figures/test_time_scaling_joint.pdf}}
    \caption{Test-time scaling results on \texttt{Knights and Knaves} and \texttt{MATH500} (trained on \texttt{DAPO}). We allow the model to generate multiple rounds of self-feedback at inference time (denoted in the x-axis). We compare \rltffm~with multi-turn scalar-based RL, and the dashed line (``+ Self-Critique'') denotes further using RL to improve the self-critique during training (\pref{alg:feedback_modeling_self_critique}). We use skipped y-axis for the plot on the left for ease of presentation. \loose}
    \label{fig:test_time_scaling_fm}
\iclr{\vspace{-6mm}}
\end{figure}

%% file: sections/related.tex
\paragraph{Learning from text feedback} 
A well-studied area of human-robot interaction explores learning from natural language corrections ~\citep{broad2017real,sharma2022correcting,bucker2022latte,liu2023interactive,cui2023no,lynch2023interactive,liang2024learning,shi2024yell}. In these approaches, humans provide corrections such as “move a bit to the left,” grounded in the robot’s perception and action space and used to update policies or value functions. \citet{zhao2026rpofinetuningvisualgenerative} incorporates text feedback for image generation by prompting VLMs to provide critiques of generated images. We study learning from text feedback in the context of RL for LLM reasoning~\citep{shao2024deepseekmath,guo2025deepseek,hu2025open}, which typically relies on a single scalar reward. In contrast, learning directly from text feedback preserves semantic structure and compositionality. The theoretical benefit of learning from text feedback has been shown in \citet{pukdee2023learning,xu2025provably}. \citet{feng2024natural,hong2025natural,zhang2025critique,yang2026intselfproposedinterventionsenable} study how to incorporate self-critiques into the model (e.g., via policy and value distillation), but the setting differs in that they do not assume access to external text feedback.  Another class of methods~\citep{chang2023learning,amani2025rl,li2025questa,zhang2025bread} bridges SFT and RL by revealing partial prefixes of an expert solution to guide RL training. \citet{wang2025text2grad} converts text feedback to denser span-level rewards, but this ultimately collapses the text into the same order of numerical signals as regular RL. Furthermore, several works~\citep{madaan2023self,cheng2024trace,yuksekgonul2024textgrad,lee2025feedback} have also proposed learning from text feedback by propagating minimal subgraphs, or performing optimization in text space.  Finally, in the spirit of goal relabeling~\citep{andrychowicz2017hindsight}, feedback-conditioned policies~\citep{liu2023chain,zhang2023wisdom,luo2025language} use feedback as a goal in hindsight rather than an intermediate step in a multi-turn interaction. 

\iclr{In \pref{app:related}, we provide additional related work on \textbf{LLM distillation}, \textbf{World modeling} and \textbf{Multi-Turn RL}.}

\arxiv{\paragraph{LLM distillation} In knowledge distillation~\citep{hinton2015distilling,kim2016sequence,sanh2019distilbert}, a student model aims to mimic a teacher model's soft probability distribution. On-policy distillation~\citep{agarwal2024policy,xu2024speculative,gu2023minillm,lu2025onpolicydistillation,xiao2026mimo,yang2025qwen3} trains the student on its own generations instead of the teacher's generations. In self-distillation~\citep{askell2021general,snell2022learning,choi2022prompt,kujanpaa2025efficient,mitra2025semantic,eyuboglu2025cartridges,caccia2025training}, a student model learns from a teacher that has access to privileged information through its prompt. The teacher and student are typically the same base model; the teacher is not inherently more capable, but instead benefits from additional context embedded in the prompt. Prior work has explored self-distillation across a range of applications, including alignment~\citep{askell2021general}, instruction following~\citep{snell2022learning}, and persona-conditioned dialogue~\citep{choi2022prompt}. \citet{kujanpaa2025efficient,eyuboglu2025cartridges,caccia2025training,qu2025pope} study how models can learn from unstructured, free-form documents via prompt distillation. \citet{jayalath2025compute} synthesizes a target for distillation by combining multiple samples from the model. \citet{mitra2025semantic} apply self-distillation to reasoning tasks, via a teacher model with access to both correct and incorrect solutions. \citet{zhao2026self,shenfeld2026self,hubotter2026reinforcement} are concurrent works in this space; their settings differ as the teacher provides demonstrations of successful attempts instead of feedback on the model's generations. \citet{hubotter2026reinforcement} also studies text feedback from interpreters (e.g., runtime errors), which, however, is available at test time, and the proposed approaches do not directly optimize a reward-based objective.

\paragraph{LLM world models} World modeling~\citep{sutton1991dyna,ha2018recurrent,hafner2019dream,hafner2020mastering,hafner2023mastering} has long been used to improve the sample efficiency of RL. An agent learns to predict future states and rewards given the current state and action, and this internal model enables planning through imagined rollouts rather than direct interaction with the environment.  More recently, this idea has been adapted to LLMs~\citep{gu2024your,guo2025sample,chae2024web,hao2023reasoning}. In this direction, \citet{zhang2025agent} proposed to have LLMs learn from their own collected interaction data ("early experience"), via an implicit world modeling strategy, which uses next-state prediction to learn the environment dynamics. \citet{copet2025cwm} released Code World Model (CWM), a 32-billion-parameter LLM trained on large amounts of state-action pairs of Python interpreter traces and interactions with Docker environments. Some works have studied learning to model text feedback ("forward prediction") in the context of dialogue generation~\citep{NIPS2016_07563a3f,li2016dialogue} and detecting harmful content~\citep{xu2024reasons}; in our work, we study how to combine feedback modeling and reinforcement learning. 

\paragraph{Multi-turn RL} In the context of LLMs, generating a complete response and receiving a reward signal without intermediate intervention is often sufficient as there is no need for interaction.  However, as LLMs are increasingly deployed as autonomous agents, the need for multi-turn RL has grown significantly. Recently, multi-turn RL~\citep{zhou2024archer,kumar2024training,abdulhai2023lmrl} has been studied more extensively for agentic settings where interacting with an external environment is beneficial, such as interacting with the terminal~\citep{liu2023agentbench} or the Internet~\citep{zhou2023webarena}. Several methods~\citep{wang2025offline,ji2024enhancing,zhou2025sweet} have been developed to improve sample complexity and long-horizon performance for multi-turn RL. In our work, this "environment" is the feedback provider, which impacts the model's second-turn generation by critiquing its first-turn response. }

%% file: sections/conclusion.tex
\iclr{\vspace{-2mm}}
\section{Conclusion and Discussion} We study RL from text feedback, addressing the sparsity of scalar rewards while providing a scalable alternative to expert demonstration. Our two methods, \selfdistillname~(\rltfsd) and \feedbackmodelingname~(\rltffm), enjoy favorable theoretical properties and demonstrate strong empirical performance across reasoning, math, and creative writing tasks. As text feedback becomes increasingly abundant through human-AI interaction, we see RL from text feedback as a natural next step beyond reward optimization.

Several limitations suggest directions for future work. First, real-world feedback may be noisy or subjective, likely requiring data curation and filtering. Second, while our methods generalize to arbitrary horizons, truly long-horizon feedback interaction may require techniques such as summarization to address distribution shift and context limits. Third, our theory focuses on representation learning near the base policy's distribution; a fully end-to-end analysis would strengthen understanding of feedback modeling. Finally, exploring interplay with other fine-grained supervision methods, such as process reward models \citep{lightman2023let} is a promising direction.

%% file: sections/appendix.tex
\iclr{
\section{Additional Related Work}\label{app:related}
\paragraph{LLM distillation} In knowledge distillation~\citep{hinton2015distilling,kim2016sequence,sanh2019distilbert}, a student model aims to mimic a teacher model's soft probability distribution. On-policy distillation~\citep{agarwal2024policy,xu2024speculative,gu2023minillm,lu2025onpolicydistillation,xiao2026mimo,yang2025qwen3} trains the student on its own generations instead of the teacher's generations. In self-distillation~\citep{askell2021general,snell2022learning,choi2022prompt,kujanpaa2025efficient,mitra2025semantic}, a student model learns from a teacher that has access to privileged information through its prompt. The teacher and student are typically the same base model; the teacher is not inherently more capable, but instead benefits from additional context embedded in the prompt. Prior work has explored self-distillation across a range of applications, including alignment~\citep{askell2021general}, instruction following~\citep{snell2022learning}, and persona-conditioned dialogue~\citep{choi2022prompt}. \citet{kujanpaa2025efficient} studies how models can learn from unstructured, free-form documents via prompt distillation. \citet{mitra2025semantic} apply self-distillation to reasoning tasks, via a teacher model with access to both correct and incorrect solutions. 

\paragraph{LLM world models} World modeling~\citep{sutton1991dyna,ha2018recurrent,hafner2019dream,hafner2020mastering,hafner2023mastering} has long been used to improve the sample efficiency of RL. An agent learns to predict future states and rewards given the current state and action, and this internal model enables planning through imagined rollouts rather than direct interaction with the environment.  More recently, this idea has been adapted to LLMs~\citep{gu2024your,guo2025sample,chae2024web,hao2023reasoning}. In this direction, \citet{zhang2025agent} proposed to have LLMs learn from their own collected interaction data ("early experience"), via an implicit world modeling strategy, which uses next-state prediction to learn the environment dynamics. \citet{copet2025cwm} released Code World Model (CWM), a 32-billion-parameter LLM trained on large amounts of state-action pairs of Python interpreter traces and interactions with Docker environments.

\paragraph{Multi-turn RL} In the context of LLMs, generating a complete response and receiving a reward signal without intermediate intervention is often sufficient as there is no need for interaction.  However, as LLMs are increasingly deployed as autonomous agents, the need for multi-turn RL has grown significantly. Recently, multi-turn RL~\citep{zhou2024archer,kumar2024training,abdulhai2023lmrl} has been studied more extensively for agentic settings where interacting with an external environment is beneficial, such as interacting with the terminal~\citep{liu2023agentbench} or the Internet~\citep{zhou2023webarena}. Several methods~\citep{wang2025offline,ji2024enhancing,zhou2025sweet} have been developed to improve sample complexity and long-horizon performance for multi-turn RL. In our work, this "environment" is the feedback provider, which impacts the model's second-turn generation by critiquing its first-turn response.

}

\newpage
\section{Omitted Algorithms}
\begin{algorithm}[h]
\caption{\selfdistillname}
\label{alg:distill_nois_firstturn_baseline}
\begin{algorithmic}[1]
\Require Initial policy $\pi_\theta$; group size $N$; learning rate $\eta$; steps $T$, optimizer $\optimizer$.
\For{$t=1,2,\dots,T$}
    \State Sample a minibatch of prompts $\{x_0^{b}\}_{b=1}^B \sim \rho$
    \For{$b=1,2,\dots,B$}
        \For{$i=1,2,\dots,N$}
            \State Sample first-turn output $y_0^{i,b} \sim \pi_\theta(\cdot \mid x_0^{b})$
            \State Obtain feedback $c_0^{i,b} \sim \feeder(x_0^{b},y_0^{i,b})$ 
            \State Form second-turn state $x_1^{i,b} \gets f(x_0^{b},y_0^{i,b},c_0^{i,b})$
            \State Sample second-turn output $y_1^{i,b} \sim \pi_\theta(\cdot \mid x_1^{i,b})$
            \State Get rewards $r_0^{i,b} \gets R(x_0^{b},y_0^{i,b})$ and $r_1^{i,b} \gets R(x_0^{b},y_1^{i,b})$
            \State Compute return $R^{i,b} \gets r_0^{i,b} + \gamma r_1^{i,b}$
        \EndFor
        \State Compute baselines $b^{(0)} \gets \frac{1}{N}\sum_{i=1}^N r_0^{i,b}$, $b^{(R)} \gets \frac{1}{N}\sum_{i=1}^N R^{i,b}$, and $b^{(1)} \gets \frac{1}{N}\sum_{i=1}^N r_1^{i,b}$
        \State Compute self distillation advantages $A^{i,b} \gets r_1^{i,b} - b^{(0)}$ for all $i\in[N]$
        \State Compute first turn RL advantages $A_{\mathsf{RL},0}^{i,b} \gets R^{i,b} - b^{(R)}$ for all $i\in[N]$
        \State Compute second turn RL advantages $A_{\mathsf{RL},1}^{i,b} \gets r_1^{i,b} - b^{(1)}$ for all $i \in[N]$
        \State Form self distillation gradient estimate
        \[
            \widehat{g}^{b} \;\gets\; \frac{1}{N}\sum_{i=1}^N A^{i,b} \,\nabla_\theta \log \pi_\theta(y_1^{i,b}\mid x_0^{b})
        \]
        \State Form RL gradient estimate
        \[
            \widehat{g}_{\mathsf{RL}}^{b} \;\gets
            \frac{1}{N}\sum_{i=1}^N \brk*{
                A_{\mathsf{RL},0}^{i,b} \,\nabla_\theta \log \pi_\theta(y_0^{i,b}\mid x_0^{b})
                +
                A_{\mathsf{RL},1}^{i,b} \,\nabla_\theta \log \pi_\theta(y_1^{i,b}\mid x_1^{i,b})
            }
        \]
        \State Update policy: $\theta \gets \optimizer(\theta, \eta,\widehat{g}^{b} + \widehat{g}_{\mathsf{RL}}^{b})$
    \EndFor
\EndFor
\State \Return $\pi_\theta$
\end{algorithmic}
\end{algorithm}

\begin{algorithm}[h]
\caption{\feedbackmodelingname~with Test-time Self-Feedback}
\label{alg:feedback_modeling_test}
\begin{algorithmic}[1]
\Require Initial policy $\pi_\theta$; number of self-critique steps $H$.
\State Sample prompt $x_0 \sim \rho$
\For{$h=1,2,\dots,H$}
    \State Sample output $y_h \sim \pi_\theta(\cdot \mid x_{h-1})$
    \State Generate self-critique $\tilde c_h \sim p_\theta(\cdot \mid x_{h-1},y_h)$
    \State Form next state $x_h \gets f(x_{h-1},y_h,\tilde c_h)$
\EndFor
\State \Return final output $y_H$
\end{algorithmic}
\end{algorithm}

\begin{algorithm}[H]
\caption{\feedbackmodelingname}
\label{alg:feedback_modeling}
\begin{algorithmic}[1]
\Require Initial policy $\pi_\theta$; group size $N$; learning rate $\eta$; steps $T$, optimizer $\optimizer$.
\For{$t=1,2,\dots,T$}
    \State Sample a minibatch of prompts $\{x_0^{b}\}_{b=1}^B \sim \rho$
    \For{$b=1,2,\dots,B$}
        \For{$i=1,2,\dots,N$}
            \State Sample first-turn output $y_0^{i,b} \sim \pi_\theta(\cdot \mid x_0^{b})$
            \State Obtain feedback $c_0^{i,b} \sim \feeder(x_0^{b},y_0^{i,b})$ 
            \State Form second-turn state $x_1^{i,b} \gets f(x_0^{b},y_0^{i,b},c_0^{i,b})$
            \State Sample second-turn output $y_1^{i,b} \sim \pi_\theta(\cdot \mid x_1^{i,b})$
            \State Get rewards $r_0^{i,b} \gets R(x_0^{b},y_0^{i,b})$ and $r_1^{i,b} \gets R(x_0^{b},y_1^{i,b})$
            \State Compute return $R^{i,b} \gets r_0^{i,b} + \gamma r_1^{i,b}$
        \EndFor
        \State Compute baselines $b^{(R)} \gets \frac{1}{N}\sum_{i=1}^N R^{i,b}$, and $b^{(1)} \gets \frac{1}{N}\sum_{i=1}^N r_1^{i,b}$
        \State Compute first turn RL advantages $A_{\mathsf{RL},0}^{i,b} \gets R^{i,b} - b^{(R)}$ for all $i\in[N]$
        \State Compute second turn RL advantages $A_{\mathsf{RL},1}^{i,b} \gets r_1^{i,b} - b^{(1)}$ for all $i \in[N]$
        \State Form feedback modeling gradient estimate
        \[
            \widehat{g}^{b} \;\gets\; \frac{1}{N}\sum_{i=1}^N \nabla_\theta \log \pi_\theta \prn*{c_0^{i,b}\mid f_{\feedbackmodeling}(x_0^{b}, y_0^{i,b})}
        \]
        \State Form RL gradient estimate
        \[
            \widehat{g}_{\mathsf{RL}}^{b} \;\gets
            \frac{1}{N}\sum_{i=1}^N \brk*{
                A_{\mathsf{RL},0}^{i,b} \,\nabla_\theta \log \pi_\theta(y_0^{i,b}\mid x_0^{b})
                +
                A_{\mathsf{RL},1}^{i,b} \,\nabla_\theta \log \pi_\theta(y_1^{i,b}\mid x_1^{i,b})
            }
        \]
        \State Update policy: $\theta \gets \optimizer(\theta, \eta,\widehat{g}^{b} + \widehat{g}_{\mathsf{RL}}^{b})$
    \EndFor
\EndFor
\State \Return $\pi_\theta$
\end{algorithmic}
\end{algorithm}

\begin{algorithm}[H]
\caption{\feedbackmodelingname~ with Self-Critique}
\label{alg:feedback_modeling_self_critique}
\begin{algorithmic}[1]
\Require Initial policy $\pi_\theta$; group size $N$; learning rate $\eta$; steps $T$, optimizer $\optimizer$.
\For{$t=1,2,\dots,T$}
    \State Sample a minibatch of prompts $\{x_0^{b}\}_{b=1}^B \sim \rho$
    \For{$b=1,2,\dots,B$}
        \For{$i=1,2,\dots,N$}
            \State Sample first-turn output $y_0^{i,b} \sim \pi_\theta(\cdot \mid x_0^{b})$
            \State Obtain feedback $c_0^{i,b} \sim \feeder(x_0^{b},y_0^{i,b})$ 
            \State Sample self-critique $\tilde c_0^{i,b} \sim p_\theta(\cdot \mid f_{\feedbackmodeling}(x_0^{b},y_0^{i,b}))$
            \State Form second-turn state $x_1^{i,b} \gets f(x_0^{b},y_0^{i,b},c_0^{i,b}), \tilde x_1^{i,b} \gets f(x_0^{b},y_0^{i,b},\tilde c_0^{i,b})$
            \State Sample second-turn output $y_1^{i,b} \sim \pi_\theta(\cdot \mid x_1^{i,b}), \tilde y_1^{i,b} \sim \pi_\theta(\cdot \mid \tilde x_1^{i,b})$
            \State Get rewards $r_0^{i,b} \gets R(x_0^{b},y_0^{i,b})$, $r_1^{i,b} \gets R(x_0^{b},y_1^{i,b})$ and $\tilde r_1^{i,b} \gets R(x_0^{b},\tilde y_1^{i,b})$
            \State Compute return $R^{i,b} \gets r_0^{i,b} +  \frac{\gamma}{2}\prn*{r_1^{i,b} + \tilde r_1^{i,b}}$
        \EndFor
        \State Compute baselines $b^{(R)} \gets \frac{1}{N}\sum_{i=1}^N R^{i,b}$, and $b^{(1)} \gets \frac{1}{N}\sum_{i=1}^N r_1^{i,b}, \tilde b^{(1)} \gets \frac{1}{N}\sum_{i=1}^N \tilde r_1^{i,b}$
        \State Compute first turn RL advantages $A_{\mathsf{RL},0}^{i,b} \gets R^{i,b} - b^{(R)}$ for all $i\in[N]$
        \State Compute second turn RL advantages $A_{\mathsf{RL},1}^{i,b} \gets r_1^{i,b} - b^{(1)}, \tilde A_{\mathsf{RL},1}^{i,b} \gets \tilde r_1^{i,b} - \tilde b^{(1)}$ for all $i \in[N]$
        \State Form feedback modeling gradient estimate
        \[
            \widehat{g}^{b} \;\gets\; \frac{1}{N}\sum_{i=1}^N \nabla_\theta \log \pi_\theta \prn*{c_0^{i,b}\mid f_{\feedbackmodeling}(x_0^{b}, y_0^{i,b})}
        \]
        \State Form RL gradient estimate
        \[
            \widehat{g}_{\mathsf{RL}}^{b} \;\gets
            \frac{1}{N}\sum_{i=1}^N \brk*{
                A_{\mathsf{RL},0}^{i,b} \,\nabla_\theta \log \pi_\theta(y_0^{i,b}\mid x_0^{b})
                +
                A_{\mathsf{RL},1}^{i,b} \,\nabla_\theta \log \pi_\theta(y_1^{i,b}\mid x_1^{i,b})
                + \tilde A_{\mathsf{RL},1}^{i,b} \,\nabla_\theta \log \pi_\theta(\tilde c^{i,b}\mid f_{\feedbackmodeling}(x_0^{b}, y_0^{i,b}))
            }
        \]
        \State Update policy: $\theta \gets \optimizer(\theta, \eta,\widehat{g}^{b} + \widehat{g}_{\mathsf{RL}}^{b})$
    \EndFor
\EndFor
\State \Return $\pi_\theta$
\end{algorithmic}
\end{algorithm}

\newpage
\section{Theory Results from \pref{sec:distillation}}
\label{app:baseline_bias}

\subsection{Properties of Baselines}
\paragraph{Setup and notation}
Fix a prompt $x_0$. For $i=1,\dots,N$, we sample
\[
y_0^i \sim \pi(\cdot \mid x_0),\qquad x_1^i = f(x_0,y_0^i,c_0^i),\qquad y_1^i \sim \pi(\cdot \mid x_1^i),
\]
and define rewards $r_0^i := r(x_0,y_0^i)$ and $r_1^i := r(x_0,y_1^i)$.
We consider the importance-corrected score-function estimator for the single-turn objective
\[
J(\pi) = \En_{x_0\sim \rho}\brk*{\En_{y\sim \pi(\cdot\mid x_0)}\brk*{r(x_0,y)}}.
\]
For a fixed $x_0$, define
\[
g_i := \frac{\pi(y_1^i \mid x_0)}{\pi(y_1^i \mid x_1^i)}\, r_1^i \, \nabla \log \pi(y_1^i \mid x_0).
\]
Under the standard support condition, the expectation of $g_i$ equals the true single-turn policy gradient at $x_0$,
\[
\En[g_i \mid x_0] \;=\; \nabla \En_{y\sim \pi(\cdot\mid x_0)}\brk*{r(x_0,y)}.
\]

\begin{proposition}[In-sample second-turn group-mean baseline yields $(1-\tfrac{1}{N})$ shrinkage]
\label{prop:secondturn_mean_shrinkage}
For a fixed $x_0$, define the in-sample second-turn mean baseline
\[
\bar r_1 \;:=\; \frac{1}{N}\sum_{j=1}^N r_1^j,
\qquad
A_i^{(1)} \;:=\; r_1^i - \bar r_1.
\]
Consider the importance-corrected gradient estimator
\[
\hat G^{(2)} \;:=\; \frac{1}{N}\sum_{i=1}^N 
\frac{\pi(y_1^i \mid x_0)}{\pi(y_1^i \mid x_1^i)}\, A_i^{(1)}\, \nabla \log \pi(y_1^i \mid x_0).
\]
Then, conditioning on $x_0$,
\[
\En\brk*{\hat G^{(2)} \mid x_0}
\;=\;
\brk*{1-\frac{1}{N}}\,
\nabla \En_{y\sim \pi(\cdot\mid x_0)}\brk*{r(x_0,y)}.
\]
Equivalently, the in-sample second-turn mean baseline introduces a multiplicative shrinkage factor $(1-\tfrac{1}{N})$ in expectation.
\end{proposition}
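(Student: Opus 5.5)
We will expand the advantage and use linearity. Writing $W_i := \frac{\pi(y_1^i\mid x_0)}{\pi(y_1^i\mid x_1^i)}$ and $s_i := \nabla\log\pi(y_1^i\mid x_0)$, we have
\[
A_i^{(1)} = \Bigl(1-\tfrac1N\Bigr) r_1^i - \tfrac1N\sum_{j\neq i} r_1^j ,
\]
so
\[
\hat G^{(2)} = \frac1N\sum_{i=1}^N\Bigl[\Bigl(1-\tfrac1N\Bigr) g_i - \tfrac1N\sum_{j\ne i} r_1^j\, W_i s_i\Bigr].
\]
The diagonal part is handled by the identity stated in the setup, $\En[g_i\mid x_0]=\nabla\En_{y\sim\pi(\cdot\mid x_0)}[r(x_0,y)]$. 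Averaging over $i$, it contributes exactly $(1-\tfrac1N)$ times the true gradient.

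For the cross terms, we use that the $N$ tuples $(y_0^i,c_0^i,y_1^i)$ are i.i.d. given $x_0$. Hence for $j\neq i$ the reward $r_1^j$ is conditionally independent of $(x_1^i,y_1^i)$, and
\[
\En[r_1^j W_i s_i\mid x_0] = \En[r_1^j\mid x_0]\,\En[W_i s_i\mid x_0].
\]
It then suffices to show $\En[W_i s_i\mid x_0]=0$. Conditioning on $x_1^i$ and applying the change of measure (under the support condition $\pi(\cdot\mid x_0)\ll\pi(\cdot\mid x_1^i)$), we get
\[
\En_{y\sim\pi(\cdot\mid x_1^i)}\bigl[W s\bigr] = \sum_y \pi(y\mid x_0)\nabla\log\pi(y\mid x_0) = \nabla\sum_y\pi(y\mid x_0) = \nabla 1 = 0 .
\]
This holds for every $x_1^i$, so it also holds after averaging over $y_0^i$ and $c_0^i$. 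Thus every cross term vanishes, and summing the two parts gives the claim.

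The main point to check carefully is the independence step. The in-sample mean includes $r_1^i$ itself, which is why we split off the $j=i$ term before factoring; that term produces the $(1-\tfrac1N)$ shrinkage rather than a bias. The remaining terms need only the i.i.d. sampling across the group and the zero-mean property of the importance-weighted score. We will state the support condition explicitly, together with finiteness of the relevant expectations, so that the exchange of the sum and the gradient is justified.
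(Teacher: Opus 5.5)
Your proposal is correct and follows essentially the same route as the paper's proof. Both split off the self term $r_1^i$ to obtain the $(1-\tfrac{1}{N})$ factor on the unbiased estimator $g_i$. Both then factor each cross term using conditional independence of the i.i.d.\ rollouts given $x_0$, and kill it through the zero mean of the importance-weighted score $\En_{y\sim\pi(\cdot\mid x_0)}[\nabla\log\pi(y\mid x_0)]=0$.
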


\begin{proof}
Fix $x_0$. By exchangeability it suffices to analyze a single index $i$ and then take the average.
Write
\[
A_i^{(1)} \;=\; r_1^i - \frac{1}{N}\sum_{j=1}^N r_1^j
\;=\; \brk*{1-\frac{1}{N}}r_1^i - \frac{1}{N}\sum_{j\neq i} r_1^j.
\]
Hence
\begin{align*}
\En\brk*{
\frac{\pi(y_1^i \mid x_0)}{\pi(y_1^i \mid x_1^i)}\, A_i^{(1)}\, \nabla \log \pi(y_1^i \mid x_0) \mid x_0}
&=
\brk*{1-\frac{1}{N}}
\En\brk*{
\frac{\pi(y_1^i \mid x_0)}{\pi(y_1^i \mid x_1^i)}\, r_1^i\, \nabla \log \pi(y_1^i \mid x_0)
\mid x_0} \\
&\quad
-\frac{1}{N}\sum_{j\neq i}
\En\brk*{
r_1^j \cdot 
\frac{\pi(y_1^i \mid x_0)}{\pi(y_1^i \mid x_1^i)}\, \nabla \log \pi(y_1^i \mid x_0)
\mid x_0}.
\end{align*}
For $j\neq i$, the random variable $r_1^j$ depends only on $(y_0^j,x_1^j,y_1^j)$ and is independent of
$(y_0^i,x_1^i,y_1^i)$ conditioned on $x_0$ (since the $N$ rollouts are i.i.d.\ given $x_0$). Therefore the cross term factors:
\[
\En\brk*{
r_1^j \cdot 
\frac{\pi(y_1^i \mid x_0)}{\pi(y_1^i \mid x_1^i)}\, \nabla \log \pi(y_1^i \mid x_0)
\mid x_0}
=
\En[r_1^j \mid x_0]
\En\brk*{
\frac{\pi(y_1^i \mid x_0)}{\pi(y_1^i \mid x_1^i)}\, \nabla \log \pi(y_1^i \mid x_0)
\mid x_0}.
\]
Next, for the second factor, note that
\[
\En\brk*{
\frac{\pi(y_1^i \mid x_0)}{\pi(y_1^i \mid x_1^i)}\, \nabla \log \pi(y_1^i \mid x_0)
\mid x_0}
=
\En_{y\sim \pi(\cdot\mid x_0)}\!\brk*{\nabla \log \pi(y\mid x_0)}
=
\nabla \int \pi(y\mid x_0)\,dy
=0,
\]
where the first equality is the standard importance-sampling identity under $\piref(\cdot)=\pi(\cdot\mid x_1^i)$.
Hence every cross term vanishes.
We conclude
\[
\En\brk*{
\frac{\pi(y_1^i \mid x_0)}{\pi(y_1^i \mid x_1^i)}\, A_i^{(1)}\, \nabla \log \pi(y_1^i \mid x_0)
\mid x_0}
=
\brk*{1-\frac{1}{N}}
\En\brk*{
\frac{\pi(y_1^i \mid x_0)}{\pi(y_1^i \mid x_1^i)}\, r_1^i\, \nabla \log \pi(y_1^i \mid x_0)
\mid x_0}.
\]
Finally, by the unbiasedness of the importance-corrected estimator with $A=r_1$,
\[
\En\brk*{
\frac{\pi(y_1^i \mid x_0)}{\pi(y_1^i \mid x_1^i)}\, r_1^i\, \nabla \log \pi(y_1^i \mid x_0)
\mid x_0}
=
\nabla \En_{y\sim \pi(\cdot\mid x_0)}\brk*{r(x_0,y)}.
\]
Averaging over $i$ gives the claimed result.
\end{proof}

\paragraph{Remark}
Replacing $\bar r_1$ by the leave-one-out baseline $\bar r_{1,-i}=\frac{1}{N-1}\sum_{j\neq i} r_1^j$
eliminates the self-coupling term and yields $\En[\hat G^{(2)}\mid x_0]=\nabla \En_{y\sim \pi(\cdot\mid x_0)}[r(x_0,y)]$.

\begin{proposition}[First-turn group-mean baseline is unbiased (with IS correction)]
\label{prop:firstturn_mean_unbiased}
For a fixed $x_0$, define the first-turn mean baseline
\[
\bar r_0 \;:=\; \frac{1}{N}\sum_{j=1}^N r_0^j,
\qquad
A_i^{(0)} \;:=\; r_1^i - \bar r_0.
\]
Consider the importance-corrected gradient estimator
\[
\hat G^{(1)} \;:=\; \frac{1}{N}\sum_{i=1}^N 
\frac{\pi(y_1^i \mid x_0)}{\pi(y_1^i \mid x_1^i)}\, A_i^{(0)}\, \nabla \log \pi(y_1^i \mid x_0).
\]
Then, conditioning on $x_0$,
\[
\En\brk*{\hat G^{(1)} \mid x_0}
\;=\;
\nabla \En_{y\sim \pi(\cdot\mid x_0)}\brk*{r(x_0,y)}.
\]
In other words, the first-turn mean baseline does not introduce bias in expectation.
\end{proposition}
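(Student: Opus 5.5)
The proof will mirror that of \pref{prop:secondturn_mean_shrinkage}. Fix $x_0$; by exchangeability of the $N$ rollouts it suffices to compute the conditional expectation of a single summand and then average over $i$. Expand the advantage as
\[
A_i^{(0)} = r_1^i - \frac{1}{N}r_0^i - \frac{1}{N}\sum_{j\neq i} r_0^j .
\]
This splits the $i$-th summand into three terms: the reward term $W_i r_1^i \nabla\log\pi(y_1^i\mid x_0)$, where $W_i := \pi(y_1^i\mid x_0)/\pi(y_1^i\mid x_1^i)$; a self-coupling term in $r_0^i$; and cross terms in $r_0^j$ for $j\neq i$.

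\textbf{Reward term.} By the importance-sampling identity stated in the setup (under the support condition), its conditional expectation is exactly $\nabla \En_{y\sim\pi(\cdot\mid x_0)}[r(x_0,y)]$.

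\textbf{Cross terms.} For $j\neq i$, $r_0^j$ depends only on rollout $j$, which is independent of rollout $i$ given $x_0$. The expectation therefore factors as $\En[r_0^j\mid x_0]\cdot\En[W_i\nabla\log\pi(y_1^i\mid x_0)\mid x_0]$. As in the previous proof, the second factor equals $\En_{y\sim\pi(\cdot\mid x_0)}[\nabla\log\pi(y\mid x_0)]=0$.

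\textbf{Self-coupling term.} This is the step that differs from \pref{prop:secondturn_mean_shrinkage} and is the main obstacle, since $r_0^i$ is not independent of $y_1^i$: both depend on $y_0^i$ through $x_1^i$. I will handle it by the tower property, conditioning on $(y_0^i, c_0^i)$, equivalently on $x_1^i$. Given these, $r_0^i = r(x_0,y_0^i)$ is a constant, and $y_1^i\sim\pi(\cdot\mid x_1^i)$. Thus
\[
\En\brk*{r_0^i W_i \nabla\log\pi(y_1^i\mid x_0)\mid x_0,y_0^i,c_0^i} = r_0^i\,\En_{y\sim\pi(\cdot\mid x_0)}[\nabla\log\pi(y\mid x_0)] = 0,
\]
where the importance-sampling identity is applied conditionally on $x_1^i$. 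The crucial point is that the IS identity holds pointwise for each fixed $x_1^i$ under the support condition, so the score has zero mean regardless of the value of $y_0^i$. Taking the outer expectation kills this term.

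Combining the three pieces, each summand has conditional expectation $\nabla\En_{y\sim\pi(\cdot\mid x_0)}[r(x_0,y)]$, and averaging over $i$ gives the claim. Note there is no $(1-\tfrac1N)$ shrinkage here, because the baseline never involves $r_1^i$. The same conditioning argument shows that the per-trajectory variant $r_1^i - r_0^i$ is also unbiased.
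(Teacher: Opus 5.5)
Your proof is correct and uses the same key fact as the paper's: conditional on the first-turn variables, the importance-weighted score $\frac{\pi(y_1^i\mid x_0)}{\pi(y_1^i\mid x_1^i)}\nabla\log\pi(y_1^i\mid x_0)$ has zero mean, so any baseline that depends only on first-turn quantities contributes nothing in expectation. The paper conditions on the full $\sigma$-field $\mathcal{H}=\sigma(\{y_0^j,c_0^j,x_1^j\}_{j=1}^N)$ at once, which makes $\bar r_0$ measurable and removes the need to treat the self-coupling term and the cross terms separately.
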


\begin{proof}
Fix $x_0$ and an index $i$. Expanding the expectation,
\begin{align*}
&\En\brk*{
\frac{\pi(y_1^i \mid x_0)}{\pi(y_1^i \mid x_1^i)}\, (r_1^i-\bar r_0)\, \nabla \log \pi(y_1^i \mid x_0)
\mid x_0}
\\=
&\En\brk*{
\frac{\pi(y_1^i \mid x_0)}{\pi(y_1^i \mid x_1^i)}\, r_1^i\, \nabla \log \pi(y_1^i \mid x_0)
\mid x_0}
-
\En\brk*{
\bar r_0\cdot
\frac{\pi(y_1^i \mid x_0)}{\pi(y_1^i \mid x_1^i)}\, \nabla \log \pi(y_1^i \mid x_0)
\mid x_0}.
\end{align*}
The first term equals the desired gradient by importance-sampling correction:
\[
\En\brk*{
\frac{\pi(y_1^i \mid x_0)}{\pi(y_1^i \mid x_1^i)}\, r_1^i\, \nabla \log \pi(y_1^i \mid x_0)
\mid x_0}
=
\nabla \En_{y\sim \pi(\cdot\mid x_0)}\brk*{r(x_0,y)}.
\]
For the second term, condition on the $\sigma$-field generated by the first-turn variables
$\mathcal{H}:=\sigma\!\big(\{y_0^j,c_0^j,x_1^j\}_{j=1}^N\big)$.
Then $\bar r_0$ is $\mathcal{H}$-measurable, and given $\mathcal{H}$ the only randomness in the $i$-th factor is $y_1^i\sim \pi(\cdot\mid x_1^i)$.
Hence,
\begin{align*}
\En\brk*{
\bar r_0\cdot
\frac{\pi(y_1^i \mid x_0)}{\pi(y_1^i \mid x_1^i)}\, \nabla \log \pi(y_1^i \mid x_0)
\mid x_0}
&=
\En\brk*{
\bar r_0 \cdot
\En\brk*{
\frac{\pi(y_1^i \mid x_0)}{\pi(y_1^i \mid x_1^i)}\, \nabla \log \pi(y_1^i \mid x_0)
\mid x_0,\mathcal{H}}
\mid x_0}.
\end{align*}
By importance sampling,
\[
\En\brk*{
\frac{\pi(y_1^i \mid x_0)}{\pi(y_1^i \mid x_1^i)}\, \nabla \log \pi(y_1^i \mid x_0)
\mid x_0,\mathcal{H}}
=
\En_{y\sim \pi(\cdot\mid x_0)}\!\brk*{\nabla \log \pi(y\mid x_0)}
=
\nabla \int \pi(y\mid x_0)\,dy
=0.
\]
Therefore the entire second term is zero, and we obtain
\[
\En\brk*{
\frac{\pi(y_1^i \mid x_0)}{\pi(y_1^i \mid x_1^i)}\, (r_1^i-\bar r_0)\, \nabla \log \pi(y_1^i \mid x_0)
\mid x_0}
=
\nabla \En_{y\sim \pi(\cdot\mid x_0)}\brk*{r(x_0,y)}.
\]
Averaging over $i$ yields the claim.
\end{proof}

\subsection{Difference between unbiasedness and point-wise signal collapse}
\label{app:baseline_unbiased_difference}

Fix a prompt $x_0$ and a group of $N$ second-round samples $\{y_1^i\}_{i=1}^N$ with rewards
$r_1^i := r(x_0,y_1^i)\in[0,1]$. Consider a generic (possibly importance-corrected) estimator
\[
\hat g \;=\; \frac{1}{N}\sum_{i=1}^N w_i\,A_i\, s_i, 
\qquad s_i := \nabla \log \pi(y_1^i\mid x_0),
\]
where $w_i$ is a weight (e.g.\ $w_i=\pi(y_1^i\mid x_0)/\pi(y_1^i\mid x_1^i)$) and $A_i$ is an advantage.

\paragraph{Unbiasedness is an in-expectation statement}
Under the assumptions in Appendix~B, for suitable choices of $A_i$ and $w_i$ we have
$\mathbb{E}[\hat g \mid x_0]=\nabla J(\pi)$ (e.g.\ Eq.~(10)), which is a statement about the conditional mean.

\paragraph{Signal collapse is a point-wise (distributional) statement}
Even if $\mathbb{E}[\hat g \mid x_0]$ matches the target gradient, $\hat g$ can be identically zero
for a nontrivial fraction of groups, yielding no update on those groups.

\begin{proposition}[Deterministic collapse under second-turn mean baselines]
Let the second-turn baseline be a mean of the same group rewards, e.g.\ the in-sample mean
$\bar r_1 := \frac{1}{N}\sum_{j=1}^N r_1^j$ or the leave-one-out mean
$\bar r_{1,-i} := \frac{1}{N-1}\sum_{j\neq i} r_1^j$, and define $A_i^{(1)}:= r_1^i-\bar r_1$
(or $r_1^i-\bar r_{1,-i}$). If the group rewards are constant, i.e.\ $r_1^1=\cdots=r_1^N$,
then $A_i^{(1)}\equiv 0$ for all $i$ and hence $\hat g \equiv 0$, regardless of the weights $\{w_i\}$.
\end{proposition}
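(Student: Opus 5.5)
The argument is a direct substitution, so I will handle the two baseline variants separately and then plug the result into the generic estimator $\hat g$. Suppose the group is constant, $r_1^1=\cdots=r_1^N=:r$ for some $r\in[0,1]$.

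\emph{In-sample mean.} The baseline is
\[
\bar r_1=\frac{1}{N}\sum_{j=1}^N r_1^j=\frac{1}{N}\cdot N r=r ,
\]
so $A_i^{(1)}=r_1^i-\bar r_1=r-r=0$ for every $i$.

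\emph{Leave-one-out mean.} For each $i$, the sum $\sum_{j\neq i}r_1^j$ has exactly $N-1$ terms, each equal to $r$. Hence
\[
\bar r_{1,-i}=\frac{(N-1)r}{N-1}=r ,
\]
and again $A_i^{(1)}=0$ for all $i$. This step needs $N\ge 2$ so the leave-one-out mean is defined; I will state that implicitly.

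\emph{Plugging in.} Every summand of $\hat g$ is $w_iA_is_i=w_i\cdot 0\cdot s_i=0$. This holds whatever the values of $w_i$ and $s_i$, as long as they are finite. For the importance weight $w_i=\pi(y_1^i\mid x_0)/\pi(y_1^i\mid x_1^i)$, finiteness is guaranteed because $y_1^i$ was sampled from $\pi(\cdot\mid x_1^i)$, so the denominator is positive almost surely. I will add a one-line remark to that effect, and also note that the score $s_i$ is finite under the standard support condition. It follows that $\hat g\equiv 0$.

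I do not expect any real obstacle here. The only point needing care is this almost-sure finiteness of $w_i$ and $s_i$, which justifies the $0\cdot(\text{anything})=0$ step. I may close with a remark connecting the result to the main text: for binary rewards, a constant group occurs with probability $p_1^N+(1-p_1)^N$.
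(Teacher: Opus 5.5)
Your proof is correct and follows the paper's argument: under constant group rewards, both the in-sample and leave-one-out means equal the common value, so every advantage, and hence every summand $w_iA_is_i$ of $\hat g$, vanishes. Your remarks that $N\ge 2$ is needed and that $w_i$ and $s_i$ are almost surely finite are harmless extra care that the paper leaves implicit.
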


\begin{proof}
If $r_1^1=\cdots=r_1^N$, then $\bar r_1=r_1^i$ and also $\bar r_{1,-i}=r_1^i$ for each $i$.
Thus each advantage equals zero deterministically, so every summand vanishes.
\end{proof}

\begin{proposition}[How often collapse occurs for Bernoulli rewards]
Assume $r_1^i\in\{0,1\}$ are i.i.d.\ given $x_0$ with $\Pr(r_1^i=1\mid x_0)=p_1$.
Under either in-sample or leave-one-out second-turn mean baselines, the estimator collapses with probability
\[
\Pr(\hat g=0\mid x_0) \;\ge\; \Pr(r_1^1=\cdots=r_1^N\mid x_0) \;=\; p_1^N + (1-p_1)^N.
\]
In particular, when $p_1\to 1$, the probability of a \emph{nonzero} update scales as
$1-p_1^N \approx N(1-p_1)$.
\end{proposition}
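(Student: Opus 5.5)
The statement follows almost immediately from the preceding proposition on deterministic collapse, combined with a direct computation for i.i.d.\ Bernoulli variables. The plan has three steps.

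\textbf{Step 1: the inequality.} By the preceding proposition, on the event $E:=\{r_1^1=\cdots=r_1^N\}$ every advantage $A_i^{(1)}$ vanishes identically. This holds for both the in-sample and the leave-one-out baseline, and for any weights $w_i$. Hence $\hat g=0$ on $E$, so $E\subseteq\{\hat g=0\}$. Monotonicity of conditional probability given $x_0$ then yields $\Pr(\hat g=0\mid x_0)\ge \Pr(E\mid x_0)$.

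The inequality can be strict. The estimator may also vanish off $E$, for instance when the score vectors $s_i$ happen to cancel. This is why only a lower bound is claimed.

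\textbf{Step 2: computing $\Pr(E\mid x_0)$.} Since the rewards are binary, $E$ is the disjoint union of two events:
\begin{itemize}
\item all $r_1^i=1$;
\item all $r_1^i=0$.
\end{itemize}
The $N$ rollouts are i.i.d.\ given $x_0$, as used in the proof of \pref{prop:secondturn_mean_shrinkage}. Therefore the $r_1^i$ are i.i.d.\ Bernoulli$(p_1)$, and the two events have probabilities $p_1^N$ and $(1-p_1)^N$. Adding them gives $\Pr(E\mid x_0)=p_1^N+(1-p_1)^N$.

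\textbf{Step 3: the asymptotic claim.} The phrase ``probability of a nonzero update'' should be read as the probability of the complement of $E$. By Step 1 this is an upper bound on $\Pr(\hat g\neq 0\mid x_0)$, and it equals $1-p_1^N-(1-p_1)^N$. Set $\delta:=1-p_1\to 0$. Then:
\begin{itemize}
\item $(1-p_1)^N=\delta^N=o(\delta)$ whenever $N\ge 2$;
\item $1-p_1^N=1-(1-\delta)^N=N\delta+O(N^2\delta^2)$, by Bernoulli's inequality or a Taylor expansion.
\end{itemize}
Hence the probability of a nonzero update is at most $N\delta(1+o(1))\approx N(1-p_1)$. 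This matches the informal statement in \pref{sec:distillation}.

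The only point needing care is the interpretation in Step 3. The scaling refers to the complement event $E^c$, which upper-bounds the true nonzero-update probability. Everything else is routine.
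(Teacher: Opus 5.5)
Your proof is correct and follows the argument the paper intends. The paper gives no separate proof; it treats the bound as an immediate consequence of the deterministic-collapse proposition (the event $E=\{r_1^1=\cdots=r_1^N\}$ forces $\hat g=0$ for either baseline) combined with the i.i.d.\ Bernoulli computation $\Pr(E\mid x_0)=p_1^N+(1-p_1)^N$. Your handling of the asymptotic claim is more careful than the statement itself: you note that the paper's $1-p_1^N$ drops the $(1-p_1)^N$ term, which is harmless only for $N\ge 2$, and that $\Pr(E^c\mid x_0)$ is only an upper bound on the probability of a nonzero update.
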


\paragraph{Near-collapse and variance interpretation}
Let $\bar r_1=\frac1N\sum_i r_1^i$. For the in-sample mean baseline $A_i^{(1)}=r_1^i-\bar r_1$,
\[
\frac1N\sum_{i=1}^N (A_i^{(1)})^2 \;=\; \frac1N\sum_{i=1}^N (r_1^i-\bar r_1)^2
\]
is exactly the empirical variance of the group rewards. Thus whenever the second-turn rewards concentrate
(e.g.\ high success-rate distillation where $r_1^i\approx 1$), the advantages are uniformly small and the update magnitude is small.
This complements the in-expectation analysis in Appendix~B: unbiasedness controls the mean of $\hat g$,
whereas collapse is governed by the \emph{mass of $\hat g$ near zero}, which can be large when rewards saturate.

\subsection{Discussion on Alternative Baselines}\label{app:alternative_baselines}
A natural alternative to the first-turn mean baseline is the \emph{trajectory-level improvement} advantage
\begin{align}
\label{eq:baseline_improvement}
A_i^{\Delta}
\;:=\;
R(x_0,y_1^i)-R(x_0,y_0^i),
\end{align}
which measures how much the critique-conditioned revision improves a particular trajectory. This choice is
unbiased in our estimator because $R(x_0,y_0^i)$ does not depend on the scored action $y_1^i$.

\paragraph{Variance comparison}
To compare variance, it is helpful to separate the effect of the advantage from the score term. Fix a prompt
$x_0$ and consider the conditional variance of the scalar advantage (the same comparison carries through to the
gradient in any fixed direction if the score is uniformly bounded). Let
\[
R_1 := R(x_0,y_1),\qquad R_0 := R(x_0,y_0),
\]
and let $b^{(0)}=\frac1N\sum_{j=1}^N R(x_0,y_0^j)$ be the first-turn mean baseline used in
\eqref{eq:baseline_firstturn_mean}. For a single trajectory $i$, define
\[
A^{(0)} := R_1 - b^{(0)},\qquad A^\Delta := R_1 - R_0.
\]
Conditioned on $x_0$, $R_1$ is independent of the \emph{other} first-turn samples $\{R(x_0,y_0^j)\}_{j\neq i}$,
hence independent of $b^{(0)}$ up to a $1/N$ self-term. Approximating this finite-sample effect by ignoring the
self-term (or using the leave-one-out baseline), we have $\Cov(R_1,b^{(0)}\mid x_0)\approx 0$ and therefore
\begin{align}
\label{eq:var_A0}
\Var(A^{(0)}\mid x_0)
\;&=\;
\Var(R_1\mid x_0)+\Var(b^{(0)}\mid x_0)
\;\approx\;
\Var(R_1\mid x_0)+\frac{1}{N}\Var(R_0\mid x_0),\\[2pt]
\label{eq:var_Adelta}
\Var(A^\Delta\mid x_0)
\;&=\;
\Var(R_1-R_0\mid x_0)
\;=\;
\Var(R_1\mid x_0)+\Var(R_0\mid x_0)-2\Cov(R_1,R_0\mid x_0).
\end{align}
Subtracting \eqref{eq:var_A0} from \eqref{eq:var_Adelta} yields
\begin{align}
\label{eq:var_gap}
\Var(A^\Delta\mid x_0)-\Var(A^{(0)}\mid x_0)
\;\approx\;
\Big(1-\frac{1}{N}\Big)\Var(R_0\mid x_0)-2\Cov(R_1,R_0\mid x_0).
\end{align}
Equation~\eqref{eq:var_gap} shows that $A^\Delta$ has \emph{larger} conditional variance than $A^{(0)}$ whenever
\begin{align}
\label{eq:cov_condition}
\Cov(R_1,R_0\mid x_0) \;\le\; \frac{1}{2}\Big(1-\frac{1}{N}\Big)\Var(R_0\mid x_0).
\end{align}
This is the typical regime in our setting. In sparse-reward problems, $R_0$ is near-deterministically zero
under the base policy (so $\Var(R_0\mid x_0)\approx p_0(x_0)$), while the dependence between first-turn success
and post-feedback success is often weak or even \emph{negative}: critiques primarily help when the first attempt
fails, making $R_1$ and $R_0$ less positively correlated. In particular, if $\Cov(R_1,R_0\mid x_0)\approx 0$
(or $\le 0$), then
\[
\Var(A^\Delta\mid x_0)\; \gtrsim\; \Var(A^{(0)}\mid x_0) + \Big(1-\frac{1}{N}\Big)\Var(R_0\mid x_0),
\]
so the improvement baseline pays an extra variance term of order $\Var(R_0)$, whereas the mean baseline only
pays $\Var(R_0)/N$.

\paragraph{Additional downsides beyond variance}
The improvement baseline also changes \emph{what} the algorithm emphasizes:
\begin{itemize}[leftmargin=*]
\item \textbf{It discards many informative second-turn successes.}
If both attempts succeed ($R_0=R_1=1$), then $A^\Delta=0$ and the trajectory contributes no learning signal,
even though $y_1$ may still contain useful ``clean'' solutions worth distilling into the one-shot policy.
In contrast, $A^{(0)}=1-b^{(0)}$ remains positive whenever the first-turn policy is imperfect ($b^{(0)}<1$),
so it continues to reinforce successful corrected outputs.

\item \textbf{It provides weak normalization across prompts.}
When $R_0\approx 0$ for most samples, $A^\Delta\approx R_1$ and the method effectively reduces to using raw
post-feedback rewards, losing the prompt-level normalization that makes \eqref{eq:baseline_firstturn_mean}
stable when post-feedback success becomes high.
\end{itemize}

\iclr{
\subsection{Recovering Rejection Sampling}\label{app:rejection_sampling}
Our framework recovers the commonly used \rejectionsampling~ (or SFT) baseline in distillation: namely, we collect correct second-round generations and then SFT them into the single-turn policy $\pi(\cdot\mid x_0)$. 
Concretely, for each $x_0$ we sample $y_0\sim \pi(\cdot\mid x_0)$, form $x_1=f(x_0,y_0,c_0)$, and sample $y_1\sim \pi(\cdot\mid x_1)$. 
We keep $y_1$ if it is correct, and perform maximum likelihood on the accepted set.

This procedure fits into \pref{eq:selfloss_general} by taking a binary advantage $A(y_1) = R(x_0,y_1)\in\{0,1\},$ and choosing the reference as the same single-turn policy, but treated as a constant via stop-gradient, i.e., 
$\piref(\cdot\mid x_0) = \stopgrad\!\brk{\pi(\cdot\mid x_0)}.$
With this choice, the ratio $\pi(y_1\mid x_0)/\piref(y_1\mid x_0)$ evaluates to $1$ in the forward pass while still producing the desired score-function gradient. In particular, the induced update direction becomes
\begin{align*}
    \nabla \selfloss^{\mathrm{SFT}}(\pi)
    \;:=\;
    \En_{y_1 \sim \pi(\cdot \mid x_1)}
    \brk*{
        \frac{\pi(y_1\mid x_0)}{\stopgrad(\pi(y_1\mid x_0))}
        \, R(x_0,y_1)\, \nabla \log \pi(y_1\mid x_0)
    }
    =
    \En_{y_1 \sim \pi(\cdot \mid x_1)}
    \brk*{
        R(x_0,y_1)\, \nabla \log \pi(y_1\mid x_0)
    }.
    \label{eq:rs_sft_special_case}
\end{align*}
Therefore, in our setting SFT is precisely the special case that \emph{distills only the correct second-round generations} (via rejection sampling on the $0$--$1$ reward), without any importance-weight variance from off-policy correction.
}

%% file: sections/appendix_fm2.tex
\subsection{Setup}\label{app:fm_theory:setup}

We analyze an early-stage (batch) regime of RL post-training through a horizon-$1$ contextual bandit abstraction.
A prompt (context) is sampled as $x\sim\mu$, the model outputs a response $y\in\mathcal{Y}(x)$, and receives a bounded reward $R(x,y)\in[0,1]$.
The objective is
\[
J(\pi)\;:=\;\mathbb{E}_{x\sim\mu}\,\mathbb{E}_{y\sim\pi(\cdot\mid x)}[R(x,y)].
\]

\paragraph{Log-linear policy with learned representation}
We parameterize the policy by a learned representation $z_w(x,y)\in\mathbb{R}^d$ and a linear head $b\in\mathbb{R}^d$, with parameters $\theta=(b,w)$.
Define the score function
\[
f_\theta(x,y)\;:=\;b^\top z_w(x,y),
\]
and the log-linear policy
\begin{equation}\label{eq:loglinear_policy}
\pi_\theta(y\mid x)
\;:=\;
\frac{\exp(\tau f_\theta(x,y))}
{\sum_{y'\in\mathcal{Y}(x)}\exp(\tau f_\theta(x,y'))},
\end{equation}
where $\tau>0$ is an inverse temperature.

\paragraph{Frozen rollout distribution}
We study an early-stage regime in which rollouts remain close to the base policy, so samples are effectively drawn from a fixed distribution
\[
d_0(x,y)\;:=\;\mu(x)\,\pi_{\theta_0}(y\mid x),
\qquad \theta_0=(b_0,w_0).
\]
Equivalently, we take $\pi_{\mathsf{base}}:=\pi_{\theta_0}$.
This frozen-rollout assumption is the standing regime for the directional SNR calculations in \pref{app:fm_theory:setup} and the trajectory bounds in \pref{app:fm_theory:rl}.

\paragraph{REINFORCE estimator and score features}
Let
\[
s_{\theta}(x,y)\;:=\;\nabla_\theta\log \pi_{\theta}(y\mid x),
\qquad
g(x,y)\;:=\;s_{\theta_0}(x,y)
\]
denote the score at initialization.
The reward-only REINFORCE estimator at $\theta_0$ is
\[
\widehat g(x,y)\;:=\;R(x,y)\,g(x,y),
\qquad (x,y)\sim d_0,
\]
and the corresponding population gradient is
\[
g_{\mathrm{RL}}(\theta_0)\;:=\;\nabla_\theta J(\pi_\theta)\big|_{\theta=\theta_0}
\;=\;
\mathbb{E}_{(x,y)\sim d_0}\!\left[R(x,y)\,g(x,y)\right].
\]

\paragraph{Fisher information and reward-weighted second moment}
Define the (rollout) Fisher information matrix at $\theta_0$ by
\[
I(\theta_0)\;:=\;\mathbb{E}_{d_0}\!\left[g(x,y)g(x,y)^\top\right].
\]
For the linear head $b$, letting
\[
\phi_{\theta_0}(x,y)
:=
z_{w_0}(x,y)-\mathbb{E}_{y'\sim\pi_{\theta_0}(\cdot\mid x)}[z_{w_0}(x,y')],
\]
we have the closed-form score
\[
\nabla_b\log\pi_{\theta_0}(y\mid x)=\tau\,\phi_{\theta_0}(x,y),
\]
so the Fisher restricted to $b$ equals the policy-induced feature covariance:
\[
I_b(\theta_0)
\;=\;
\tau^2\,\mathbb{E}_{x\sim\mu}\!\left[\mathrm{Cov}_{y\sim\pi_{\theta_0}(\cdot\mid x)}\big(z_{w_0}(x,y)\big)\right].
\]
We will also use the reward-weighted score second moment
\begin{equation}\label{eq:sigma_rl_def}
\Sigma_{\mathrm{RL}}(\theta_0)
\;:=\;
\mathbb{E}_{(x,y)\sim d_0}\!\left[R(x,y)^2\,g(x,y)g(x,y)^\top\right].
\end{equation}
For any unit direction $u$, the second moment of the directional estimator equals
\[
M_{2,u}\;:=\;\mathbb{E}\!\left[\langle \widehat g,u\rangle^2\right]
\;=\;
u^\top \Sigma_{\mathrm{RL}}(\theta_0)\,u.
\]

\paragraph{Directional SNR}
Fix any unit direction $u$ and define
\[
Z_u\;:=\;\langle \widehat g,u\rangle
\;=\;
R(x,y)\,\langle g(x,y),u\rangle.
\]
Let $\mu_u:=\mathbb{E}[Z_u]$ and $M_{2,u}:=\mathbb{E}[Z_u^2]$ under $(x,y)\sim d_0$.
The per-sample directional signal-to-noise ratio is
\begin{equation}\label{eq:snr_def}
\mathrm{SNR}(u)\;:=\;\frac{|\mu_u|}{\sqrt{M_{2,u}}}.
\end{equation}
This quantity controls the sample complexity required to reliably estimate a gradient component along direction $u$.

\begin{lemma}[Directional concentration in terms of SNR]\label{lem:snr_concentration}
Let $Z_{u,1},\dots,Z_{u,N}$ be i.i.d.\ samples of $Z_u$ and $\overline Z_u:=\frac{1}{N}\sum_{i=1}^N Z_{u,i}$.
Assume $R\in[0,1]$ and $|\langle g(x,y),u\rangle|\le G_u$ almost surely under $d_0$, so $|Z_u|\le G_u$.
Then for any $\delta\in(0,1)$, with probability at least $1-\delta$,
\begin{equation}\label{eq:snr_conc_bernstein}
|\overline Z_u-\mu_u|
\;\le\;
\sqrt{\frac{2\,M_{2,u}\,\log(2/\delta)}{N}}
\;+\;
\frac{4G_u\log(2/\delta)}{3N}.
\end{equation}
Consequently, for any constant $\alpha\in(0,1)$, obtaining $|\overline Z_u-\mu_u|\le \alpha|\mu_u|$ (and hence recovering the sign of $\mu_u$ when $\alpha<1$) requires
\[
N \;=\;\Omega\!\left(\frac{1}{\mathrm{SNR}(u)^2}\log\frac{1}{\delta}\right),
\]
up to constant factors and lower-order $1/N$ terms.
\end{lemma}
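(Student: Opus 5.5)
This is a direct application of Bernstein's inequality to the bounded i.i.d.\ variables $Z_{u,i}$, followed by inverting the resulting bound.

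\textbf{Step 1: apply Bernstein to the centered variables.} Set $X_i := Z_{u,i}-\mu_u$. These are i.i.d., have mean zero, and satisfy $|X_i|\le |Z_{u,i}|+|\mu_u|\le 2G_u$ almost surely, since $|\mu_u|\le G_u$. For the variance proxy we use the second moment directly,
\[
\Var(X_i)=M_{2,u}-\mu_u^2\le M_{2,u}.
\]
The two-sided Bernstein inequality gives
\[
\Pr\big(|\overline Z_u-\mu_u|\ge t\big)\le 2\exp\!\Big(-\frac{Nt^2}{2M_{2,u}+\tfrac{2}{3}(2G_u)t}\Big).
\]
Set the right-hand side equal to $\delta$ and write $L=\log(2/\delta)$. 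Solving the quadratic in $t$ and applying $\sqrt{a+b}\le\sqrt a+\sqrt b$ gives
\[
t\le\sqrt{\frac{2M_{2,u}L}{N}}+\frac{4G_uL}{3N}.
\]
This is exactly \eqref{eq:snr_conc_bernstein}. The constant $4/3$ comes from the range bound $2G_u$, so no extra care is needed beyond matching the standard form.

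\textbf{Step 2: read off the sample complexity.} We want the deviation bound to be at most $\alpha|\mu_u|$. It suffices that each term is at most $\alpha|\mu_u|/2$. The variance term requires
\[
N\ge \frac{8M_{2,u}L}{\alpha^2\mu_u^2}=\frac{8L}{\alpha^2\,\mathrm{SNR}(u)^2}.
\]
The range term requires $N\ge 8G_uL/(3\alpha|\mu_u|)$; this is the lower-order $1/N$ contribution the statement sets aside. So $N$ of order $\mathrm{SNR}(u)^{-2}\log(1/\delta)$ suffices for this bound to certify $|\overline Z_u-\mu_u|\le\alpha|\mu_u|$.

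\textbf{Step 3: show the rate is also necessary (main obstacle).} Steps 1 and 2 only prove sufficiency. The $\Omega$ in the statement reads as a necessity claim, which Bernstein alone does not supply.

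One option is to note that, up to constants, this bound is the only guarantee obtainable when the variance term dominates. A cleaner route to genuine necessity is a Chebyshev/CLT-type argument. The standard error of $\overline Z_u$ is
\[
\sqrt{(M_{2,u}-\mu_u^2)/N}.
\]
In the sparse-reward regime $M_{2,u}\gg\mu_u^2$, which holds because $\mathrm{SNR}(u)\lesssim\sqrt{\varepsilon_0}$, so the standard error is of order $\sqrt{M_{2,u}/N}$. When $N\ll \mathrm{SNR}(u)^{-2}$, this exceeds $|\mu_u|$, and the sign of $\overline Z_u$ is wrong with constant probability. A two-point anticoncentration argument (for example Paley--Zygmund applied to $\overline Z_u-\mu_u$) would make this rigorous.

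The log factor can be justified by standard binary hypothesis testing lower bounds, for instance by comparing $\mu_u$ against $-\mu_u$ via KL divergence. Since the lemma says ``up to constant factors,'' I would present the sufficiency bound rigorously and give the necessity direction as this heuristic, noting that it is tight in the variance-dominated regime.
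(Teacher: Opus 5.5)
Your Steps 1--2 are exactly the paper's proof: center to get range $2G_u$, bound $\mathrm{Var}(Z_u)\le M_{2,u}$, apply two-sided Bernstein, then split the tolerance $\alpha|\mu_u|$ into two halves to obtain the thresholds $8M_{2,u}\log(2/\delta)/(\alpha^2\mu_u^2)$ and $8G_u\log(2/\delta)/(3\alpha|\mu_u|)$. The paper reads the $\Omega$ only in this sufficiency sense and has nothing corresponding to your Step 3, so keeping necessity heuristic is consistent with it. It is also the right call: if $\mathrm{Var}(Z_u)\le\alpha^2\delta\,\mu_u^2$, Chebyshev shows $N=1$ already suffices while $\mathrm{SNR}(u)\approx 1$, so no universal $\Omega(\mathrm{SNR}(u)^{-2}\log(1/\delta))$ necessity can hold outside the variance-dominated regime you identify.
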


\subsection{Reward-only RL under base rollouts}\label{app:fm_theory:rl}

We present two complementary bottlenecks for reward-only policy gradients under the frozen rollout distribution $d_0(x,y)=\mu(x)\pi_{\theta_0}(y\mid x)$.
Both bottlenecks are stated in terms of the directional statistics in \pref{app:fm_theory:setup}:
$Z_u=R\langle g,u\rangle$, $\mu_u=\mathbb{E}[Z_u]$, $M_{2,u}=\mathbb{E}[Z_u^2]$, and $\mathrm{SNR}(u)=|\mu_u|/\sqrt{M_{2,u}}$.

\subsubsection{Rare-event regime: small success probability implies low directional SNR}

We first formalize the common regime where reward is supported on a low-probability success event.

\begin{lemma}[SNR bound under reward supported on a rare event]\label{lem:rare_event_snr}
Let $S(x,y)\in\{0,1\}$ be any event and define $\varepsilon_0:=\Pr_{(x,y)\sim d_0}(S=1)$.
Assume the reward is supported on $S$, i.e., $R(x,y)=0$ whenever $S(x,y)=0$.
Then for any unit direction $u$,
\[
\mathrm{SNR}(u)
=
\sqrt{\varepsilon_0}\cdot
\frac{\big|\mathbb{E}[R(x,y)\langle g(x,y),u\rangle\mid S=1]\big|}
{\sqrt{\mathbb{E}[R(x,y)^2\langle g(x,y),u\rangle^2\mid S=1]}}
\;\le\;
\sqrt{\varepsilon_0}.
\]
\end{lemma}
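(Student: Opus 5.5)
The plan is to rewrite both the numerator $\mu_u$ and the second moment $M_{2,u}$ in terms of conditional expectations given the success event $S=1$. Then the factor $\sqrt{\varepsilon_0}$ appears from the mismatch between how $\varepsilon_0$ scales the mean and the second moment. The remaining ratio is bounded by Cauchy--Schwarz.

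\textbf{Step 1 (reduce to the success event).} Since $R(x,y)=0$ whenever $S(x,y)=0$, we have $Z_u=R\langle g,u\rangle = S\cdot R\langle g,u\rangle$ pointwise, and also $Z_u^2=S\cdot R^2\langle g,u\rangle^2$. By the law of total expectation,
\[
\mu_u=\varepsilon_0\,\mathbb{E}\big[R\langle g,u\rangle\mid S=1\big],
\qquad
M_{2,u}=\varepsilon_0\,\mathbb{E}\big[R^2\langle g,u\rangle^2\mid S=1\big].
\]
We assume $\varepsilon_0>0$ so the conditioning is well defined. If instead $M_{2,u}=0$, then $\mu_u=0$ as well, and we adopt the convention $\mathrm{SNR}(u)=0$, so the bound is trivial.

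\textbf{Step 2 (extract $\sqrt{\varepsilon_0}$).} Dividing, we get
\[
\mathrm{SNR}(u)=\frac{|\mu_u|}{\sqrt{M_{2,u}}}=\frac{\varepsilon_0}{\sqrt{\varepsilon_0}}\cdot\frac{|\mathbb{E}[R\langle g,u\rangle\mid S=1]|}{\sqrt{\mathbb{E}[R^2\langle g,u\rangle^2\mid S=1]}},
\]
which is exactly the stated identity.

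\textbf{Step 3 (bound the ratio).} Apply Jensen's inequality, equivalently Cauchy--Schwarz, under the conditional law given $S=1$:
\[
\big|\mathbb{E}[X\mid S=1]\big|\le\sqrt{\mathbb{E}[X^2\mid S=1]},
\qquad X=R\langle g,u\rangle .
\]
Hence the ratio is at most $1$, which gives $\mathrm{SNR}(u)\le\sqrt{\varepsilon_0}$.

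There is no real obstacle here. The only care needed is in Step 1: justifying $Z_u=S Z_u$ from the support assumption, and handling the degenerate cases $\varepsilon_0=0$ or a vanishing conditional second moment by convention.
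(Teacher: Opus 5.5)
Your proposal is correct and follows the paper's proof essentially verbatim: you use the support assumption to write $\mu_u=\varepsilon_0\,\mathbb{E}[Z_u\mid S=1]$ and $M_{2,u}=\varepsilon_0\,\mathbb{E}[Z_u^2\mid S=1]$, divide to extract $\sqrt{\varepsilon_0}$, and bound the remaining ratio by Jensen (Cauchy--Schwarz). Your explicit treatment of the degenerate cases $\varepsilon_0=0$ and $M_{2,u}=0$ is a small addition; the paper leaves these cases implicit.
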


\begin{corollary}[Sample complexity under small pass rate]\label{cor:small_pass_rate}
Assume binary reward $R(x,y)=\mathbf{1}\{\mathrm{pass}(x,y)\}$ and let $S(x,y)=\mathbf{1}\{\mathrm{pass}(x,y)\}$.
Then $\varepsilon_0=\Pr_{(x,y)\sim d_0}(\mathrm{pass}=1)$ is the base pass rate and the assumption of \pref{lem:rare_event_snr} holds.
Hence for all unit $u$, $\mathrm{SNR}(u)\le\sqrt{\varepsilon_0}$.
Consequently, with probability at least $1-\delta$, recovering $\mathrm{sign}(\mu_u)$ for any unit direction $u$ requires
\[
N\;=\;\Omega\!\left(\frac{1}{\varepsilon_0}\log\frac{1}{\delta}\right)
\]
rollouts.
\end{corollary}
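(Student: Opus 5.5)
This corollary follows by combining \pref{lem:rare_event_snr} with \pref{lem:snr_concentration}. First I check the hypothesis of \pref{lem:rare_event_snr}. With $R=\mathbf{1}\{\mathrm{pass}\}$ and $S=\mathbf{1}\{\mathrm{pass}\}$, we have $R(x,y)=0$ whenever $S(x,y)=0$, so the reward is supported on $S$. Moreover, $\Pr_{d_0}(S=1)$ is by definition the base pass rate $\varepsilon_0$. The lemma then gives $\mathrm{SNR}(u)\le\sqrt{\varepsilon_0}$ for every unit $u$.

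The underlying mechanism is worth recording, since it is a one-line Cauchy--Schwarz step. Because $R^2=R$, we have
\[
\mu_u=\varepsilon_0\,\mathbb{E}[\langle g,u\rangle\mid S=1],
\qquad
M_{2,u}=\varepsilon_0\,\mathbb{E}[\langle g,u\rangle^2\mid S=1].
\]
Conditional Jensen gives $|\mathbb{E}[\langle g,u\rangle\mid S]|\le\sqrt{\mathbb{E}[\langle g,u\rangle^2\mid S]}$, and hence $|\mu_u|/\sqrt{M_{2,u}}\le\sqrt{\varepsilon_0}$.

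For the sample complexity, I invoke the consequence of \pref{lem:snr_concentration}. Recovering $\mathrm{sign}(\mu_u)$ with probability $1-\delta$ via the empirical mean $\overline Z_u$ requires the deviation $|\overline Z_u-\mu_u|$ to be below $|\mu_u|$. The leading term in the Bernstein bound is $\sqrt{2M_{2,u}\log(2/\delta)/N}$, and requiring it to be at most $\alpha|\mu_u|$ yields
\[
N\gtrsim \frac{M_{2,u}}{\mu_u^2}\log\frac1\delta=\frac{1}{\mathrm{SNR}(u)^2}\log\frac1\delta .
\]
Substituting $\mathrm{SNR}(u)^2\le\varepsilon_0$ gives $N=\Omega(\varepsilon_0^{-1}\log(1/\delta))$.

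The main obstacle is logical rather than computational. \pref{lem:snr_concentration} is an upper (sufficiency) bound, so reading it as a \emph{necessary} condition on $N$ needs justification. I would interpret the corollary the same way the lemma's "Consequently" does: the statement is in terms of this concentration scale, up to constants. If a genuine lower bound is wanted, I would add a short argument based on the rare-event structure. With $N\ll 1/\varepsilon_0$ rollouts, the probability that no success is observed is $(1-\varepsilon_0)^N\ge 1-N\varepsilon_0$, which is bounded away from zero. On that event $\overline Z_u=0$ and the sign cannot be recovered. Making this at least $\delta$ requires $N\ge\log(1/\delta)/\log(1/(1-\varepsilon_0))\approx\varepsilon_0^{-1}\log(1/\delta)$, which matches the claimed order. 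I would present this zero-success argument as the rigorous necessity step and cite the SNR route as the conceptual explanation.
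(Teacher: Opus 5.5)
Your handling of the first two claims matches the paper's argument. With $R=S=\mathbf{1}\{\mathrm{pass}\}$ the reward is supported on the success event, \pref{lem:rare_event_snr} gives $\mathrm{SNR}(u)\le\sqrt{\varepsilon_0}$, and this is plugged into the sample-size consequence of \pref{lem:snr_concentration} with a constant $\alpha<1$.

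You genuinely depart at the necessity step, and your diagnosis is correct. The paper's proof takes the ``requires'' of \pref{lem:snr_concentration} at face value. But that lemma inverts a Bernstein \emph{upper} bound on $|\overline Z_u-\mu_u|$, so it only shows that $N\gtrsim \mathrm{SNR}(u)^{-2}\log(1/\delta)$ rollouts \emph{suffice} for the relative-error event, and that event is itself merely sufficient for sign recovery. Your zero-success argument gives an honest lower bound. Since $Z_{u,i}=0$ on every failed rollout, $\overline Z_u=0$ with probability $(1-\varepsilon_0)^N$, so success probability $1-\delta$ forces $(1-\varepsilon_0)^N\le\delta$. You wrote ``at least $\delta$''; it should be ``at most''. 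Using $\log\frac{1}{1-\varepsilon_0}\le\frac{\varepsilon_0}{1-\varepsilon_0}$ turns your $\approx$ into $N\ge(1-\varepsilon_0)\,\varepsilon_0^{-1}\log(1/\delta)$. This is $\Omega(\varepsilon_0^{-1}\log(1/\delta))$ whenever $\varepsilon_0$ is bounded away from $1$. To make the step airtight, assume $\mu_u\neq 0$ and adopt the convention that $\overline Z_u=0$ does not recover a nonzero sign. Under a fixed tie-breaking rule instead, the bound still holds for one of $\pm u$, since the same data give $\overline Z_{-u}=-\overline Z_u=0$.

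Both routes are statements about the reward-only estimator $\overline Z_u$, not information-theoretic ones. Because $\mathbb{E}_{d_0}[\langle g,u\rangle]=0$, one has $\mu_u=-\mathbb{E}[(1-R)\langle g,u\rangle]$, so with a baseline the failed rollouts do carry signal. Within that scope, your argument is elementary and rigorous and uses neither $\mathrm{SNR}(u)$ nor $G_u$. However, it only captures the $1/\varepsilon_0$ cost of never observing a success. The paper's route additionally exposes the finer $\mathrm{SNR}(u)^{-2}$ scale, which can exceed $1/\varepsilon_0$ when the success-conditioned ratio in \pref{lem:rare_event_snr} is small, but only as a sufficient sample size. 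Presenting the zero-success bound as the rigorous necessity step and the SNR computation as the explanation, as you propose, is the right division of labor.
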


\subsubsection{Weak identifiability of representation directions under success conditioning}

\paragraph{Motivation}
\pref{cor:small_pass_rate} is a finite-sample statement: when successes are rare, large batches are required to reliably estimate gradient components.
We now isolate a population-level geometric limitation that can persist even with access to the exact population gradient under $d_0$, focusing on the representation parameters $w$ with $b=b_0$ fixed.

\begin{assumption}[Frozen rollout distribution for the first $T$ steps]\label{assump:frozen_d0}
For the first $T$ gradient steps, all expectations defining the update are taken under the same fixed distribution $(x,y)\sim d_0(x,y)=\mu(x)\pi_{\theta_0}(y\mid x)$, and $R(x,y)=\mathbf{1}\{\mathrm{pass}(x,y)\}$ is evaluated on samples from $d_0$.
\end{assumption}

Define the representation score (with head fixed at $b=b_0$)
\[
s^w_{w}(x,y):=\nabla_w\log \pi_{(b_0,w)}(y\mid x)\in\mathbb{R}^p,
\qquad
\varepsilon_0:=\Pr_{(x,y)\sim d_0}(\mathrm{pass}(x,y)=1),
\]
and the success-conditioned representation score second moment under $d_0$
\[
\Sigma_{\text{succ}}^w(w)
:=
\mathbb{E}\!\left[s^w_{w}(x,y)\,s^w_{w}(x,y)^\top \ \middle|\ \mathrm{pass}(x,y)=1\right]
\in\mathbb{R}^{p\times p}.
\]
Let $\Sigma_{\text{succ}}^w(w_0)=\sum_{i=1}^m \lambda_i v_i v_i^\top$ with $\lambda_1\ge\cdots\ge\lambda_m\ge 0$ and define the low-signal subspace and projector
\[
S_{\mathrm{low}}(\eta):=\mathrm{span}\{v_i:\lambda_i<\eta\},
\qquad \Pi:=\Pi_{S_{\mathrm{low}}(\eta)}.
\]

\paragraph{Interpretation}
The spectrum of $\Sigma_{\text{succ}}^w(w_0)$ quantifies which representation directions are statistically identifiable from success-conditioned policy scores under base rollouts.
Small eigenvalues indicate directions in which even successful samples carry little score second moment, so reward-weighted updates have negligible projection onto those directions at initialization under $d_0$.

\begin{lemma}[Projected gradient bound under fixed $d_0$]\label{lem:proj_grad_bound_fixed_d0}
Adopt \pref{assump:frozen_d0} and consider the fixed-$d_0$ reward gradient field
\[
g_{\mathrm{RL}}^w(w):=\mathbb{E}_{(x,y)\sim d_0}\!\left[R(x,y)\,s^w_w(x,y)\right].
\]
Then for any $w$,
\[
\|\Pi g_{\mathrm{RL}}^w(w)\|_2
\;\le\;
\varepsilon_0\sqrt{\ \|\Pi\,\Sigma_{\text{succ}}^w(w)\,\Pi\|_{\mathrm{op}}\ }.
\]
In particular, $\|\Pi g_{\mathrm{RL}}^w(w_0)\|_2 \le \varepsilon_0\sqrt{\eta}$.
\end{lemma}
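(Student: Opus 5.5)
Since $R=\mathbf{1}\{\mathrm{pass}\}$ is binary, the reward gradient is a success-conditioned expectation scaled by the pass rate. Writing $S=\mathrm{pass}(x,y)$, we have
\[
g_{\mathrm{RL}}^w(w)=\mathbb{E}_{d_0}\!\left[\mathbf{1}\{S=1\}\,s^w_w(x,y)\right]=\varepsilon_0\,\mathbb{E}\!\left[s^w_w(x,y)\mid S=1\right],
\]
using $\Pr_{d_0}(S=1)=\varepsilon_0$. Because $\Pi$ is a fixed linear map, it commutes with the expectation, which gives $\Pi g_{\mathrm{RL}}^w(w)=\varepsilon_0\,\mathbb{E}[\Pi s^w_w\mid S=1]$.

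Next, apply Jensen's inequality (convexity of the norm, or equivalently Cauchy--Schwarz) under the success-conditioned law:
\[
\|\mathbb{E}[\Pi s^w_w\mid S=1]\|_2\le \sqrt{\mathbb{E}[\|\Pi s^w_w\|_2^2\mid S=1]}.
\]
This turns the first moment of the projected score into a second moment, which is what $\Sigma^w_{\mathrm{succ}}$ controls.

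The one point needing care is converting this second moment into the operator norm stated in the lemma rather than a trace. The trace identity $\mathbb{E}\|\Pi s\|^2=\mathrm{tr}(\Pi\Sigma\Pi)$ would only give a bound with a dimension factor. Instead I would use a variational argument on the mean vector $m:=\mathbb{E}[\Pi s^w_w\mid S=1]$. If $m\neq 0$, let $u=m/\|m\|$, which lies in $S_{\mathrm{low}}(\eta)$. Then
\[
\|m\|=\mathbb{E}[\langle u,\Pi s\rangle\mid S=1]\le\sqrt{\mathbb{E}[\langle u,\Pi s\rangle^2\mid S=1]}=\sqrt{u^\top\Pi\Sigma^w_{\mathrm{succ}}(w)\Pi u}\le\sqrt{\|\Pi\Sigma^w_{\mathrm{succ}}(w)\Pi\|_{\mathrm{op}}}.
\]
The first inequality is Jensen applied to the scalar $\langle u,\Pi s\rangle$, and the last step holds because $u$ is a unit vector. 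Multiplying by $\varepsilon_0$ gives the main bound for any $w$. Throughout, $\Sigma^w_{\mathrm{succ}}(w)$ is interpreted with the score evaluated at $w$ while samples are still drawn from $d_0$ (\pref{assump:frozen_d0}). The argument uses nothing else about $w$.

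For the special case $w=w_0$, the matrix $\Sigma^w_{\mathrm{succ}}(w_0)$ is diagonal in the eigenbasis $\{v_i\}$, and $\Pi$ projects onto the span of the $v_i$ with $\lambda_i<\eta$. So $\Pi\Sigma\Pi=\sum_{\lambda_i<\eta}\lambda_i v_iv_i^\top$, whose operator norm is $\max_{\lambda_i<\eta}\lambda_i<\eta$ (or $0$ if the set is empty). This gives $\|\Pi g^w_{\mathrm{RL}}(w_0)\|_2\le\varepsilon_0\sqrt{\eta}$. I expect no real obstacle; the only subtlety is avoiding the trace bound, as noted above.
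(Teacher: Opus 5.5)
Your proof is correct and follows essentially the same route as the paper: you rewrite $g_{\mathrm{RL}}^w(w)$ as $\varepsilon_0$ times a success-conditioned mean, apply Cauchy--Schwarz along a unit direction in $\mathrm{range}(\Pi)$, and bound $u^\top\Pi\Sigma^w_{\mathrm{succ}}(w)\Pi u$ by the operator norm (the paper takes a supremum over all such $u$, while you plug in the maximizing direction $u=m/\|m\|$, which is equivalent). Your eigenvalue computation for $w=w_0$ also spells out the ``in particular'' claim, which the paper leaves implicit.
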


\begin{theorem}[Low-signal progress bound under fixed $d_0$]\label{thm:traj_bound_fixed_d0}
Adopt \pref{assump:frozen_d0} and consider exact gradient ascent updates under fixed $d_0$:
\[
w_{t+1}=w_t+\rho\,g_{\mathrm{RL}}^w(w_t),
\qquad
g_{\mathrm{RL}}^w(w):=\mathbb{E}_{(x,y)\sim d_0}\!\left[R(x,y)\,s^w_w(x,y)\right].
\]
Then for any integer $T\ge 1$,
\[
\|\Pi(w_T-w_0)\|_2
\;\le\;
\rho\,\varepsilon_0\sum_{t=0}^{T-1}
\sqrt{\ \|\Pi\,\Sigma_{\text{succ}}^w(w_t)\,\Pi\|_{\mathrm{op}}\ }.
\]
\end{theorem}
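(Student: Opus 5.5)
The plan is to telescope the iterates and control each increment with \pref{lem:proj_grad_bound_fixed_d0}. The projector $\Pi$ is a fixed linear map: it is defined from the spectrum of $\Sigma_{\text{succ}}^w(w_0)$ at initialization and does not change along the trajectory. Under \pref{assump:frozen_d0}, the update rule therefore gives
\[
\Pi(w_T-w_0)=\sum_{t=0}^{T-1}\Pi(w_{t+1}-w_t)=\rho\sum_{t=0}^{T-1}\Pi\, g_{\mathrm{RL}}^w(w_t).
\]
Applying the triangle inequality gives $\|\Pi(w_T-w_0)\|_2\le\rho\sum_{t=0}^{T-1}\|\Pi g_{\mathrm{RL}}^w(w_t)\|_2$. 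It then remains to invoke \pref{lem:proj_grad_bound_fixed_d0} at each $w=w_t$; that lemma holds for arbitrary $w$ because the expectation is taken under the fixed $d_0$. This bounds each summand by $\varepsilon_0\sqrt{\|\Pi\Sigma_{\text{succ}}^w(w_t)\Pi\|_{\mathrm{op}}}$, which is exactly the claim.

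The main point to verify is that the lemma really applies at $w_t\neq w_0$. If its proof is needed, I would reproduce it in one line. Since $R=\mathbf{1}\{\mathrm{pass}\}$,
\[
\Pi g^w_{\mathrm{RL}}(w)=\varepsilon_0\,\mathbb{E}\bigl[\Pi s^w_w\mid \mathrm{pass}=1\bigr].
\]
For any unit $v$, Cauchy--Schwarz (Jensen) under the success-conditioned law gives
\[
\bigl(v^\top\mathbb{E}[\Pi s^w_w\mid\mathrm{pass}]\bigr)^2\le \mathbb{E}\bigl[(v^\top\Pi s^w_w)^2\mid\mathrm{pass}\bigr]=v^\top\Pi\Sigma^w_{\text{succ}}(w)\Pi v\le\|\Pi\Sigma^w_{\text{succ}}(w)\Pi\|_{\mathrm{op}}.
\]
Taking the supremum over $v$ yields the bound. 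This argument only uses that the conditioning event and measure are those of $d_0$, independent of $w$, so it is valid at every $w_t$.

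No step looks genuinely hard. The one subtlety is to make sure that $\Pi$ is held fixed (defined at $w_0$) while the matrix inside the norm is evaluated at $w_t$. This is precisely why the bound is stated as a cumulative sum rather than as $T\rho\varepsilon_0\sqrt{\eta}$, which holds only at $t=0$.
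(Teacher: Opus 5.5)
Your proposal is correct and matches the paper's proof exactly. Like the paper, you telescope the updates, apply the triangle inequality, and invoke \pref{lem:proj_grad_bound_fixed_d0} term-by-term at each iterate $w_t$; your Cauchy--Schwarz reconstruction of that lemma, with $\Pi$ held fixed from $w_0$ while $\Sigma_{\text{succ}}^w$ is evaluated along the trajectory under the fixed $d_0$, is also the same argument the paper uses.
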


\begin{remark}\label{rem:head_vs_rep_scope}
\pref{thm:traj_bound_fixed_d0} is a statement about the representation parameters $w$ with the head held fixed at $b=b_0$.
It does not rule out the possibility that optimization may also be limited by the linear head $b$ (or by joint $(b,w)$ interactions) in other regimes.
Our purpose here is narrower: we isolate representation directions that are weakly identified by reward-only learning under base rollouts, to compare with the representation movement induced by \rltffm\ in \pref{app:fm_theory:fm}.
\end{remark}

\begin{remark}[Interpretation]\label{rem:cum_excitation_interp}
Under frozen base rollouts $(x,y)\sim d_0$, projected progress in the low-signal subspace $S_{\mathrm{low}}(\eta)$ is controlled by the cumulative success-conditioned score second moment along the trajectory,
\[
\sum_{i<t}\sqrt{\|\Pi\Sigma_{\text{succ}}^w(w_i)\Pi\|_{\mathrm{op}}}.
\]
Thus, even with a fixed data distribution, an early-stage plateau can occur when successful samples have small success-conditioned score second moment in $S_{\mathrm{low}}(\eta)$ over the window of interest.
\end{remark}

\begin{corollary}[An illustrative plateau condition]\label{cor:population_plateau_implication}
Adopt the assumptions of \pref{thm:traj_bound_fixed_d0}.
Fix any $\Delta>0$ and define the superlevel set
\[
\Theta_\Delta \;:=\; \left\{w \,:\, J(\pi_{(b_0,w)})\;\ge\; J(\pi_{(b_0,w_0)})+\Delta\right\}.
\]
Define the required displacement in the low-signal subspace by
\[
r_\Delta \;:=\; \inf_{w\in\Theta_\Delta}\ \|\Pi(w-w_0)\|_2.
\]
Assume $r_\Delta>0$ and define the trajectory-dependent cumulative score second moment quantity
\[
\mathcal{E}_t \;:=\; \sum_{i=0}^{t-1}\sqrt{\ \|\Pi\,\Sigma_{\text{succ}}^w(w_i)\,\Pi\|_{\mathrm{op}}\ }.
\]
Then for any $t\le T$, if $\rho\,\varepsilon_0\,\mathcal{E}_t < r_\Delta$, the iterates satisfy
\[
J(\pi_{(b_0,w_t)}) \;<\; J(\pi_{(b_0,w_0)})+\Delta.
\]
Equivalently, achieving $J(\pi_{(b_0,w_t)}) \ge J(\pi_{(b_0,w_0)})+\Delta$ within this regime requires $\rho\,\varepsilon_0\,\mathcal{E}_t \ge r_\Delta$.
\end{corollary}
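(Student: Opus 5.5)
This is a contrapositive argument built on \pref{thm:traj_bound_fixed_d0}. I will show that the iterate $w_t$ cannot lie in $\Theta_\Delta$ when the cumulative projected progress is too small.

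Fix $t\le T$ and assume $\rho\,\varepsilon_0\,\mathcal{E}_t<r_\Delta$. The condition $t\le T$ keeps us inside the window covered by \pref{assump:frozen_d0}, so \pref{thm:traj_bound_fixed_d0} applies with horizon $t$ in place of $T$. Its proof is a telescoping sum of the per-step bounds of \pref{lem:proj_grad_bound_fixed_d0}, so it holds for every horizon up to $T$. This gives
\[
\|\Pi(w_t-w_0)\|_2\;\le\;\rho\,\varepsilon_0\sum_{i=0}^{t-1}\sqrt{\|\Pi\,\Sigma_{\text{succ}}^w(w_i)\,\Pi\|_{\mathrm{op}}}\;=\;\rho\,\varepsilon_0\,\mathcal{E}_t\;<\;r_\Delta .
\]

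Now suppose, for contradiction, that $J(\pi_{(b_0,w_t)})\ge J(\pi_{(b_0,w_0)})+\Delta$. Then $w_t\in\Theta_\Delta$. By the definition of $r_\Delta$ as an infimum over $\Theta_\Delta$, this forces $\|\Pi(w_t-w_0)\|_2\ge r_\Delta$, which contradicts the strict inequality above. Hence $w_t\notin\Theta_\Delta$, that is, $J(\pi_{(b_0,w_t)})<J(\pi_{(b_0,w_0)})+\Delta$.

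The stated equivalence is just the contrapositive of this implication. The hypothesis $r_\Delta>0$ only ensures the condition is non-vacuous. If $\Theta_\Delta$ is empty, then $r_\Delta=+\infty$ and the conclusion holds trivially.

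The only point needing care is applying \pref{thm:traj_bound_fixed_d0} with horizon $t$ rather than $T$. This is immediate, because the theorem is stated for any integer $T\ge1$ and the frozen-$d_0$ assumption holds on the shorter window.
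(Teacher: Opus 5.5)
Your proof is correct and takes the same approach as the paper. You apply \pref{thm:traj_bound_fixed_d0} at horizon $t\le T$ to get $\|\Pi(w_t-w_0)\|_2\le\rho\,\varepsilon_0\,\mathcal{E}_t<r_\Delta$, conclude $w_t\notin\Theta_\Delta$ from the definition of $r_\Delta$ as an infimum, and obtain the equivalent statement by contraposition, with your side remarks (the shorter window, and $r_\Delta=+\infty$ when $\Theta_\Delta$ is empty) being harmless details the paper leaves implicit.
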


\begin{remark}
The condition $r_\Delta>0$ is problem-dependent: it asserts that achieving a $\Delta$ improvement in expected reward with fixed head $b_0$ requires nontrivial movement in the low-signal subspace as measured by $\|\Pi(w-w_0)\|_2$.
We include \pref{cor:population_plateau_implication} to make explicit how a low-signal representation subspace can induce an early-stage plateau under base rollouts, in terms of the cumulative success-conditioned score second moment $\mathcal{E}_t$ supplied by successful samples.
\end{remark}

\subsection{Feedback modeling: representation learning benefit in the RL low-signal subspace}\label{app:fm_theory:fm}

We isolate a representation-learning benefit of \rltffm\ in the same early-stage frozen-rollout regime as \pref{app:fm_theory:rl}.
We do not prove reward improvement here.
Instead, we formalize the following claim:
\begin{quote}
\emph{Under base rollouts, reward-only learning can have negligible driving signal in a low-signal representation subspace $S_{\mathrm{low}}(\eta)$ defined in \pref{app:fm_theory:rl}.
In contrast, \rltffm\ provides auxiliary supervision that induces nontrivial representation movement in this subspace, so these degrees of freedom become statistically identifiable even before the rollout distribution shifts.}
\end{quote}

\subsubsection{Setup}

We model a shared representation $z_w(x,y)\in\mathbb{R}^d$ with $w\in\mathbb{R}^p$.
Write its Jacobian at initialization as
\[
J(x,y)\;:=\;\nabla_w z_w(x,y)\big|_{w=w_0}\in\mathbb{R}^{d\times p},
\]
so that $J(x,y)^\top v\in\mathbb{R}^p$ for any $v\in\mathbb{R}^d$.

\paragraph{Policy score in representation space}
Holding the head fixed at $b=b_0$, the policy representation score for \pref{eq:loglinear_policy} is
\[
s^w_{w}(x,y)
=\nabla_w \log \pi_{(b_0,w)}(y\mid x)
=
\tau\Big(J(x,y)^\top b_0 - \mathbb{E}_{y'\sim \pi_{(b_0,w)}(\cdot\mid x)}[J(x,y')^\top b_0]\Big).
\]
In particular, at $w_0$ this matches the score used to define $\Sigma_{\text{succ}}^w(w_0)$ and $S_{\mathrm{low}}(\eta)$ in \pref{app:fm_theory:rl}.

\paragraph{Feedback model}
Let $\mathcal{C}$ be the set of critique token/types and let $u_\psi(c)\in\mathbb{R}^d$ be a class embedding for critique type $c$, parameterized by $\psi$.
Define the feedback model
\[
p_{\psi,w}(c\mid x,y)
\;:=\;
\frac{\exp\big(\langle u_\psi(c), z_w(x,y)\rangle\big)}
{\sum_{c'\in\mathcal{C}}\exp\big(\langle u_\psi(c'), z_w(x,y)\rangle\big)}.
\]
Its representation score at $(\psi_0,w_0)$ is
\[
s_{\mathrm{FM}}(x,y,c)
:=\nabla_w \log p_{\psi_0,w}(c\mid x,y)\big|_{w=w_0}
=
J(x,y)^\top\Big(u_{\psi_0}(c)-\mathbb{E}_{c'\sim p_{\psi_0,w_0}(\cdot\mid x,y)}[u_{\psi_0}(c')]\Big)\in\mathbb{R}^p.
\]
All expectations below are under $(x,y)\sim d_0:=\mu(x)\pi_{\theta_0}(y\mid x)$ and $c\sim \feeder(\cdot\mid x,y)$.

Recall from \pref{app:fm_theory:rl} that $S_{\mathrm{low}}(\eta)\subseteq \mathbb{R}^p$ and $\Pi\in\mathbb{R}^{p\times p}$ are defined via the success-conditioned second moment $\Sigma_{\text{succ}}^w(w_0)$ of the policy representation score.

\subsubsection{Assumptions}

Define the FM score mean and centered covariance at $(\psi_0,w_0)$:
\[
m_{\mathrm{FM}}:=\mathbb{E}[s_{\mathrm{FM}}(x,y,c)],
\qquad
C_{\mathrm{FM}}:=\mathbb{E}\!\left[(s_{\mathrm{FM}}(x,y,c)-m_{\mathrm{FM}})(s_{\mathrm{FM}}(x,y,c)-m_{\mathrm{FM}})^\top\right].
\]

\begin{assumption}[FM drift in the RL low-signal subspace]\label{assump:fm_drift_clean}
There exists $b_{\mathrm{FM}}>0$ such that $\|\Pi m_{\mathrm{FM}}\|_2\ge b_{\mathrm{FM}}$.
\end{assumption}

A sufficient interpretation is that, under base rollouts, the feedback model $p_{\psi_0,w_0}$ is moment-mismatched with the feeder $\feeder$ along directions in $S_{\mathrm{low}}(\eta)$.
Indeed,
\[
m_{\mathrm{FM}}
=
\mathbb{E}_{(x,y)\sim d_0}\!\Big[
J(x,y)^\top\Big(\mathbb{E}_{c\sim \feeder(\cdot\mid x,y)}[u_{\psi_0}(c)]
-
\mathbb{E}_{c\sim p_{\psi_0,w_0}(\cdot\mid x,y)}[u_{\psi_0}(c)]\Big)
\Big].
\]

\begin{assumption}[FM coverage]\label{assump:fm_coverage_clean}
There exists $\gamma_{\mathrm{FM}}>0$ such that
\[
\Pi\,C_{\mathrm{FM}}\,\Pi \succeq \gamma_{\mathrm{FM}}\Pi.
\]
\end{assumption}

This assumption is a covariance conditioning requirement: the feedback score covariance is non-degenerate on $S_{\mathrm{low}}(\eta)$ under base rollouts, so directions in this subspace are statistically identifiable from feedback supervision.

\subsubsection{Result: representation-learning benefit}

We analyze FM-only updates on the shared representation parameters using the frozen score evaluated at $w_0$:
\[
w_{t+1} = w_t + \rho\lambda\, s_{\mathrm{FM}}(x_t,y_t,c_t)\big|_{w=w_0},
\qquad (x_t,y_t)\sim d_0,\;\; c_t\sim \feeder(\cdot\mid x_t,y_t),
\]
with step size $\rho>0$ and FM weight $\lambda>0$.

\begin{theorem}[FM moves the shared representation in the RL low-signal subspace]\label{thm:fm_moves_trunk}
Let $k:=\mathrm{tr}(\Pi)=\dim(S_{\mathrm{low}}(\eta))$.
For any integer $T\ge 1$,
\[
\mathbb{E}\big[\|\Pi(w_T-w_0)\|_2^2\big]
=
\rho^2\lambda^2\Big(
T\,\mathrm{tr}(\Pi C_{\mathrm{FM}}\Pi) \;+\; T^2\,\|\Pi m_{\mathrm{FM}}\|_2^2
\Big).
\]
In particular, under \pref{assump:fm_drift_clean,assump:fm_coverage_clean},
\[
\mathbb{E}\big[\|\Pi(w_T-w_0)\|_2^2\big]
\;\ge\;
\rho^2\lambda^2\Big(
T\,\gamma_{\mathrm{FM}}\,k \;+\; T^2\,b_{\mathrm{FM}}^2
\Big).
\]
\end{theorem}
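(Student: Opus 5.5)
Because every update uses the score frozen at $w_0$, the iterates have a closed form, $w_T-w_0=\rho\lambda\sum_{t=0}^{T-1}\xi_t$ with $\xi_t:=s_{\mathrm{FM}}(x_t,y_t,c_t)$. The triples $(x_t,y_t,c_t)$ are drawn i.i.d.\ from $d_0\otimes\feeder$, and the map $(x,y,c)\mapsto s_{\mathrm{FM}}(x,y,c)$ does not depend on $t$ or on the current iterate. Hence the $\xi_t$ are i.i.d.\ random vectors in $\mathbb{R}^p$ with mean $m_{\mathrm{FM}}$ and covariance $C_{\mathrm{FM}}$. So the problem reduces to computing the second moment of a projected sum of i.i.d.\ vectors; no optimization dynamics are involved. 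I will assume $\mathbb{E}\|\xi_t\|_2^2<\infty$ so that all moments are finite; this holds under bounded Jacobians and embeddings.

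For the identity, decompose $\xi_t=m_{\mathrm{FM}}+\zeta_t$, where the $\zeta_t$ are i.i.d., mean zero, and have covariance $C_{\mathrm{FM}}$. Then
\[
\Pi(w_T-w_0)=\rho\lambda\Big(T\,\Pi m_{\mathrm{FM}}+\sum_{t<T}\Pi\zeta_t\Big).
\]
Expand the squared norm and take expectations. The cross term $2T\langle \Pi m_{\mathrm{FM}},\sum_t\Pi\zeta_t\rangle$ has zero mean. In $\mathbb{E}\|\sum_t\Pi\zeta_t\|_2^2=\sum_{s,t}\mathbb{E}\langle\Pi\zeta_s,\Pi\zeta_t\rangle$, the off-diagonal terms vanish by independence and zero mean. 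Each diagonal term equals $\mathbb{E}[\zeta_t^\top\Pi^\top\Pi\zeta_t]=\mathrm{tr}(\Pi C_{\mathrm{FM}}\Pi)$, using that $\Pi$ is a symmetric idempotent orthogonal projector. Summing gives exactly $\rho^2\lambda^2\big(T\,\mathrm{tr}(\Pi C_{\mathrm{FM}}\Pi)+T^2\|\Pi m_{\mathrm{FM}}\|_2^2\big)$.

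For the lower bound, take the trace of the Loewner inequality in \pref{assump:fm_coverage_clean}. Since the trace is monotone on PSD differences, $\mathrm{tr}(\Pi C_{\mathrm{FM}}\Pi)\ge\gamma_{\mathrm{FM}}\mathrm{tr}(\Pi)=\gamma_{\mathrm{FM}}k$, where $\mathrm{tr}(\Pi)=\dim S_{\mathrm{low}}(\eta)$ because $\Pi$ is an orthogonal projector. Combining this with $\|\Pi m_{\mathrm{FM}}\|_2^2\ge b_{\mathrm{FM}}^2$ from \pref{assump:fm_drift_clean} gives the stated bound.

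There is no real obstacle; the calculation is routine. The one point needing care is the independence bookkeeping. The frozen-score convention is what makes the $\xi_t$ i.i.d.: had the score been evaluated at $w_t$, the increments would be dependent through the iterate and the clean identity would fail. I would state this dependence on the frozen score explicitly in the proof.
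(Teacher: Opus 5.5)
Your proposal is correct and matches the paper's proof. The paper likewise writes $\Pi(w_T-w_0)=\rho\lambda\sum_t \Pi s_t$ with i.i.d.\ frozen scores, expands the squared norm as a double sum, and takes the trace of the coverage inequality for the lower bound; the only cosmetic difference is that you center the increments before expanding, whereas the paper expands the raw double sum (off-diagonal terms give $\|\Pi m_{\mathrm{FM}}\|_2^2$, diagonal terms give $\mathrm{tr}(\Pi C_{\mathrm{FM}}\Pi)+\|\Pi m_{\mathrm{FM}}\|_2^2$) and recombines via $T+T(T-1)=T^2$.
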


\begin{remark}\label{rem:drift_vs_coverage}
\pref{thm:fm_moves_trunk} decomposes movement in $S_{\mathrm{low}}(\eta)$ into a covariance term $T\,\mathrm{tr}(\Pi C_{\mathrm{FM}}\Pi)$ (coverage) and a mean-drift term $T^2\|\Pi m_{\mathrm{FM}}\|_2^2$ (moment mismatch).
For large $T$, the drift term dominates whenever $\|\Pi m_{\mathrm{FM}}\|_2$ is not extremely small, so systematic feeder/model moment mismatch can be the primary driver of representation movement in this regime.
Coverage becomes most important when drift is weak (e.g., after partial fitting of the feedback model) or when one seeks broadly conditioned updates across $S_{\mathrm{low}}(\eta)$ rather than movement concentrated along a single biased direction.
\end{remark}

\begin{remark}\label{rem:fm_frozen_vs_traj}
\pref{thm:fm_moves_trunk} is a frozen-score (linearized) calculation: it characterizes the initial FM signal under samples from $d_0$ by evaluating the score at $w_0$.
In contrast, the reward-only analysis in \pref{thm:traj_bound_fixed_d0} is stated along the optimization trajectory $w_t$ (under the same frozen rollout distribution $d_0$), and it upper bounds projected progress by a path integral of success-conditioned score second moment.
Thus, these results serve different roles rather than forming symmetric global convergence statements:
\pref{thm:fm_moves_trunk} establishes that the FM vector field has nontrivial components in $S_{\mathrm{low}}(\eta)$ at initialization (signal availability), whereas \pref{thm:traj_bound_fixed_d0} shows that reward-only progress in that subspace is limited unless successful samples carry substantial success-conditioned score second moment in those directions along the trajectory.
Extending \pref{thm:fm_moves_trunk} to a trajectory-level FM analysis would require controlling how the FM score statistics evolve as $w$ changes.
\end{remark}

\begin{remark}[Representation-learning benefit under base rollouts]\label{rem:fm_rep_benefit_under_d0}
\pref{app:fm_theory:rl} identifies $S_{\mathrm{low}}(\eta)$ (defined from $\Sigma_{\text{succ}}^w(w_0)$) as representation directions that can be weakly identified by reward-only learning under base rollouts.
\pref{thm:fm_moves_trunk} shows that, in the same frozen distribution regime, \rltffm\ induces nontrivial representation movement along these directions through a systematic mean component $\Pi m_{\mathrm{FM}}$ and covariance conditioning $\Pi C_{\mathrm{FM}}\Pi$.
In this sense, \rltffm\ can make low-signal representation degrees of freedom statistically identifiable from feedback supervision before any reward-improvement guarantee is invoked.
\end{remark}

\subsection{Proofs}

\begin{proof}[Proof of \pref{lem:snr_concentration}]
Let $X_i:=Z_{u,i}-\mu_u$. Then $\mathbb{E}[X_i]=0$ and
\[
|X_i|\le |Z_{u,i}|+|\mu_u|\le G_u+ \mathbb{E}|Z_u|\le 2G_u \qquad \text{a.s.}
\]
Moreover, $\mathrm{Var}(X_i)=\mathrm{Var}(Z_u)\le \mathbb{E}[Z_u^2]=M_{2,u}$. By Bernstein's inequality for bounded i.i.d.\ mean-zero variables,
for any $\delta\in(0,1)$, with probability at least $1-\delta$,
\[
\left|\frac{1}{N}\sum_{i=1}^N X_i\right|
\le
\sqrt{\frac{2\,\mathrm{Var}(Z_u)\,\log(2/\delta)}{N}}
+
\frac{2(2G_u)\log(2/\delta)}{3N}.
\]
Since $\frac{1}{N}\sum_{i=1}^N X_i=\overline Z_u-\mu_u$, and $\mathrm{Var}(Z_u)\le M_{2,u}$, this yields \eqref{eq:snr_conc_bernstein}.

For the relative-error condition, it suffices that each term on the right-hand side of \eqref{eq:snr_conc_bernstein} is at most
$\frac{\alpha}{2}|\mu_u|$. The square-root term condition gives
\[
\sqrt{\frac{2M_{2,u}\log(2/\delta)}{N}}\le \frac{\alpha}{2}|\mu_u|
\quad\Longleftrightarrow\quad
N\ge \frac{8M_{2,u}}{\alpha^2\mu_u^2}\log\frac{2}{\delta}.
\]
The linear term condition gives
\[
\frac{4G_u\log(2/\delta)}{3N}\le \frac{\alpha}{2}|\mu_u|
\quad\Longleftrightarrow\quad
N\ge \frac{8G_u}{3\alpha|\mu_u|}\log\frac{2}{\delta}.
\]
Combining the two yields and substituting
$\mathrm{SNR}(u)=|\mu_u|/\sqrt{M_{2,u}}$ gives the stated $\Omega(1/\mathrm{SNR}(u)^2)$ dependence.
\end{proof}

\begin{proof}[Proof of \pref{lem:rare_event_snr}]
Since $R(x,y)=0$ on $\{S=0\}$, we have $Z_u=0$ on $\{S=0\}$ as well, and hence
\[
\mu_u=\mathbb{E}[Z_u]=\mathbb{E}[Z_u\mathbf{1}\{S=1\}]
=\Pr(S=1)\cdot \mathbb{E}[Z_u\mid S=1]
=\varepsilon_0\,\mathbb{E}[Z_u\mid S=1].
\]
Similarly,
\[
M_{2,u}=\mathbb{E}[Z_u^2]=\mathbb{E}[Z_u^2\mathbf{1}\{S=1\}]
=\varepsilon_0\,\mathbb{E}[Z_u^2\mid S=1].
\]
Therefore,
\[
\mathrm{SNR}(u)
=\frac{|\mu_u|}{\sqrt{M_{2,u}}}
=\sqrt{\varepsilon_0}\cdot
\frac{|\mathbb{E}[Z_u\mid S=1]|}{\sqrt{\mathbb{E}[Z_u^2\mid S=1]}}.
\]
By Jensen, $|\mathbb{E}[Z_u\mid S=1]|\le \sqrt{\mathbb{E}[Z_u^2\mid S=1]}$, so the ratio is at most $1$.
Substituting $Z_u=R(x,y)\langle g(x,y),u\rangle$ yields the displayed expression and the bound
$\mathrm{SNR}(u)\le \sqrt{\varepsilon_0}$.
\end{proof}

\begin{proof}[Proof of \pref{cor:small_pass_rate}]
The first claim follows immediately from Lemma~\ref{lem:rare_event_snr} by taking $S=\mathbf{1}\{\mathrm{pass}\}$ and
noting that binary $R$ is supported on $S$.

For the sample complexity statement, apply Lemma~\ref{lem:snr_concentration} to the same scalar $Z_u$ with any constant
$\alpha\in(0,1)$ (e.g., $\alpha=\tfrac12$). Lemma~\ref{lem:snr_concentration} states that achieving
$|\overline Z_u-\mu_u|\le \alpha|\mu_u|$ with probability at least $1-\delta$ requires
\[
N=\Omega\!\left(\frac{1}{\mathrm{SNR}(u)^2}\log\frac{1}{\delta}\right)
\quad\text{(up to constant factors and lower-order terms).}
\]
Since $\mathrm{SNR}(u)\le \sqrt{\varepsilon_0}$, we have $1/\mathrm{SNR}(u)^2\ge 1/\varepsilon_0$, yielding
\[
N=\Omega\!\left(\frac{1}{\varepsilon_0}\log\frac{1}{\delta}\right).
\]
Finally, when $\alpha<1$, the condition $|\overline Z_u-\mu_u|\le \alpha|\mu_u|$ implies $\overline Z_u$ has the same sign as $\mu_u$,
so $\mathrm{sign}(\mu_u)$ is recovered.
\end{proof}

\begin{proof}[Proof of \pref{lem:proj_grad_bound_fixed_d0}]
Fix any unit $u\in\mathrm{range}(\Pi)$. Since $R=\mathbf{1}\{\mathrm{pass}\}$,
\[
\langle g_{\mathrm{RL}}^w(w),u\rangle
=
\mathbb{E}_{d_0}\!\big[R\,\langle s^w_w,u\rangle\big]
=
\varepsilon_0\cdot \mathbb{E}\!\big[\langle s^w_w(x,y),u\rangle \mid \mathrm{pass}=1\big].
\]
By Cauchy--Schwarz,
\[
|\langle g_{\mathrm{RL}}^w(w),u\rangle|
\le
\varepsilon_0\sqrt{\mathbb{E}\!\big[\langle s^w_w(x,y),u\rangle^2\mid \mathrm{pass}=1\big]}
=
\varepsilon_0\sqrt{u^\top \Sigma_{\text{succ}}^w(w)u}.
\]
Taking the supremum over unit $u\in\mathrm{range}(\Pi)$ yields
\[
\|\Pi g_{\mathrm{RL}}^w(w)\|_2
=
\sup_{\substack{u\in\mathrm{range}(\Pi)\\\|u\|_2=1}} \langle g_{\mathrm{RL}}^w(w),u\rangle
\le
\varepsilon_0\sqrt{\ \|\Pi\Sigma_{\text{succ}}^w(w)\Pi\|_{\mathrm{op}}\ }.
\]
\end{proof}

\begin{proof}[Proof of \pref{thm:traj_bound_fixed_d0}]
By telescoping,
\[
\Pi(w_T-w_0)=\rho\sum_{t=0}^{T-1}\Pi g_{\mathrm{RL}}^w(w_t),
\]
so by triangle inequality,
\[
\|\Pi(w_T-w_0)\|_2
\le
\rho\sum_{t=0}^{T-1}\|\Pi g_{\mathrm{RL}}^w(w_t)\|_2.
\]
Apply \pref{lem:proj_grad_bound_fixed_d0} term-by-term concludes the proof.
\end{proof}

\begin{proof}[Proof of \pref{cor:population_plateau_implication}]
By Theorem~\ref{thm:traj_bound_fixed_d0}, for any $t\le T$,
\[
\|\Pi(w_t-w_0)\|_2
\;\le\;
\rho\,\varepsilon_0\sum_{i=0}^{t-1}
\sqrt{\ \|\Pi\,\Sigma_{\text{succ}}^w(w_i)\,\Pi\|_{\mathrm{op}}\ }
\;=\;
\rho\,\varepsilon_0\,\mathcal{E}_t.
\]
Thus, if $\rho\,\varepsilon_0\,\mathcal{E}_t < r_\Delta$, then $\|\Pi(w_t-w_0)\|_2 < r_\Delta$.

By definition of $r_\Delta=\inf_{w\in\Theta_\Delta}\|\Pi(w-w_0)\|_2$, every $w\in\Theta_\Delta$ satisfies
$\|\Pi(w-w_0)\|_2 \ge r_\Delta$. Therefore $\|\Pi(w_t-w_0)\|_2 < r_\Delta$ implies $w_t\notin\Theta_\Delta$, i.e.,
\[
J(\pi_{(b_0,w_t)}) \;<\; J(\pi_{(b_0,w_0)})+\Delta.
\]
The final statement is just the contrapositive: if $J(\pi_{(b_0,w_t)}) \ge J(\pi_{(b_0,w_0)})+\Delta$ then $w_t\in\Theta_\Delta$,
so $\|\Pi(w_t-w_0)\|_2\ge r_\Delta$, hence by the theorem $\rho\,\varepsilon_0\,\mathcal{E}_t \ge r_\Delta$.
\end{proof}

\begin{proof}[Proof of \pref{thm:fm_moves_trunk}]
Let
\[
s_t \;:=\; s_{\mathrm{FM}}(x_t,y_t,c_t)\big|_{w=w_0},
\]
where $(x_t,y_t)\sim d_0$ and $c_t\sim \mathsf{Feeder}(\cdot\mid x_t,y_t)$ are i.i.d.\ across $t$.
Then $\{s_t\}_{t=1}^T$ are i.i.d.\ with mean $m_{\mathrm{FM}}$ and centered covariance $C_{\mathrm{FM}}$.

By the update rule,
\[
w_T-w_0 \;=\; \rho\lambda\sum_{t=1}^T s_t,
\qquad\text{so}\qquad
\Pi(w_T-w_0) \;=\; \rho\lambda\sum_{t=1}^T \Pi s_t.
\]
Expand the squared norm:
\[
\|\Pi(w_T-w_0)\|_2^2
= \rho^2\lambda^2 \Big\|\sum_{t=1}^T \Pi s_t\Big\|_2^2
= \rho^2\lambda^2 \sum_{i=1}^T\sum_{j=1}^T \langle \Pi s_i,\Pi s_j\rangle.
\]
Take expectation and use independence. For $i\neq j$,
\[
\mathbb{E}\langle \Pi s_i,\Pi s_j\rangle
= \big\langle \mathbb{E}[\Pi s_i],\,\mathbb{E}[\Pi s_j]\big\rangle
= \|\Pi m_{\mathrm{FM}}\|_2^2.
\]
For $i=j$, write $s_i = (s_i-m_{\mathrm{FM}})+m_{\mathrm{FM}}$ and note
\[
\mathbb{E}\|\Pi s_i\|_2^2
=
\mathbb{E}\|\Pi(s_i-m_{\mathrm{FM}})\|_2^2 + \|\Pi m_{\mathrm{FM}}\|_2^2
=
\mathrm{tr}\!\big(\Pi C_{\mathrm{FM}}\Pi\big) + \|\Pi m_{\mathrm{FM}}\|_2^2.
\]
Therefore,
\begin{align*}
\mathbb{E}\big[\|\Pi(w_T-w_0)\|_2^2\big]
&= \rho^2\lambda^2\Big(
T\big(\mathrm{tr}(\Pi C_{\mathrm{FM}}\Pi) + \|\Pi m_{\mathrm{FM}}\|_2^2\big)
+ T(T-1)\,\|\Pi m_{\mathrm{FM}}\|_2^2
\Big)\\
&= \rho^2\lambda^2\Big(
T\,\mathrm{tr}(\Pi C_{\mathrm{FM}}\Pi)
+ T^2\,\|\Pi m_{\mathrm{FM}}\|_2^2
\Big),
\end{align*}
which proves the identity.

For the lower bound, \pref{assump:fm_coverage_clean} gives
$\Pi C_{\mathrm{FM}}\Pi \succeq \gamma_{\mathrm{FM}}\Pi$, hence
\[
\mathrm{tr}(\Pi C_{\mathrm{FM}}\Pi)\;\ge\;\gamma_{\mathrm{FM}}\mathrm{tr}(\Pi)
=\gamma_{\mathrm{FM}}\dim(S_{\mathrm{low}}(\eta)).
\]
\pref{assump:fm_drift_clean} gives $\|\Pi m_{\mathrm{FM}}\|_2\ge b_{\mathrm{FM}}$.
Substituting these into the identity yields the stated bound.
\end{proof}

%% file: sections/appendix_exp.tex
\subsection{Prompts}\label{app:exp_prompts}

\begin{tcolorbox}[
    colback=white,
    colframe=black,
    title={Feedback Provider Prompt},
    fonttitle=\bfseries,
    breakable
]
You are an expert grader for math/logic problems.

Problem:
\{PROBLEM\}

Student Solution:
\{LEARNER RESPONSE\}

Your task:
\begin{itemize}
    \item Analyze the student solution step by step.
    \item Focus on correctness and logical consistency.
    \item Identify potential mistake(s), if any.
    \item Provide concrete, actionable hints to improve the solution.
    \item Keep the Critique section under 200 words.
\end{itemize}

Format your response exactly as:

Thinking: [Your step-by-step analysis]\\
Critique: [Your final critique in under 200 words, ending with either ``Your previous attempt was correct.'' or ``Your previous attempt was incorrect.'']

\end{tcolorbox}

\begin{tcolorbox}[
    colback=white,
    colframe=black,
    title={Policy Prompt},
    fonttitle=\bfseries,
    breakable
]
Question: \{QUESTION\}

You are given your previous attempt and an expert critique of it below. Your task is to produce an improved solution using the critique.

Your Previous Solution:
\{PREVIOUS RESPONSE\}

Expert Critique:
\{TEXT FEEDBACK\}

Instructions:
\begin{itemize}
\item Write your answer as a fresh solution to the original problem. Do not refer to your previous attempt.
\item Do not mention or refer to the critique or the revision process.
\item Use the critique only to improve correctness, clarity, and reasoning.
\item Avoid using phrases like ``Correctly applying the critique...'' or ``Reexamining my earlier solution...'', etc., as the final answer should stand alone.
\end{itemize}
Let's think step by step and output the final answer within boxed\{\}.
\end{tcolorbox}

\subsection{Case Studies}\label{app:example_generations}
To demonstrate the purpose of text feedback, we study some example generations from the \texttt{Shortest Path} task. The goal is to output a sequence of moves (e.g., "right, down, right") from the start (*) to the destination (\#), without passing through the obstacles (X). First, we show below an example from earlier in training (step 156), where the first attempt is incorrect and the second attempt is correct. The model incorrectly states that it is infeasible to get to the destination. The expert critique localizes the error by pointing out that moving down to (1,1) is valid. In the second attempt, the model incorporates this feedback and correctly outputs the sequence "right, down, down, down." (Interestingly, we observed that in regular single-turn GRPO, the model gets stuck in the local optimum of outputting "infeasible" regardless of the input; text feedback is critical for escaping this pitfall). Second, we show an example from the end of training (step 616), to show that the model has incorporated improved reasoning skills into even its first attempt capabilities.  

\begin{tcolorbox}[
    colback=white,
    colframe=black,
    title={Shortest Path Example (Step 156)},
    fonttitle=\bfseries,
    breakable
]
Question: 

Find the shortest path from the start (*) to the destination (\#) in this grid.
You can only move through open cells (O) and cannot pass through obstacles (X).
Valid moves are: right, down, left, up.

Grid:

* O O O X O

O O X O O X

O O X O O O

O \# O O X O

X O O O O X

If a path exists, return it as a sequence of moves (e.g., "right, down, right").
If no path exists, return "infeasible". Let's think step by step and output the final answer within boxed\{\}.

\par\medskip
First Attempt:

Step 1: Identify the start and end coordinates on the grid. The start is located at row 0, column 0, marked with an asterisk (*), and the destination is located at row 3, column 1, marked with a pound sign (\#).

Step 2: Examine the grid layout starting from the start at row 0, column 0. From row 0, column 0, we can move right to row 0, column 1, which is an open cell (O).

Step 3: Continue examining the grid from the start at row 0, column 0. However, from row 0, column 1, which is also an open cell (O), we cannot move down to any location on the grid because all subsequent rows are either blocked by obstacles (X) or are at the edge of the grid.

Step 4: Conclusion based on grid layout and movements. Given the start at row 0, column 0 and the layout of the grid, we find that the start can move right to row 0, column 1, which is an open cell. However, from row 0, column 1, there are no valid moves down to any location on the grid due to the presence of obstacles (X) or grid limits.

The final answer is: infeasible

\par\medskip
Expert Critique:

Your previous attempt was incorrect. You prematurely concluded that no further moves are possible from (0,1), but moving down to (1,1) is valid. You should continue exploring paths using a systematic method: from (0,0) → (0,1), then (1,1), then (2,1), then (3,1), which is the destination. Re-examine each move by checking all four directions from every visited cell, not just the first few steps. Use a breadth-first or depth-first exploration to ensure all viable paths are considered. Your error was failing to follow the chain of valid moves, especially past the initial right move.

\par\medskip
Second Attempt:

Step 1: Identify the start and end coordinates on the grid. The start is located at row 0, column 0, marked with an asterisk (*), and the destination is located at row 3, column 1, marked with a pound sign (\#).

Step 2: Examine the grid layout starting from the start at row 0, column 0. From row 0, column 0, we can move right to row 0, column 1, which is an open cell (O).

Step 3: Continue examining the grid from row 0, column 1. From row 0, column 1, we can move down to row 1, column 1, which is an open cell (O).

Step 4: Further exploration of the grid from row 1, column 1. From row 1, column 1, we can move down to row 2, column 1, which is an open cell (O).

Step 5: Continuing the path from row 2, column 1. From row 2, column 1, we can move down to row 3, column 1, which is the destination marked with a pound sign (\#).

The final answer is: right, down, down, down
\end{tcolorbox}

\begin{tcolorbox}[
    colback=white,
    colframe=black,
    title={Shortest Path Example (Step 616)},
    fonttitle=\bfseries,
    breakable
]
Question: 

Find the shortest path from the start (*) to the destination (\#) in this grid.
You can only move through open cells (O) and cannot pass through obstacles (X).
Valid moves are: right, down, left, up.

Grid:

O X X O O O

X O X O O \#

X * X X O X

X O O O O O

O O X O X X

O O O O X O

If a path exists, return it as a sequence of moves (e.g., "right, down, right").
If no path exists, return "infeasible". Let's think step by step and output the final answer within boxed\{\}.

\par\medskip
First Attempt:

Step 1: Identify the Start Location

The start (*) is located at row 2, column 1 on the grid.

Step 2: Identify the Destination

The destination (\#) is located at row 1, column 5 on the grid.

Step 3: Explore the Grid from the Start

From row 2, column 1, we begin by examining the cell directly to the right. We find that row 2, column 2 is an X and cannot be visited. We continue by examining the cell directly below row 2, column 1. We find that row 3, column 1 is an O and can be visited. From row 3, column 1, we continue by examining the cell directly to the right. We find that row 3, column 2 is an O and can be visited. From row 3, column 2, we continue by examining the cell directly to the right. We find that row 3, column 3 is an O and can be visited. From row 3, column 3, we continue by examining the cell directly to the right. We find that row 3, column 4 is an O and can be visited. From row 3, column 4, we continue by examining the cell directly above. We find that row 2, column 4 is an O and can be visited. From row 2, column 4, we continue by examining the cell directly above. We find that row 1, column 4 is an O and can be visited. From row 1, column 4, we continue by examining the cell directly to the right. We find that row 1, column 5 is \# and is the destination.

Step 4: Conclusion

We conclude that the path from the start (*) at row 2, column 1 involves continuing by examining the cell directly below to row 3, column 1, followed by row 3, column 2, row 3, column 3, row 3, column 4, row 2, column 4, row 1, column 4, and finally row 1, column 5.
\end{tcolorbox}

Below, we also show example generations from \texttt{Knights and Knaves} and \texttt{DeepMath}. 
\begin{tcolorbox}[
    colback=white,
    colframe=black,
    title={Knights and Knaves Example (Step 89)},
    fonttitle=\bfseries,
    breakable
]
Question: 

A very special island is inhabited only by sages and fools. Sages always tell the truth, and fools always lie. You meet 5 inhabitants: Lucas, Avery, Sebastian, Emily, and Elizabeth. "it is not the case that (Avery is a sage and Emily is a sage)," Lucas mentioned. "Lucas is a sage," Avery declared. Sebastian said, "if (Sebastian is a sage or Elizabeth is a fool) then (Avery is a sage and Sebastian is a sage)." "(if Lucas is a fool then Emily is a sage) if and only if Sebastian is a sage," Emily declared. "(Sebastian is a fool or Elizabeth is a sage or Emily is a fool) and (Sebastian is a fool if and only if Lucas is a fool) and Avery is a sage," Elizabeth declared. So who is a sage and who is a fool? (Format your answer like: "Lucas is a sage/fool, Avery is a sage/fool, Sebastian is a sage/fool, Emily is a sage/fool, and Elizabeth is a sage/fool") Let's think step by step and output the final answer within boxed\{\}.

\par\medskip
First Attempt:

Step 1: Analyze Lucas's statement Lucas said, "it is not the case that (Avery is a sage and Emily is a sage)." This statement implies that either Avery is not a sage or Emily is not a sage (or both). 

Step 2: Analyze Avery's statement Avery declared, "Lucas is a sage." Since sages always tell the truth, Avery must be a sage because he is correctly identifying Lucas as a sage. 

Step 3: Analyze Sebastian's statement Sebastian said, "if (Sebastian is a sage or Elizabeth is a fool) then (Avery is a sage and Sebastian is a sage)." Given that Avery is a sage from Avery's statement, for Sebastian's conditional statement to hold true (and be truthfully revealed by a sage who is a side of the or condition), Sebastian must also be a sage (since being a fool would require him to lie). 

Step 4: Analyze Emily's statement Emily declared, "(if Lucas is a fool then Emily is a sage) if and only if Sebastian is a sage." Since Sebastian is a sage from step 3, the if and only if part is relevant. Given that Sebastian is a sage, for the statement to be true (as it is made by a sage), the premise about Lucas being a fool implying Emily is a sage must be false. Thus, Lucas is not a fool. 

Step 5: Analyze Elizabeth's statement Elizabeth declared, "(Sebastian is a fool or Elizabeth is a sage or Emily is a fool) and (Sebastian is a fool if and only if Lucas is a fool) and Avery is a sage." Given that Avery is indeed a sage from step 2, the final part of the statement is confirmed to be true. The first part, with all the conditions, would require at least one of the conditions involving Sebastian or Elizabeth or Emily being a fool, but since we've established Lucas is not a fool, and Sebastian is a sage from step 3, the statement holds true for any of these being a fool. However, given all the previous deductions, the direct implication from this statement doesn't necessarily reveal new truth but solidifies existing truths. 

Step 6: Summarize the deductions - Lucas is a sage because he's not presented as a liar and his statement about Avery and Emily is factually considered in relation to other statements. - Avery is a sage, confirmed by the truthfulness of his statement. - Sebastian is a sage, deduced from the conditional statement and its implications on his own truthfulness. - Emily is a sage because her statement about the relation of Sebastian being a sage and the implication on Lucas needing to be a fool for her to be a sage is not relevant given Sebastian's truth status. - Elizabeth is a fool because her statement, given as a sage, requires conditions that could be fulfilled by lies within her own statement, and given the deductions made are based on the truthfulness of others. The final answer is: boxed\{Lucas is a sage, Avery is a sage, Sebastian is a sage, Emily is a sage, and Elizabeth is a fool\}

\par\medskip
Expert Critique:

Your previous attempt was incorrect. The major error is assuming Avery is a sage without proof—this is circular. You cannot assume Avery is a sage just because he says Lucas is one. Similarly, assuming Sebastian is a sage based on the truth of the consequent is invalid—remember that the truth of a conditional depends on both truth value and structure. The biconditional in Emily’s statement must be evaluated more carefully: if Sebastian is a fool, the biconditional requires the implication (if Lucas is fool → Emily is sage) to be false. Try testing assumptions with truth tables or counterfactuals. Identify which statements force contradictions if certain roles are assumed. Focus on Elizabeth’s statement—it’s long but has a "and" structure, so for a fool, *at least one* part must be false. This can help eliminate possibilities.

\par\medskip
Second Attempt:

Step 1: Initial Analysis We begin by analyzing the statements given by each inhabitant of the island, remembering that sages always tell the truth, and fools always lie. Our goal is to determine who is a sage and who is a fool. 

Step 2: Analyzing Lucas's Statement Lucas says, "it is not the case that (Avery is a sage and Emily is a sage)." This statement implies that either Avery is not a sage or Emily is not a sage (or both).

Step 3: Analyzing Avery's Statement Avery declares, "Lucas is a sage." To determine if Avery is a sage or a fool, we must evaluate his statement in the light of Lucas's character. We cannot immediately assume Avery is a sage because that would be based on his own claim, which may or may not be true. 

Step 4: Analyzing Sebastian's Statement Sebastian says, "if (Sebastian is a sage or Elizabeth is a fool) then (Avery is a sage and Sebastian is a sage)." This conditional statement necessitates a closer look. The antecedent (Sebastian is a sage or Elizabeth is a fool) cannot be evaluated immediately because we need to determine Sebastian's character. 

Step 5: Analyzing Emily's Statement Emily says, "(if Lucas is a fool then Emily is a sage) if and only if Sebastian is a sage." This statement involves a biconditional and must be evaluated carefully. If Sebastian were a fool, for Emily to be speaking the truth, the implication (if Lucas is fool → Emily is sage) must be false, but since we cannot determine Sebastian's character yet, we must proceed with caution. 

Step 6: Analyzing Elizabeth's Statement Elizabeth says, "(Sebastian is a fool or Elizabeth is a sage or Emily is a fool) and (Sebastian is a fool if and only if Lucas is a fool) and Avery is a sage." Given the "and" structure, for Elizabeth to be a fool, at least one part of her statement must be false. However, we need to see how Avery's truthfulness affects this analysis. 

Step 7: Evaluating Truth Conditions We must evaluate each statement under the condition that the speaker might be either a sage or a fool. Lucas's statement does not depend on the truth of others in a direct linear fashion but rather sets the scene for the entire puzzle. Avery's statement can only be considered under the separation of Lucas being a sage or not due to the nature of the puzzle's setup. 

Step 8: Truth Table Approach Given the complexity of the statements, a truth table approach could help. However, without specific conditions or values assigned to Lucas, Avery, Sebastian, Emily, and Elizabeth, and without the ability to directly evaluate their truthfulness based on their statements alone without circular reasoning, we must focus on the logical structure of the statements they made.

Step 9: Focusing on Logical Implications We should identify which statements, if assumed, would force contradictions. Key implications include Elizabeth's statement having two distinct components, one of which involves the relationship between Sebastian and Lucas and another related to their characters and the truth of Avery's sagehood. 

Step 10: Identifying Constraints Given the structure of the statements, if we can establish that there is at least one condition (from Elizabeth's statement) that Elizabeth, as a potential fool, must have false, this could give us a concrete path towards determining who is a sage and who is a fool, considering the truthfulness of the other inhabitants.

Step 11: Contradiction Test Let's directly test for contradictions by making assumptions about who is a sage or a fool and determining if each inhabitant's statement would hold true or not under those assumptions. 

Step 12: Establishing Truth and Falsehood Through process of elimination and evaluating the logical implications of the statements, especially considering the "and" structure of Elizabeth's statement, we can establish who must be a sage or a fool. 

Step 13: Finalizing Roles Based on the logical analysis and the conditions set forth by each inhabitant, we can assign roles of sage or fool to each of the individuals. 

Step 14: Conclusion After analyzing the statements and their logical implications, we can now determine who is a sage and who is a fool among the inhabitants of the island. The final answer is: boxed\{Lucas is a sage, Avery is a sage, Sebastian is a fool, Emily is a fool, and Elizabeth is a fool\}

\end{tcolorbox}

\begin{tcolorbox}[
    colback=white,
    colframe=black,
    title={Deepmath Example (Step 1290)},
    fonttitle=\bfseries,
    breakable
]
Question: 

Let \( x_{1}, x_{2}, \ldots, x_{1970} \) be positive integers such that \( x_{1} + x_{2} + \cdots + x_{1970} = 2007 \). Find the largest possible value of \( x_{1}^{3} + x_{2}^{3} + \cdots + x_{1970}^{3} \). Let's think step by step and output the final answer within boxed\{\}.
\par\medskip
First Attempt:

Step 1: Understand the problem We are given a sequence of 1970 positive integers, \( x_1, x_2, \ldots, x_{1970} \), such that their sum is 2007. We need to find the largest possible value of the sum of the cubes of these integers, i.e., \( x_1^3 + x_2^3 + \cdots + x_{1970}^3 \). 

Step 2: Use the concept of weighted inequalities A common approach to problems of this type is to use inequalities that relate the sum of powers to the sum of the numbers, such as Cauchy-Schwarz or inequalities involving the sum of cubes. One powerful tool is Cauchy-Schwarz, but here we might consider inequalities that compare sums of integers to sums of their cubes, such as \( \sum x_i^3 \geq \frac{( \sum x_i )^3}{1970} \), although we must carefully consider how to apply such inequalities and whether they give the maximum. 

Step 3: Recall the power mean inequality The power mean inequality can be useful here, particularly the inequality that compares the arithmetic mean and the cubic mean. However, our main concern is maximizing \( \sum x_i^3 \), subject to \( \sum x_i = 2007 \). A more direct approach involves focusing on how to distribute the sum of 2007 across the 1970 integers to maximize the total cubic sum. 

Step 4: Consider the equality case for weighted sums For inequalities that relate sums of powers, such as \( \sum x_i^3 \geq \frac{( \sum x_i )^3}{n} \), equality occurs when all \( x_i \) are equal. However, since we are looking to maximize this sum under a constraint and the numbers must be positive integers, we cannot set all \( x_i = 1 \), as that would sum to 1970, not 2007. Instead, we aim to allocate as much of the sum as possible to the largest \( x_i \), because \( x^3 \) grows faster than \( x \), so concentrating the sum in the largest possible value will generally increase the cube sum more than spreading it out.

[Skip parts of irrelevant generation]

Step 29: Evaluate the cube of $x_i = 1$ If all \( x_i = 1 \), then \( \sum x_i^3 = 1970 \cdot 1^3 = 1970 \). 

Step 30: Use the fact that the sum of cubes is maximized by making one $x_i$ as large as possible We seek to make one \( x_i \) as large as possible while ensuring all are positive integers and the sum is 2007. If we set all but one \( x_i = 1 \), that leaves 2007 - 1969 = 38 to be assigned to the remaining \( x_i \). So, we could set one \( x_i = 38 \) and the rest = 1. 

Step 31: Calculate the sum of cubes for $x_i = 1$ and one $x_i = 38$ If one \( x_i = 38 \) and the rest = 1, then the sum of cubes is \( 1^3 + 1^3 + \cdots + 1^3 + 38^3 \), which is \( 1969 \cdot 1^3 + 38^3 \). 

Step 32: Perform the calculation of $38^3$ \( 38^3 = 38 \times 38 \times 38 = 54,872 \). 

Step 33: Add 1969 to the result of $38^3$ \( 54,872 + 1969 = 57,841 \). 

Step 34: Evaluate the total sum of cubes The total sum of cubes is \( 1969 + 57,841 = 59,810 \). 

Step 35: Compare this to the sum when all $x_i = 1$ When all \( x_i = 1 \), the sum of cubes is 1970. Our calculated sum is 59,810, which is larger. 

Step 36: Confirm the maximum Based on the calculation, the maximum possible value of \( \sum x_i^3 \) under the given constraints is 59,810. The final answer is: $\boxed{59110}$

\par\medskip
Expert Critique:

Your approach to maximize the sum of cubes by concentrating value in one variable is correct and well-reasoned. However, you made an arithmetic error when computing the total sum of cubes: $54,872 + 1,969 = 56,841$, not 59,110. Double-check your additions, especially when combining large numbers. Your allocation of one 38 and 1969 ones is valid and optimal under the constraints. The error is purely computational. Recheck the sum: $38^3 = 54,872$, plus 1969 ones contributes 1,969. Add them carefully. Your previous attempt was incorrect. 
\par\medskip
Second Attempt:

Step 1: Understand the problem We are given 1970 positive integers, denoted as \( x_1, x_2, \ldots, x_{1970} \), with the constraint that their sum equals 2007. Our goal is to find the largest possible value of the sum of their cubes, i.e., \( x_1^3 + x_2^3 + \cdots + x_{1970}^3 \). 

Step 2: Analyze the constraint and objective The constraint is \( x_1 + x_2 + \cdots + x_{1970} = 2007 \), and we want to maximize \( x_1^3 + x_2^3 + \cdots + x_{1970}^3 \). Since the cubic function \( f(x) = x^3 \) is strictly increasing for positive \( x \), the maximum of the sum of cubes will occur when the values are concentrated in as few variables as possible, with the largest possible value assigned to one variable. 

Step 3: Reason about the distribution To maximize the sum of cubes, we should make one integer as large as possible and the rest as small as possible, while still satisfying the sum constraint. This is because the cube function grows very rapidly, so even a small increase in one variable can lead to a much larger increase in its cube. 

Step 4: Calculate the maximum possible value We want to assign the largest possible value to one variable, say \( x_1 \). The maximum possible value for \( x_1 \) occurs when we set all other variables to the smallest possible value, which is 1. The sum constraint tells us that \( x_1 + 1969 \cdot 1 = 2007 \), so we solve for \( x_1 \): \( x_1 = 2007 - 1969 = 38 \). 

Step 5: Calculate the sum of cubes Now we calculate the sum of cubes under this allocation: \( x_1^3 + x_2^3 + \cdots + x_{1970}^3 = 38^3 + 1969 \cdot 1^3 \). First, compute \( 38^3 = 38 \times 38 \times 38 = 54,872 \). Then, multiply \( 1969 \times 1^3 = 1969 \). Finally, add these two values together. 

Step 6: Perform the addition carefully Now we add \( 54,872 + 1969 \) carefully. Starting with \( 54,000 + 1969 = 55,969 \), then add the remaining \( 892 \) to get \( 56,861 \). No, we must do it precisely: \( 54,872 + 1,969 = 56,841 \). 

Step 7: Confirm the optimal allocation We have found that the optimal allocation is to set one variable to 38 and the rest to 1. This is valid under the constraints, and the sum of cubes under this allocation is \( 56,841 \). 

Step 8: Conclude the maximum sum of cubes Therefore, the largest possible value of the sum of cubes \( x_1^3 + x_2^3 + \cdots + x_{1970}^3 \) is \( 56,841 \). The final answer is: $\boxed{56841}$.

\end{tcolorbox}

\newpage
\subsection{Details of Benchmarks}\label{app:exp_benchmarks}
In this subsection, we describe our evaluation setup for each domain.

\textbf{Reasoning.} Reasoning Gym~\citep{stojanovski2025reasoning} is a library of reasoning puzzles spanning categories such as algebra, arithmetic, games, geometry, graphs, induction, and logic. We evaluate on three tasks from Reasoning Gym: \texttt{Knights and Knaves}, \texttt{Binary Matrix}, and \texttt{Shortest Path}. \texttt{Knights and Knaves} is a logic puzzle where some characters are truthful "knights" and the rest are untruthful "knaves", and the goal is to deduce from the characters' statements whether they are knights or knaves. \texttt{Binary Matrix} is an algorithmic problem, where the input is a square matrix of 1's and 0's and the task is to find the Manhattan distance to the nearest 0 for each cell. \texttt{Shortest Path} is a graph problem, where the input is a matrix containing \texttt{*}, \texttt{\#}, \texttt{O}, and \texttt{X}, representing the start cell, the destination cell, an open cell, and a blocked cell, respectively. The objective is to find the shortest path from the start to the destination, using only open cells. For all environments, we procedurally generate 20K problems in similar style to the Reasoning Gym code~\footnote{\url{https://github.com/open-thought/reasoning-gym}}. We use 19.8K examples for training and the remaining 200 examples for testing. 

\textbf{Math.} We evaluate on two standard math benchmarks, MATH500 and AIME24. MATH500~\citep{hendrycks2021measuring} is a competition math dataset of 500 problems spanning seven categories: algebra, counting and probability, geometry, intermediate algebra, number theory, prealgebra and precalculus. The American Invitational Mathematics Examination (AIME) is a prestigious high school math competition. There are two version of the exam with 15 questions each, for a total of 30 questions in the 2024 test set. We consider training on two training sets: DAPO-17K~\citep{yu2025dapo}, which contains 17K problems sourced from math competitions, and DeepMath-103K~\citep{deepmath}, which contains 103K problems that are of generally higher difficulty than DAPO-17K. 

\textbf{Creative writing.} LitBench~\citep{fein2025litbench} is a dataset containing 43K pairwise examples of LLM-generated stories and human preference labels. We train on the LitBench train set and evaluate on the test set of 2K examples. We also use the same checkpoint and evaluate on WritingBench~\citep{wu2025writingbench}, which is a benchmark containing 1K real-world writing tasks, spanning academics and engineering, finance and business, politics and law, literature and art, education, and advertising and marketing. For evaluation, we use the prompt-specific rubric from WritingBench and GPT-4.1-mini as the judge. 

\newpage
\subsection{Hyperparameters}\label{app:exp_hyperparameters}
Hyperparameters specific to Reasoning Gym environment configurations:
\begin{itemize}
    \item Knights and Knaves: \texttt{n\_people=5, depth\_constraint=3, width\_constraint=3}
    \item Binary Matrix: \texttt{min\_n=3, max\_n=5}
    \item Shortest Path: \texttt{min\_rows=5, min\_cols=5, max\_rows=6, max\_cols=6, p\_blocked=0.4}
\end{itemize}

\begin{table}[ht]
\caption{Hyperparameters for all multi-turn RL experiments.}
\begin{center}
\begin{small}
\begin{tabular}{ll}
\textbf{Hyperparameter} & \textbf{Value}  \\
\hline \\
Group size & 8 \\
Groups per batch & 32 \\
Max tokens & 8096 \\
LoRA rank & 32 \\
Learning rate & 2e-5 \\
KL penalty & 0.0 \\
RL coefficient (\selfdistillname) & 0.1 \\
SFT coefficient (\feedbackmodelingname) & 0.2 \\
\end{tabular}
\end{small}
\end{center}
\end{table} 
For math training, we use a linear schedule for the RL coefficient from 0 to 0.1. 

\paragraph{Feedback Descent}
There is no publicly available codebase for Feedback Descent, so we reimplement the algorithm based on details from the Feedback Descent~\citep{lee2025feedback} paper and the GEPA~\citep{agrawal2025gepa} paper. For the candidate comparison prompt, we use "System Prompt Template for Prompt Optimization" from the Feedback Descent paper. Across all tasks, we perform 5 rounds of prompt improvement on the training set, collect all candidate prompts encountered during optimization, evaluate them on a held-out validation set, and select the best-performing prompt for final evaluation on the test set. We used a temperature of 0.6 for Reasoning Gym and competition math and a temperature of 0.7 for creative writing. For Reasoning Gym tasks, we use 200 training and 200 validation examples. For the competition math setups, we use 64 training and 16 validation examples. For creative writing, we use 20 training and 16 validation examples. We perform evaluations on the full test set. While these training sets are smaller than the fully available training data, we aimed to follow the precedent of prior prompt optimization work (e.g., GEPA uses on the order of 100-150 examples for training) while using a reasonable amount of compute for our budget. Prompt optimization is generally highly sample-efficient, and we also observed empirically that increasing the number of training and validation examples did not yield meaningful performance improvements.

\newpage
\subsection{Variance of Importance Weighting}\label{app:variance_iw}

Let $q(\cdot)$ denote the second-turn (teacher) distribution over outputs, induced by one round of feedback:
$y_1\sim q(\cdot) := \pi(\cdot\mid x_1)$ with $x_1=f(x_0,y_0,c_0)$.
Recall that an unbiased estimator of the single-turn policy gradient can be obtained by importance weighting
with $w(y)=\pi(y\mid x_0)/q(y)$, but its variance is controlled by the second moment $\En_q[w^2]$.
In particular, $\En_q[w^2]=1+\chi^2(\pi(\cdot\mid x_0)\|q)=\exp(D_2(\pi(\cdot\mid x_0)\|q))$ \footnote{For probability measures $p$ and $q$ with $p\ll q$, the $\chi^2$-divergence and order-2 R\'enyi divergence are
$\chi^2(p\|q)=\int \frac{(p(y)-q(y))^2}{q(y)}\,dy$, and 
$D_2(p\|q)=\log\int \frac{p(y)^2}{q(y)}\,dy.$
}, thus unlike the commonly used KL divergence, the variance is measured with directly density ratio instead of the logarithm of the density ratios.  

To measure the variance empirically, we consider the \texttt{AIME24} task, and we measure the average trajectory-level log ratio and average token-level log ratio between the first-turn and second-turn policies, and we summarize the results in Table~\ref{tab:variance_iw}. Note that the trajectory-level log ratio is almost vacuously large (recall that the variance if measure on the direct policy ratio, thus the exponential of the log ratio), indicating that importance weighting at the trajectory level is infeasible. On the other hand, the token-level log ratio also exhibits high variance, indicating long tail behavior. This explains the difficulty of applying importance weighting in practice.

\begin{table}[h]
    \centering
    \caption{Average log importance weight ratios between first-turn and second-turn policies on \texttt{AIME24}.}
    \begin{tabular}{l|c}
    \toprule
           & Value \\
    \midrule
    Average (Trajectory) & 142.1637  \\
    Average (Token) & 0.1288  \\
    Standard Deviation (Token) & 0.5506  \\
    \bottomrule
    \end{tabular}
    \label{tab:variance_iw}
\end{table}

\newpage
\subsection{Additional Results}\label{app:exp_omitted_results}
Here, we provide evaluation curves for our general results in reasoning, math and creative writing, corresponding to \pref{tab:results_main,tab:results_second}. \pref{fig:app_single} shows the single-turn accuracy and \pref{fig:app_multi} shows the multi-turn accuracy. 

\begin{table*}[t]
\centering
\caption{Comparison of baselines across \textbf{reasoning puzzles}, \textbf{competition math}, and \textbf{creative writing} tasks. We report second-turn accuracy after 2-turn training (i.e., $J_{\singleturn}(\pi)$) of the last checkpoint. For the reasoning tasks and \texttt{LitBench}, we report the mean@1 accuracy, judged by either verifiable reward or LLM-as-a-judge. For the math tasks, we report the mean@32 accuracy from the last checkpoint from the training. The parentheses denote the training dataset. For \texttt{WritingBench}, we follow the official protocol with GPT-4.1-mini as the judge. The accuracy in reasoning and math is normalized between 0 and 1, and the score in creative writing is normalized between 1 and 10. Note that \selfdistillname~and \feedbackmodelingname~consistently outperform all baselines across tasks.}
\label{tab:results_second}

\setlength{\tabcolsep}{6pt}
\renewcommand{\arraystretch}{1.15}

\setlength{\scorecolw}{1.55cm}

\sisetup{
  table-format=1.3,
  table-column-width=\scorecolw,
  table-number-alignment=center
}

\newcommand{\colhead}[1]{%
  \multicolumn{1}{c}{\parbox[c]{\scorecolw}{\centering #1}}%
}
\newcommand{\hlcolhead}[1]{%
  \multicolumn{1}{>{\columncolor{hl}}c}{\parbox[c]{\scorecolw}{\centering #1}}%
}
\newcommand{\sdcolhead}[1]{%
  \multicolumn{1}{>{\columncolor{sd}}c}{\parbox[c]{\scorecolw}{\centering #1}}%
}
\newcommand{\fmcolhead}[1]{%
  \multicolumn{1}{>{\columncolor{fm}}c}{\parbox[c]{\scorecolw}{\centering #1}}%
}

\begin{tabular}{l|
  S S 
  >{\columncolor{sd}}S
  >{\columncolor{fm}}S
}
\toprule
& \colhead{Base Model}
& \colhead{\grpo~2-turn}
& \sdcolhead{\rltfsd}
& \fmcolhead{\rltffm} \\
\midrule

\addlinespace[2pt]
\multicolumn{3}{l}{\textbf{Reasoning}} & & \\
\quad \texttt{Knights and Knaves} & 0.119 & 0.636  & 0.896 & 0.910 \\
\quad \texttt{Binary Matrix}      & 0.001    & 0.978  & 0.989    & 0.993 \\
\quad \texttt{Shortest Path}      & 0.034    & 0.836  & 0.892    & 0.976 \\
\addlinespace[2pt]
\midrule
\multicolumn{3}{l}{\textbf{Math}} & & \\
\quad \texttt{Math500} (\texttt{DAPO})     & 0.583 & 0.692  & 0.725 & 0.771 \\
\quad \texttt{AIME24} (\texttt{DAPO})      & 0.042 & 0.142  & 0.185 & 0.250 \\
\quad \texttt{Math500} (\texttt{Deepmath}) & 0.583 & 0.741  & 0.765 & 0.793 \\
\quad \texttt{AIME24} (\texttt{Deepmath})  & 0.042 & 0.133  & 0.203 & 0.208 \\
\addlinespace[2pt]
\midrule
\multicolumn{3}{l}{\textbf{Creative writing}} & & \\
\quad \texttt{LitBench}           & 5.84 & 8.02 & 9.20 & 8.60 \\
\bottomrule
\end{tabular}
\end{table*}

\begin{figure}[H]
    \centering
    \includegraphics[width=0.8\linewidth]{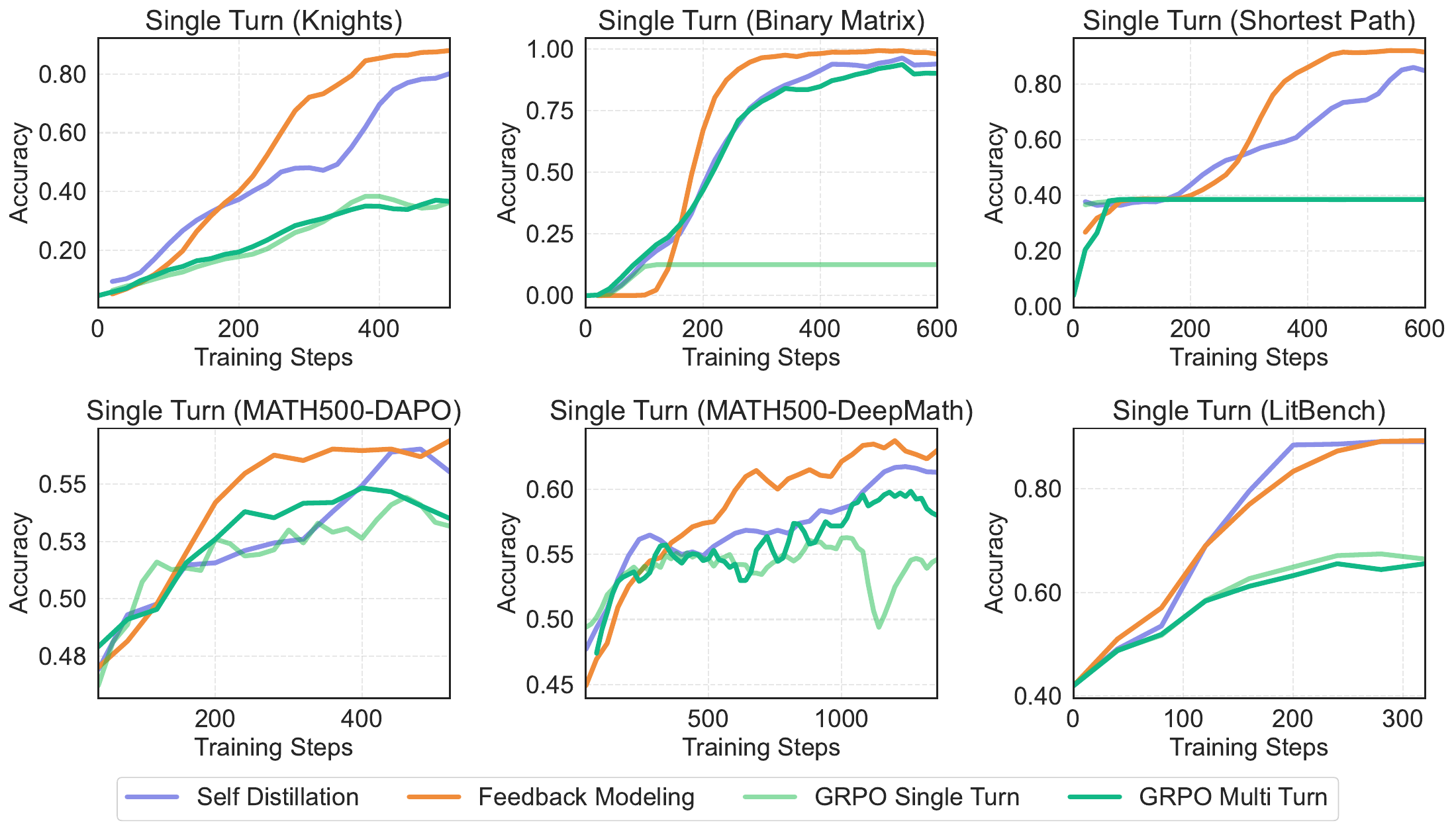}
    \caption{Evaluation curves for single-turn accuracy across \textbf{reasoning puzzles}, \textbf{competition math}, and \textbf{creative writing} tasks, at every 40 training steps. For reasoning tasks, we report the mean@1 accuracy judged by either verifiable reward or LLM-as-a-judge. For math tasks, we report the mean@1 accuracy. For \texttt{Litbench}, we report the mean@1 accuracy judged by LLM-as-a-judge. The accuracy in reasoning and math is normalized between 0 and 1, and the score in creative writing is normalized between 1 and 10.}
    \label{fig:app_single}
\end{figure}

\begin{figure}[H]
    \centering
    \includegraphics[width=0.8\linewidth]{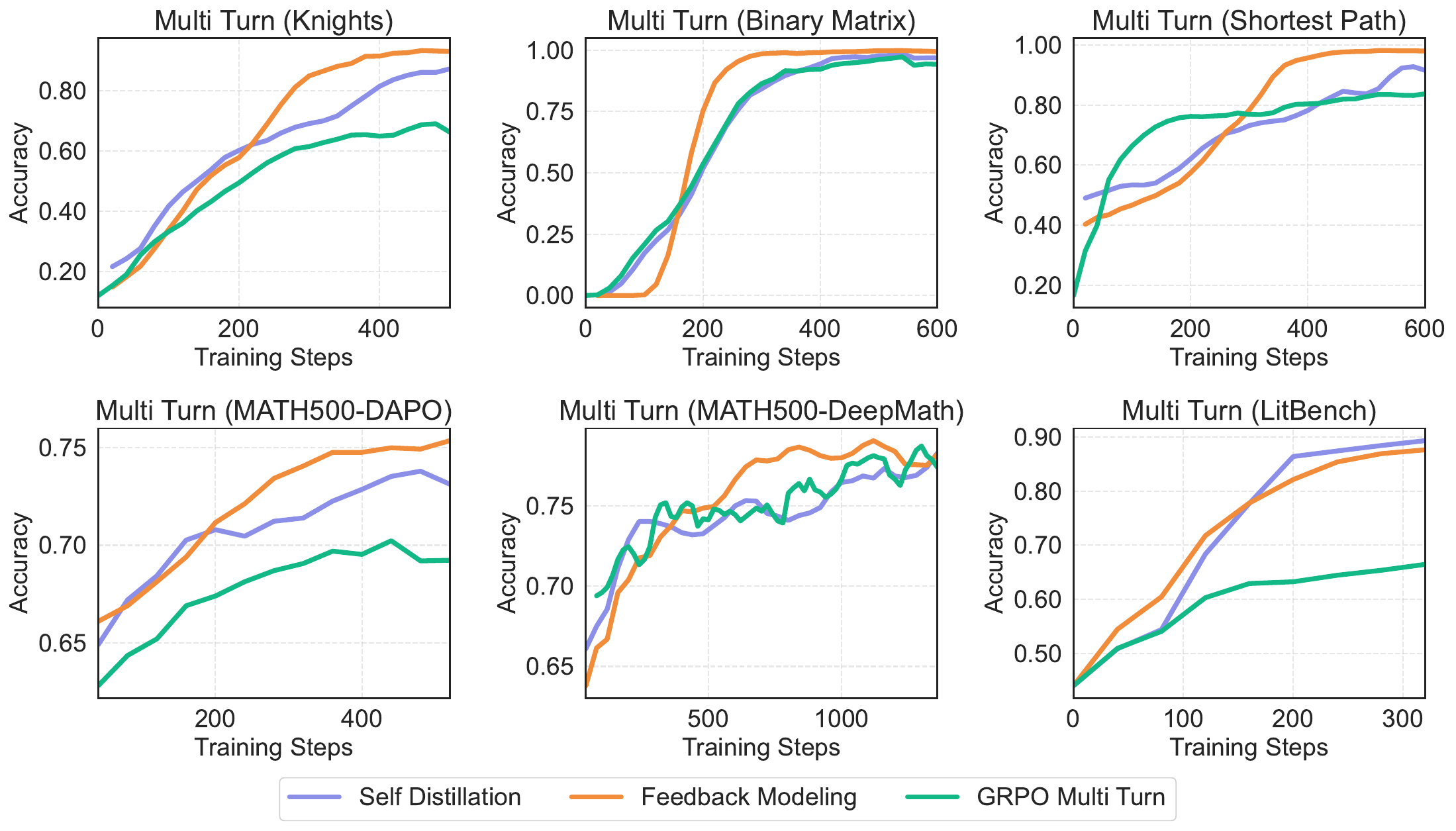}
    \caption{Evaluation curves for multi-turn accuracy across \textbf{reasoning puzzles}, \textbf{competition math}, and \textbf{creative writing} tasks, at every 40 training steps. For reasoning tasks, we report the mean@1 accuracy judged by either verifiable reward or LLM-as-a-judge. For math tasks, we report the mean@1 accuracy. For \texttt{Litbench}, we report the mean@1 accuracy judged by LLM-as-a-judge. The accuracy in reasoning and math is normalized between 0 and 1, and the score in creative writing is normalized between 1 and 10.}
    \label{fig:app_multi}
\end{figure}

\subsection{Resources}
All of our training experiments are performed using Tinker \citep{tml2025tinker}.